 \def\arXiv{1} 
\newcommand{\notarxiv}[1]{foo}
\newcommand{\arxiv}[1]{ba}
\renewcommand{\arxiv}[1]{#1}%
\renewcommand{\notarxiv}[1]{\ignorespaces}%
\renewcommand{\arxiv}[1]{\ignorespaces}%
\renewcommand{\notarxiv}[1]{#1}%
	\declaretheorem[name=Assumption,sibling=theorem]{assumption}
\crefname{assumption}{Assumption}{Assumptions}
		\let\Cref\crtCref
		\let\cref\crtcref
	\theoremstyle{plain}
	\newtheorem{theorem}{Theorem}
	\newtheorem{lemma}{Lemma}
	\newtheorem{proposition}{Proposition}
	\newtheorem{corollary}{Corollary}
	\theoremstyle{definition}
	\newtheorem{assumption}{Assumption}
	\newtheorem*{example*}{Example}
\newcommand{\mc}[1]{\mathcal{#1}}
\DeclarePairedDelimiter{\abs}{\lvert}{\rvert} %
\DeclarePairedDelimiter{\brk}{[}{]}
\DeclarePairedDelimiter{\crl}{\{}{\}}
\DeclarePairedDelimiter{\prn}{(}{)}
\DeclarePairedDelimiter{\norm}{\|}{\|}
\newcommand{\overeq}[1]{\overset{#1}{=}}
\newcommand{\overle}[1]{\overset{#1}{\le}}
\newcommand{\overge}[1]{\overset{#1}{\ge}}
\NewDocumentCommand\Ex{s O{} m }{%
	\mathbb{E}%
	\begingroup
	\IfBooleanTF{#1}
	{\ExInn*{#3}}
	{\ExInn[#2]{#3}}%
	\endgroup
}
\DeclarePairedDelimiterX\ExInn[1]{[}{]}{%
	\activatebar
	#1%
}
\RenewDocumentCommand\Pr{sO{}r()}{%
	\mathbb{P}%
	\begingroup
	\IfBooleanTF{#1}
	{\PrInn*{#3}}
	{\PrInn[#2]{#3}}%
	\endgroup
}
\DeclarePairedDelimiterX\PrInn[1](){%
	\activatebar
	#1%
}
\newcommand{\activatebar}{%
	\begingroup\lccode`~=`|
	\lowercase{\endgroup\def~}{\;\delimsize\vert\;}%
	\mathcode`|=\string"8000
}
\newcommand\numberthis{\addtocounter{equation}{1}\tag{\theequation}}
\newcommand{\R}{\mathbb{R}} %
\newcommand{\N}{\mathbb{N}} %
\newcommand{\E}{\mathbb{E}} %
\renewcommand{\P}{\mathbb{P}}	%
 \DeclareMathOperator*{\argmax}{arg\,max}
 \DeclareMathOperator*{\argmin}{arg\,min}
\providecommand{\abs}{\mathop{\rm abs}}
\newcommand{\half}{\frac{1}{2}}
\newcommand{\defeq}{\coloneqq}
\newcommand{\grad}{\nabla}
\newcommand{\xset}{\mathcal{X}}
\renewcommand{\O}[1]{O\left( #1 \right)}
\newcommand{\Otil}[1]{\widetilde{O}\left( #1 \right)}
\newcommand{\inner}[2]{\left<#1,#2\right>}
\newcommand{\numSteps}{T}
\newcommand{\Q}[1][\numSteps]{Q_{#1}}
\newcommand{\minQ}[1][\numSteps]{\hat{Q}_{#1}}
\newcommand{\maxQ}[1][\numSteps]{\bar{Q}_{#1}}
\newcommand{\M}[1][\numSteps]{M_{#1}}
\newcommand{\q}[1][\numSteps]{q_{#1}}
\newcommand{\tq}[1][\numSteps]{\tilde{q}_{#1}}
\newcommand{\noisep}[1][\numSteps]{p_{#1}}
\newcommand{\Gx}[1][\numSteps]{G_{x,#1}}
\newcommand{\Gy}[1][\numSteps]{G_{y,#1}}
\newcommand{\rbar}[1][\numSteps]{\bar{r}_{#1}}
\newcommand{\w}[1][t]{\omega_{#1}}
\newcommand{\dbar}[1][\numSteps]{\bar{d}_{#1}}
\newcommand{\dtilde}[1][\numSteps]{\tilde{d}_{#1}}
\newcommand{\zbar}[1][t]{\hat{z}_{#1}}
\newcommand{\xbar}[1][t]{\hat{x}_{#1}}
\renewcommand{\d}[1][0]{d_{#1}}
\renewcommand{\r}[1][0]{r_{#1}}
\newcommand{\TimeUniformLog}[1][t,\delta]{\theta_{#1}}
\newcommand{\exm}[1][t]{\nabla f \prn*{ \zbar[#1] }}
\newcommand{\exg}[1][t]{\nabla f \prn*{ \xbar[#1] }}
\newcommand{\m}[1][t]{m_{#1}}
\newcommand{\g}[1][t]{g_{#1}}
\newcommand{\tg}[1][t]{\tilde{g}_{#1}}
\newcommand{\filt}{\mathcal{F}}
\NewDocumentCommand{\llft}{ O{\delta} O{t} }{\lambda_{#2}(#1)}
\newcommand\primitiveinput[1]
\newcommand{\opt}{_\star}
\newcommand{\xopt}{x\opt}
\newcommand{\dopt}{d_0}
\newcommand{\diam}{D}
\def\IfEmptyTF#1{%
	\if\relax\detokenize{#1}\relax
	\expandafter\@firstoftwo
	\else
	\expandafter\@secondoftwo
	\fi}
\newcommand\Block[2]{%
	#1%
	\algocf@group{#2}%
}
\newcommand{\gradientOracle}[1]{\IfEmptyTF{#1}{\mathcal{G}}{\mathcal{G}(#1)}}
\newcommand{\reps}{r_\epsilon}
\newcommand{\Proj}[2]{\mathbf{\mathrm{Proj}}_{#1}(#2)}
\newcommand{\DoG}{\textsc{DoG}\xspace}
\newcommand{\TDoG}{\textsc{T-DoG}\xspace}
\newcommand{\UDoG}{\textsc{U-DoG}\xspace}
\newcommand{\ADoG}{\textsc{A-DoG}\xspace}
\newcommand{\UniXGrad}{\textsc{UniXGrad}\xspace}
\newcommand{\AcceleGrad}{\textsc{AcceleGrad}\xspace}
\newcommand{\pick}{\sim}
\renewcommand{\L}{\beta}
\newcommand{\logp}{\log_{+}}
\newcommand{\lip}{L}
\newcommand{\etat}{\tilde{\eta}}
\newcommand{\bfunc}{\mathfrak{b}}
\newcommand{\bk}[1]{\bfunc_{#1}}
\newcommand{\bkbar}[1]{\bar{\bfunc}_{#1}}
\newcommand{\btbar}[1]{\hat{\bfunc}_{#1}}
\newcommand{\bstar}{\bfunc_\star}
\newcommand{\sfunc}{\sigma}
\newcommand{\skbar}[1]{\bar{\sfunc}_{#1}}
\newcommand{\sstar}{\sfunc_\star}
\newcommand{\stob}{\varsigma}
\newcommand{\empVar}[1][t]{V_{#1}}
\newcommand{\NB}{\mathfrak{B}} %
\newcommand{\kset}{\mathcal{K}}
\newcommand{\DR}[2]{\mathcal{D}_\mathcal{R}\prn*{{#1}, {#2}}}
\newcommand{\citeg}[1]{\citep[e.g.,][]{#1}}  %
\newcommand{\indic}[1]{\mathbbm{1}_{\!\left\{#1\right\}}} %
\newcommand{\tf}{\tilde{f}}
\long\def\@makecaption#1#2{
  \vskip 0.8ex
  \setbox\@tempboxa\hbox{\small {#1:} #2}
  \parindent 1.5em  %
  \dimen0=\hsize
  \advance\dimen0 by -3em
  \ifdim \wd\@tempboxa >\dimen0
  \hbox to \hsize{
    \parindent 0em
    \hfil 
    \parbox{\dimen0}{\def\baselinestretch{0.96}\small
      {#1:} #2
    } 
    \hfil}
  \else \hbox to \hsize{\hfil \box\@tempboxa \hfil}
  \fi
}
\newcommand{\insertfigure}[3]{
\begin{figure}[t]
	\centering
	\includegraphics[width=\linewidth]{figures/#1}
	\caption{#2}\label{#3}
\end{figure}
}
\newcommand{\insertfigurepage}[6]{
\begin{figure}[t]
    \centering
	\hspace*{10pt}
    \includegraphics[width=\linewidth]{figures/#1}
    \caption{#2}\label{#3}
    \vspace{30pt} %
    \includegraphics[width=\linewidth]{figures/#4}
    \caption{#5}\label{#6}
\end{figure}
}
\newcommand{\yair}[1]{{\bf \color{blue} Yair: #1}}
\newcommand{\ollie}[1]{{\bf \color{green!50!black} Ollie: #1}}
\newcommand{\maor}[1]{{\bf \color{cyan} Maor: #1}}
\newcommand{\itai}[1]{{\bf \color{orange} Itai: #1}}
\newcommand{\TBD}[1]{{\bf \color{red} TBD: #1}}
\newcommand{\TODO}[1]{{\bf \color{purple} TODO: #1}}
\renewcommand{\yair}[1]{\ignorespaces}
\renewcommand{\ollie}[1]{\ignorespaces}
\renewcommand{\maor}[1]{\ignorespaces}
\renewcommand{\itai}[1]{\ignorespaces}
\renewcommand{\TBD}[1]{\ignorespaces}
\renewcommand{\TODO}[1]{\ignorespaces} \notarxiv{\usepackage{times}}
\title{Accelerated Parameter-Free Stochastic Optimization}
	\author{Itai Kreisler\thanks{Tel Aviv University,
			\href{mailto:kreisler@mail.tau.ac.il}{\texttt{kreisler@mail.tau.ac.il}}, 
			\href{mailto:maor.ivgi@cs.tau.ac.il}{\texttt{maor.ivgi@cs.tau.ac.il}}, 
			\href{mailto:ycarmon@tauex.tau.ac.il}{\texttt{ycarmon@tauex.tau.ac.il}.}
		}~~~~~~~Maor Ivgi\footnotemark[1]
		~~~~~~Oliver Hinder\thanks{University of Pittsburgh, \href{mailto:ohinder@pitt.edu}{\texttt{ohinder@pitt.edu}}}
		~~~~~~Yair Carmon\footnotemark[1]}
	\date{}
\begin{document}

\maketitle

\begin{abstract}%
  We propose a method that achieves near-optimal rates for \emph{smooth} stochastic convex optimization and requires essentially no prior knowledge of problem parameters.
  This improves on prior work which requires knowing at least the initial distance to optimality $\dopt$.  
  Our method, $\UDoG$, combines $\UniXGrad$ (\citet{kavis2019unixgrad}) and $\DoG$ (\citet{ivgi2023dog}) with novel iterate stabilization techniques.
  It requires only loose bounds on $\dopt$ and the noise magnitude, provides high probability guarantees under sub-Gaussian noise, and is also near-optimal in the non-smooth case.
  Our experiments show consistent, strong performance on convex problems and mixed results on neural network training.

\end{abstract}

\notarxiv{
	\begin{keywords}%
	  Parameter-free, Adaptive, Stochastic convex optimization, Smooth optimization.
	\end{keywords}
}

\section{Introduction}\label{sec:intro}

We consider the problem of minimizing a smooth convex function using access to an unbiased stochastic gradient oracle.
This is a fundamental problem in machine learning, including many important special cases such as logistic and linear regression.
Moreover, the smoothness assumption is crucial for developing one of the most widely used improvements for the classical gradient method: Nesterov acceleration~\citep{nesterov1983method}.

Nesterov acceleration obtains the optimal rate of convergence for this problem but is strongly reliant on knowing the problem parameters.
Specifically, \citet{lan2012optimal}, who first demonstrated the theoretical value of Nesterov acceleration on smooth \emph{stochastic} convex
functions, requires knowledge of the smoothness parameter $\beta$, the distance $\dopt$ from the initial point to the optimum, and a value $\sigma$ for which the noise is $\sigma$-sub-Gaussian.
Accelerated adaptive methods~\citep{cutkosky2019anytime,kavis2019unixgrad}
do not require knowledge of $\beta$ and $\sigma$, but assume knowledge of $\dopt$.
For \emph{non-smooth} stochastic convex optimization, \emph{parameter-free methods} \citeg{orabona2016coin, cutkosky2018black, bhaskara2020online,  mhammedi2020lipschitz, jacobsen2022parameter, carmon2022making, ivgi2023dog} require only loose knowledge of problem parameters to obtain near-optimal rates.   
Finding such parameter-free methods for \emph{smooth} stochastic optimization is a longstanding open problem.

\paragraph{Our contribution.} 
We solve this open problem, designing an accelerated parameter-free method which we call \UniXGrad-\DoG, or $\UDoG$ for short.
$\UDoG$ combines the ``universal extragradient'' (\UniXGrad) framework~\citep{kavis2019unixgrad} with the ``distance over gradient'' (\DoG) technique~\citep{ivgi2023dog}.
More specifically, we replace the domain diameter $D$ in the \UniXGrad step size numerator with the maximum distance from the initial point, similar to the DoG step size numerator.
Furthermore, we use this maximum distance to automatically tune the ``momentum'' parameter $\alpha_t$ of \UniXGrad.
Finally, we modify the \UniXGrad step size denominator to ensure the stability of the iterate sequence. 
$\UDoG$ only requires a loose upper bound $\hat{\sigma}$ on $\sigma$ and lower bound $\reps$ on $D$.\footnote{In fact, we only require \emph{local} upper bounds of the form $\hat{\sigma}(x)$ on the noise sub-Gaussianity. 
}
As long as $\hat{\sigma}$ is loose by at most a $\sqrt{T}$ factor and $\reps$ is loose by any $\mathrm{poly}(T)$ factor, we obtain a near-optimal, high-probability rate of convergence; \Cref{table:summary} states $\UDoG$'s guarantees and compares it to prior work.
Moreover, $\UDoG$ simultaneously enjoys a near-optimal, parameter-free rate of convergence for \emph{non-smooth} problems.

We conduct preliminary experiments with $\UDoG$ as well as another algorithm, $\ADoG$, which combines \AcceleGrad~\citep{levy2018online} and DoG.
On convex optimization problems, both $\UDoG$ and $\ADoG$ often substantially improve over $\DoG$, especially at large batch sizes, with $\ADoG$ outperforming $\UDoG$, likely due to not requiring an extra-gradient computation at each step.
On several problems, $\ADoG$ matches the performance of carefully tuned SGD with Nesterov momentum.
On neural network optimization problems, however, we observe that both \UDoG and \ADoG do not consistently improve over \DoG.

\newcommand{\yes}{{\color{green!75!black}\text{\ding{51}}}}
\newcommand{\yesyes}{{\color{green!40!black}\text{\ding{51}}}}
\newcommand{\no}{{\color{red!75!black}\text{\ding{55}}}}
\newcommand{\nyes}{{\color{black}\text{\ding{51}}}}
\newcommand{\nno}{{\color{black}\text{\ding{55}}}}
\newcommand{\yesno}{%
  \begin{tikzpicture}[baseline=-0.75ex]
    \node[inner sep=0pt] at (0,0) {\color{orange}\ding{51}}; %
    \node[inner sep=0pt] at (0.02,0) {\color{orange}\ding{55}}; %
  \end{tikzpicture}%
}

\begin{table}[]
	\centering
	\everymath{}
	\newcolumntype{S}{>{\centering\arraybackslash}p{16pt}}
	\begin{tabular}{@{}lcSSSlc@{}}
		\toprule
		\multirow{2}{*}{\begin{tabular}[c]{@{}l@{}}Algorithm\\ name\end{tabular}} & \multirow{2}{*}{\begin{tabular}[c]{@{}c@{}}Unbounded \\ domain?\end{tabular}} & \multicolumn{3}{c}{Insensitive to...} & \multicolumn{1}{c}{\multirow{2}{*}{\begin{tabular}[c]{@{}c@{}}Rate of\\ convergence\end{tabular}}} & \multirow{2}{*}{\begin{tabular}[c]{@{}c@{}}High\\ probability?\end{tabular}} \\
		 &  & $\dopt$/$\diam$ & $\beta$ & $\sigma$ & \multicolumn{1}{c}{} &  \\ \midrule
		\multirow{2}{*}{$\UDoG$ (this work)} & $\nyes$ & $\yes$ & $\yesyes$ & $\yes$ & $\Otil{ \frac{\beta \dopt^2}{T^2} + \frac{\sigma \dopt}{\sqrt{T}} + \frac{\hat{\sigma} \dopt}{T} }$ & $\yes$ \\
		 & $\nno$ & $\yes$ & $\yesyes$ & $\yesyes$ & $\Otil{ \frac{\beta \diam^2}{T^2} + \frac{\sigma \diam}{\sqrt{T}} }$ & $\yes$ \\ \midrule
		$\UniXGrad$~\cite{kavis2019unixgrad} & $\nno$ & $\no$ & $\yesyes$ & $\yesyes$ & $\O{ \frac{\beta \diam^2}{T^2} + \frac{\sigma \diam}{\sqrt{T}} }$ & $\no$ \\
		\citet{cutkosky2019anytime} & $\nyes$ & $\no$ & $\yesyes$ & $\yes$ & $\Otil{ \frac{\beta \dopt^2}{T^2} + \frac{\sigma \dopt}{\sqrt{T}} }$ & $\no$ \\
		\citet{lan2012optimal} & $\nyes$ & $\no$ & $\no$ & $\no$ & $\O{ \frac{\beta \dopt^2}{T^2} + \frac{\sigma \dopt}{\sqrt{T}} }$ & $\yes$ \\
		\DoG~\cite{ivgi2023dog} / CO~\cite{cutkosky2018black} & $\nyes$ & $\yes$ & $\yesyes$ & $\yesno$ & $\Otil{ \frac{\beta \dopt^2}{T} + \frac{\sigma \dopt}{\sqrt{T}} + \frac{\hat{\lip} \dopt}{T} }$ & $\yes$ / $\no$ \\ \bottomrule
		\end{tabular}
	\everymath{}
	\caption{
	Comparison of $\UDoG$ and prior work on $\beta$-smooth stochastic optimization with $\sigma$-sub-Gaussian noise. ``Unbounded domain'' indicates if the algorithm is defined over the whole Euclidean space or a bounded subspace. In the former case we express rates in terms of the initial distance to optimality $\dopt$ and in the latter case we use the 
	domain diameter $\diam$. Under ``Insensitive to...'' we mark $\no$ if the suboptimality bound grows polynomially with error in the parameter, $\yes$ if it only affects logarithmic factors or low order terms, and $\yesyes$ if there is no dependence on the parameter at all.
	The marker \protect\yesno{} indicates algorithms that require an upper bound 
	$\hat{\lip}$ on gradient norm, which may be much larger the the upper bound $\hat{\sigma}$ on the noise. The notation $\Otil{\cdot}$ hides polylogarithmic factors. 
	\label{table:summary}
	}
\end{table}

\subsection{Related work}

\paragraph{Non-smooth stochastic optimization.}
The majority of tuning-insensitive stochastic optimization methods are developed for online convex optimization. Online regret bounds immediately translate to suboptimality guarantees for non-smooth stochastic optimization using online-to-batch conversion \cite[Section 3]{orabona2021modern}. Proposed methods divide roughly into \emph{adaptive} algorithms such as adaptive SGD~\citep{mcmahan2017survey,gupta2017unified}, AdaGrad~\citep{duchi2011adaptive,mcmahan2010adaptive} and variants~\citeg{kingma2015adam,reddi2018convergence,shazeer2018adafactor}, and \emph{parameter-free} methods \cite{streeter2012no,orabona2013dimension,mcmahan2014unconstrained,orabona2016coin,cutkosky2018black,cutkosky2019artificial,bhaskara2020online,mhammedi2020lipschitz,jacobsen2022parameter}. Adaptive methods typically require no knowledge of the stochastic gradient bound but need to know the initial distance to optimality (or the domain diameter), while parameter-free methods are robust to uncertainty in the distance but require some (loose) bound on the stochastic gradient norms. 

Recent work~\citep{carmon2022making,ivgi2023dog} develops parameter-free methods that hew closer to SGD and eschew online-to-batch conversion for high-probability guarantees in the stochastic setting; $\UDoG$ continues this line. In particular, it extends the core mechanism of $\DoG$ \citep{ivgi2023dog} wherein iterate movement serves as a proxy for the distance to optimality. D-Adaptation~\citep{defazio2023learning}, DoWG~\citep{khaled2023dowg}, and Prodigy~\citep{mishchenko2023prodigy} use a similar mechanism, but only provide guarantees for the non-stochastic setting. Ensuring the validity of the mechanism (i.e., that iterates never move too far away from the optimum) is a key challenge in its analysis. This challenge becomes greater in the smooth setting, where selecting too small of a step size nullifies the benefit of acceleration. Much of our algorithmic and analytical innovation addresses this challenge. 

\paragraph{Non-stochastic smooth optimization.} 
Without noise, Nesterov acceleration requires knowledge of the smoothness constant $\beta$ but not the distance to optimality~\cite{nesterov1983method,nesterov2013introductory}. The methods~\cite{levy2018online,kavis2019unixgrad} reverse this tradeoff, requiring the distance but not $\beta$. Line search techniques such as~\citep{beck2009fast,carmon2022optimal} provide much stronger adaptivity, attaining the optimal gradient evaluation complexity up to an additive term that depends logarithmically on the uncertainty in $\beta$. However, line search can be challenging to employ efficiently in the stochastic setting as we can no longer accurately evaluate the function. 
Indeed, there are many works that analyze stochastic line search techniques \citeg{paquette2020stochastic,vaswani2019painless} but none have obtained convergence guarantees close to that of \citet{lan2012optimal}.

\paragraph{Smooth stochastic optimization.} 
Several adaptive and parameter-free methods~\cite{gupta2017unified,cutkosky2018black,carmon2022making,ivgi2023dog,khaled2023dowg} converge faster on smooth functions. However, they do not improve all the way to the optimal rate (see \Cref{table:summary}) due to a missing ``momentum'' component. \citet{cutkosky2019anytime} gives an improved online-to-batch conversion framework that endows adaptive SGD with momentum and accelerated rates in the smooth case, but requires a bound on the distance to optimality. \citet{kavis2019unixgrad} propose $\UniXGrad$, combining ideas from~\citep{cutkosky2019anytime} with the mirror-prox/extragradient algorithm~\cite{nemirovski2004prox,diakonikolas2018accelerated} and online learning~\cite{mcmahan2017survey,rakhlin2013optimization} to obtain optimal rates assuming bounded domains of known diameter $\diam$ and assuming that $\dopt$ is of the order of $\diam$. $\UDoG$ modifies $\UniXGrad$ and removes both assumptions, yielding the first parameter-free accelerated method.  
\section{Preliminaries and algorithmic framework}\label{sec:preliminaries}

In this section, we set up our notation and terminology, and use them to present the general $\UDoG$ template (\Cref{alg: general unixgrad dog}) defining the algorithm up to the choice of adaptive step sizes, which we gradually develop in the following sections.

\paragraph{Basic notation and conventions.} Throughout, $\norm{\cdot}$ denotes the Euclidean norm, $\log$ is base $e$ and $\logp(x) \defeq 1+\log(x)$. 
The function $\Proj{\xset}{\cdot}$ denotes Euclidean projection onto set $\xset$. We say that $f : \kset\to\R$ is $\L$-smooth if $\grad f$ is $\L$-Lipschitz, i.e., $\norm{ \grad f(u) - \grad f(v) } \le \L \| u - v \|$ for all $u, v \in \kset$. We write $\brk*{\cdot}_{+} := \max\{ \cdot, 0 \}$.

\vspace{\baselineskip}
In this work, we minimize an objective function $f$ via queries to a stochastic gradient estimator $\gradientOracle{}$.
We make the following assumption in all of our theoretical analyses.

\begin{assumption}[Made throughout]\label{ass:global}
	The objective function $f:\kset\to\R$ is convex, $\lip$-Lipschitz, $\beta$-smooth,\footnote{Our results hold in the non-Lipschitz or non-smooth cases by setting $\lip=\infty$ or $\beta=\infty$, respectively. In the non-smooth case, we define $\grad f(x)\defeq \E \gradientOracle{x}$ and assume it is a subgradient of $f$.}  has closed convex domain $\kset$, and its minimum is attained at some $\xopt \in \argmin_{x\in\kset} f(x)$.  For all $x\in\kset$, the gradient estimator $\gradientOracle{}$ satisfies $\E \gradientOracle{x} = \grad f(x)$.
\end{assumption}
\SetAlgoNlRelativeSize{-1}
\begin{algorithm2e}
	\notarxiv{\setstretch{1.5}}
	\arxiv{\setstretch{1.4}}
	\caption{\UDoG (\UniXGrad-\DoG) template}
	\label{alg: general unixgrad dog}
	\LinesNumbered
	\DontPrintSemicolon
	\Input{
		Initial $x_{0}\in \kset$, iteration budget $T$, initial movement $\reps$, step sizes $\{\eta_{x,t},\eta_{y,t}\}$
	}
	Set $y_{0} = x_{0}$ \;%
	\For{$t = 0, 1, 2, \ldots, T-1$ }{
		Set $\alpha_t = \sum_{k=0}^{t} {\rbar[k]}/{\rbar[t]}$ ~~and~~ $\w[t] = \alpha_t \rbar[t]$ ~~for~~ 
		$\displaystyle\rbar[t] = \max_{k \le t} \max\crl*{\norm{y_k - x_0}, \norm{x_k-x_0},\reps}$
		\;
		$x_{t+1} = \Proj{\kset}{y_{t} - \alpha_t \eta_{x,t} \m }$ \tabto{5cm}for~~ $\m[t]\sim \gradientOracle{\zbar}$ \tabto{7.9cm} and~~ $\zbar = \frac{\w[t] y_{t} + \sum_{k=0}^{t-1} \w[k] x_{k+1} }{\sum_{k=0}^{t} \w[k]}$\;
		$y_{t+1} = \Proj{\kset}{y_{t} - \alpha_t \eta_{y,t} \g }$ \tabto{5cm} for~~ $\g[t]\sim \gradientOracle{\xbar}$ \tabto{7.9cm} and~~ $\xbar = \frac{\w[t] x_{t+1} + \sum_{k=0}^{t-1} \w[k] x_{k+1} }{\sum_{k=0}^{t} \w[k]}$\;
	}
	\Return $\xbar[T]$ 

\end{algorithm2e}

\paragraph{Presenting $\UDoG$.} \Cref{alg: general unixgrad dog} provides the general template of $\UDoG$. As in $\UniXGrad$ \citep{kavis2019unixgrad}, each iteration of the algorithm consists of two stochastic gradient steps, with each stochastic gradient queried at a moving average of iterates. Unlike $\UniXGrad$, the moving average weights $\w[t]$ and the step size multipliers $\alpha_t$ are not fixed in advance, but are instead dynamically set based on the maximum distance moved from the origin, denoted
\[
	\rbar[t] \defeq \max_{k \le t} \max\crl*{\norm{y_k - x_0}, \norm{x_k-x_0},\reps}.
\]
The parameter $\reps$ serves as a (loose) lower bound on  $\norm{x_0 - \xopt}$; typically, $\rbar[t]$ grows rapidly and then plateaus at a level roughly approximating $\norm{x_0 - \xopt}$. When that happens, the sequence $\alpha_t = \sum_{k \le t} \rbar[k] / \rbar[t]$ grows linearly in $t$, similar to  $\alpha_t = t+1$ in $\UniXGrad$. 

To complete the specification of $\UDoG$ we must set the step size sequence. $\UniXGrad$ assumes $\kset$ the domain has Euclidean diameter $D$ and picks step sizes of the form $\eta_{x,t} = \eta_{y,t} = \frac{\sqrt{2}D}{\sqrt{1+\Q[t-1]}}$ where
\[
	\Q[t] \defeq \sum_{k=0}^{t} \q[k] ~~\mbox{and}~~\q[t] \defeq \alpha_t^2 \norm{ \g - \m }^2.
	\numberthis \label{eq: Q notation}
\]
To handle unknown domain size and unbounded domains, $\UDoG$ follows $\DoG$ in using $\rbar[t]$ as the step size numerator in lieu of $D$. Thus, the $\UDoG$ step size admits the general form
\begin{align}
	\eta_{x,t} = \frac{\rbar[t]}{\sqrt{\Gx[t]}} 
	~~\text{and}~~ 
	\eta_{y,t} = \frac{\rbar[t]}{\sqrt{\Gy[t]}},
	~~\mbox{where}~~\Gx[0]\le \Gy[0] \le \Gx[1] \le \cdots
	.
	\label{eq:step-size-form}
\end{align}
In the appendix, we also use the notation 
\begin{align}
	\etat_{x,t} = \frac{1}{\sqrt{\Gx[t]}} 
	~~\text{and}~~ 
	\etat_{y,t} = \frac{1}{\sqrt{\Gy[t]}}
	.
	\label{eq: etat notation}
\end{align}
For bounded domains, setting
$\Gx[t] =\Gy[t] = 1+ \Q[t-1]$ 
recovers the $\UniXGrad$ guarantees up to logarithmic factors. However, for unbounded domains, ensuring the stability of $\UDoG$ (i.e., that $\rbar[t]$ never grows much larger than $\norm{x_0-\xopt}$) requires more careful selection of $\Gx[t],\Gy[t]$. Enforcing iterate stability without compromising the rate of convergence is the main challenge we overcome. To that end, we define a few frequently appearing quantities:
\begin{align*}
	\r[t] &\defeq \max\crl*{\norm{y_k - x_0}, \norm{x_k-x_0}}~,~
	\d[t] \defeq \norm{y_t - \xopt}~,~
	\dbar[t]\defeq\max_{k\le t}\d[k] ~,~\\
	\M[t]&\defeq \max_{k\le t} \crl*{\alpha_k^2 \norm{\m[k]}^2 }
	~\mbox{and}~\theta_{t,\delta} \defeq \log \frac{60 \log \prn*{6t}}{\delta}
	.
\end{align*}

\paragraph{$\UniXGrad$ as a special case.} For a domain with Euclidean diameter $D$, setting $\reps = D\sqrt{2}$ and $\Gx[t]=\Gy[t]=1+\Q[t-1]$ recovers  $\UniXGrad$ (with Euclidean distance generating function) exactly, as it implies $\rbar[t] = D\sqrt{2}$ for all $t$ and hence $\alpha_t=t+1$. 
\section{Analysis in the noiseless case}\label{sec:noiseless}

We begin our analysis under the simplifying assumption that gradients are computed exactly.

\begin{assumption}\label{ass:noiseless}
	In addition to \Cref{ass:global}, we assume that $\gradientOracle{x} = \grad f(x)$ with probability 1.
\end{assumption}

\noindent
This noiseless setting allows us to isolate and address the key challenges of exploiting smoothness and stabilizing the iterates.

\subsection{General suboptimally bound}\label{subsec:noiseless-subopt}

Our first result is a bound on the suboptimality of $\UDoG$ for general step sizes; see \Cref{app:noiseless-subopt-proof} for complete proof. To interpret \Cref{prop:noiseless-subopt} recall that $\d$ is the initial distance to the optimum and the definition of $\Q[t]$ given in \eqref{eq: Q notation}.

\begin{proposition}\label{prop:noiseless-subopt}
	In the noiseless setting (\Cref{ass:noiseless}), suppose the \UDoG step sizes~\eqref{eq:step-size-form} satisfy  $\Gx[t] \ge \Q[t-1]$ %
	for all $t\ge 0$. Then for every $t\ge0$ and for any number $s\ge0$, we have
	\begin{align*}
		f\prn{\xbar[t]} - f\prn{x\opt}
		\le \O{\frac{ s^{3/2} \L \prn*{ \rbar[t+1] + \d}^2 + \prn*{ \rbar[t+1] + \d} \brk*{ \sqrt{\max\{\Gy[t],\Q[t]\}} - s \sqrt{\Q[t]} }_{+}}{ \prn*{\sum_{k=0}^{t} \rbar[k]/\rbar[t+1] }^2 } }.
		\numberthis \label{eq:noiseless-subopt}
	\end{align*}
\end{proposition}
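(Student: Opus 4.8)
The plan is to bound the weighted suboptimality $\prn*{\sum_{k=0}^{t}\w[k]}\prn*{f(\xbar[t])-f(\xopt)}$ and divide at the end. First I would pass from function values to a linearized ``regret'' via convexity together with Cutkosky's anytime online-to-batch identity: since $\xbar[k]$ is the $\w$-weighted average of $x_1,\dots,x_{k+1}$, a short telescoping argument over the partial averages gives
\[
\prn*{\textstyle\sum_{k=0}^{t}\w[k]}\prn*{f(\xbar[t])-f(\xopt)}\le\sum_{k=0}^{t}\w[k]\inner{\g[k]}{x_{k+1}-\xopt},
\]
where in the noiseless setting $\g[k]=\grad f(\xbar[k])$ and $\m[k]=\grad f(\zbar[k])$.

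Next I would apply the optimistic mirror-descent / extragradient inequality to the two projection steps. Writing the prox inequalities for $x_{k+1}=\Proj{\kset}{y_k-(\w[k]/\sqrt{\Gx[k]})\m[k]}$ and $y_{k+1}=\Proj{\kset}{y_k-(\w[k]/\sqrt{\Gy[k]})\g[k]}$ and combining them so the $\norm{y_{k+1}-y_k}^2$ terms cancel up to a nonpositive remainder, and then using a Young step on the cross term $\inner{\g[k]-\m[k]}{x_{k+1}-y_{k+1}}$, I would obtain for each $k$ a per-step bound of the shape
\[
\w[k]\inner{\g[k]}{x_{k+1}-\xopt}\le\sqrt{\Gy[k]}\prn*{\tfrac12\d[k]^2-\tfrac12\d[k+1]^2}-\frac{\sqrt{\Gx[k]}}{2}\norm{x_{k+1}-y_k}^2+\frac{\rbar[k]^2\q[k]}{2\sqrt{\Gx[k]}}.
\]
Here the asymmetry $\Gx[k]\le\Gy[k]$ (guaranteed by the ordering in \eqref{eq:step-size-form}) is precisely what lets the cross term be absorbed; since equal step sizes are not available to \UDoG, handling this asymmetry cleanly is a first technical point. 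Summing over $k$, I would telescope the distance term by Abel summation, using monotonicity of $\sqrt{\Gy[k]}$ and the crude bound $\d[k]\le\rbar[t+1]+\d$, producing a leading term $\tfrac12(\rbar[t+1]+\d)^2\sqrt{\Gy[t]}$, and I would collect the negative movement term $-\sum_k\tfrac{\sqrt{\Gx[k]}}{2}\norm{x_{k+1}-y_k}^2$ together with the error term $\sum_k\tfrac{\rbar[k]^2\q[k]}{2\sqrt{\Gx[k]}}$.

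The crux is controlling the error term under only the weak hypothesis $\Gx[k]\ge\Q[k-1]$, which does not by itself make $\q[k]/\sqrt{\Gx[k]}$ summable. Here smoothness enters: because $\xbar[k]-\zbar[k]=\tfrac{\w[k]}{\sum_{j\le k}\w[j]}(x_{k+1}-y_k)$, the inequality $\norm{\g[k]-\m[k]}\le\L\norm{\xbar[k]-\zbar[k]}$ converts each $\q[k]$ into a multiple of the movement $\norm{x_{k+1}-y_k}^2$, so that each error contribution can be partially absorbed into the negative movement term. To make this quantitative I would introduce the free parameter $s$: writing $\sqrt{\Gy[t]}\le\sqrt{\max\{\Gy[t],\Q[t]\}}\le s\sqrt{\Q[t]}+\brk{\sqrt{\max\{\Gy[t],\Q[t]\}}-s\sqrt{\Q[t]}}_+$ isolates a residual $[\,\cdot\,]_+$ piece and an $s\sqrt{\Q[t]}$ piece. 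Balancing the $s\sqrt{\Q[t]}$ piece and the error term against the available negative movement term by a Young-type inequality yields the absorbed contribution $s^{3/2}\L(\rbar[t+1]+\d)^2$, while the residual error, which carries the factor $\rbar[k]^2\le\rbar[t+1]^2$, becomes the $(\rbar[t+1]+\d)[\,\cdot\,]_+$ contribution.

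Finally I would divide by $\sum_{k=0}^{t}\w[k]$ and use the identity $\w[k]=\sum_{j\le k}\rbar[j]$ together with a summation inequality of the form $\sum_{k=0}^{t}\w[k]\gtrsim(\sum_{k=0}^{t}\rbar[k])^2/\rbar[t+1]$ to rewrite the averaging weight as the stated denominator $(\sum_{k=0}^{t}\rbar[k]/\rbar[t+1])^2$; this inequality supplies exactly the extra factor of $\rbar[t+1]$ that reconciles the differing $(\rbar[t+1]+\d)$ exponents of the two numerator terms. I expect the main obstacle to be the absorption step itself: guaranteeing that the smoothness-converted error is dominated by the negative movement term for \emph{every} admissible $\Gx[k]$ (not merely the eventually stabilized choice) is what forces the $s$-parametrized, ``for all $s\ge0$'' form of the statement rather than a single clean bound.
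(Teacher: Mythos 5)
Your skeleton matches the paper's (anytime online-to-batch conversion, extragradient per-step inequality, smoothness converting $\q[k]$ into movement, an $s$-split balanced by a Young/discrete-integral argument, and the weight-sum inequality $\sum_k \w[k] \ge \tfrac12 \rbar[t]\alpha_t^2$), and your per-step inequality is itself valid. However, two of your concrete steps fail, and both failures are exactly where the paper does something structurally different.

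First, your choice of Young parameter. You take the prox/Young parameter to be $\sqrt{\Gx[k]}$ so that the cross term is absorbed with nothing left over, producing the error $\sum_k \frac{\rbar[k]^2 \q[k]}{2\sqrt{\Gx[k]}}$ against the negative term $-\tfrac{\sqrt{\Gx[k]}}{2}\norm{x_{k+1}-y_k}^2$. Under the hypothesis $\Gx[k]\ge \Q[k-1]$ --- which involves only the \emph{previous} cumulative sum and is vacuous at $k=0$ --- the quantity $\sqrt{\Gx[k]}$ can be arbitrarily small relative to $\q[k]$, and then the per-step difference $\frac{\rbar[k]^2\q[k]}{2\sqrt{\Gx[k]}}-\tfrac{\sqrt{\Gx[k]}}{2}\norm{x_{k+1}-y_k}^2$ is not controlled by any term of the claimed form: take $\norm{\m[k]}$ small, $\Gx[k]$ small enough that the step moves a full $\Theta(\rbar[t+1])$, and curvature between $\zbar[k]$ and $\xbar[k]$ saturating smoothness, and this difference exceeds $s^{3/2}\L\rbar[t+1]\prn{\rbar[t+1]+\d}^2$ by a factor of order $\L\rbar[k]/(\alpha_k\norm{\m[k]})$, which is unbounded. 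The paper instead takes the Young parameter to be $\sqrt{\Q[k]}$ --- crucially \emph{including} the current $\q[k]$ --- so the error sums to $O(\rbar[t]^2\sqrt{\Q[t]})$ by the AdaGrad-type lemma; the resulting excess terms on steps where $\sqrt{\Q[k]}>\sqrt{\Gx[k]}$ are bounded by $8\rbar[t+1]^2$ each (global iterate boundedness, not smoothness) and eliminated by an index-shifted telescoping that invokes the hypothesis at the \emph{next} index, $\Gx[k+1]\ge\Q[k]$ (the paper's term $(C)$). Your simplification deletes precisely this mechanism, and with it the only way the weak hypothesis can be used.

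Second, the distance telescoping. After Abel summation you bound each $\d[k]^2$ by the crude $\prn{\rbar[t+1]+\d}^2$, giving a leading term $\tfrac12\prn{\rbar[t+1]+\d}^2\sqrt{\Gy[t]}$ that carries \emph{no} factor of $\rbar[t+1]$. Dividing by $\sum_k\w[k]\ge\tfrac12\rbar[t+1]\prn{\sum_k\rbar[k]/\rbar[t+1]}^2$ then leaves $\prn{\rbar[t+1]+\d}^2/\rbar[t+1]$ in place of the claimed $\prn{\rbar[t+1]+\d}$, an excess factor $\prn{\rbar[t+1]+\d}/\rbar[t+1]$ that is unbounded (e.g.\ tiny $\reps$ with iterates that barely move); carried through, it would turn the final logarithmic dependence on $\d/\reps$ into a polynomial one, destroying the parameter-free guarantee. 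The paper avoids this with the \DoG telescoping trick: $\dbar[t+1]^2-\d[t+1]^2=\prn{\dbar[t+1]-\d[t+1]}\prn{\dbar[t+1]+\d[t+1]}\le\norm{y_s-y_{t+1}}\cdot 2\dbar[t+1]\le 4\rbar[t+1]\dbar[t+1]$, where the difference-of-squares factorization produces one factor that is a distance between iterates and hence at most $2\rbar[t+1]$; this is the $\rbar[t+1]$ that your final division needs and your crude bound does not supply.
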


Before sketching the proof of \Cref{prop:noiseless-subopt}, let us explain how it yields the desired rates of convergence if we momentarily set aside iterate stability and assume $\rbar[t]\le \diam$ for all $t$, e.g., because the domain has diameter $\diam$. In this case, we may choose $\Gx[t]=\Gy[t]=\Q[t-1]$ similarly to $\UniXGrad$. Substituting $s=1$ in~\cref{eq:noiseless-subopt} guarantees suboptimality 
$O\prn[\Big]{\frac{\beta \diam^2}{\prn*{\sum_{k=0}^{t} \rbar[k]/\rbar[t+1] }^2}}$. As shown in~\cite[Lemma 3]{ivgi2023dog}, we have $\max_{t<T} \sum_{k=0}^{t} \rbar[k]/\rbar[t+1] = \Omega\prn*{ T \log^{-1}(\rbar[T]/\reps )}$, meaning that for some $t<T$ we obtain the near-optimal rate $O\prn*{ \frac{\beta\diam^2}{T^2} \log^2\frac{D}{\reps}}$. Moreover, since $\alpha_t \le t+1$ for all $t$, when all gradients are bounded by $L$ we have $\Q[t] = O(L^2 \sum_{k\le t} \alpha_k^2) = O(L^2 t^3)$. Substituting $s=0$ in  \cref{eq:noiseless-subopt} and reusing our bound on the denominator gives the near-optimal rate $O\prn*{ \frac{\lip \diam}{\sqrt{T}} \log^2\frac{D}{\reps}}$ in the non-smooth setting. 
We also see that setting $\reps=\Omega(D)$ recovers the $\UniXGrad$ guarantees in the noiseless setting, which is to be expected since $\reps=D\sqrt{2}$ recovers $\UniXGrad$ itself as explained in the previous section. 

Our proof of \Cref{prop:noiseless-subopt} combines ideas from the analyses of $\UniXGrad$ and $\DoG$. It centers on the weighted ``regret'' $\mc{R}_t \defeq \sum_{k=0}^{t}\w[k] \inner{\g[k]}{x_{k+1}-\xopt}$ where $\w[k]=\alpha_k \rbar[k]$. This is similar to the weighted regret considered for $\UniXGrad$ with additional weighting by $\rbar[t]$ used in the \DoG analysis. Algebraic manipulation of $\mc{R}_t$ gives (recall that $\d[t] = \norm{y_t -\xopt}$),
\[
	\mc{R}_t \le \O{ \rbar[t+1]^2 \sqrt{\Q[t] } +
	\sum_{k=0}^t \prn*{\d[k]^2 - \d[k+1]^2}\sqrt{\Gy[k]} -
	\sum_{k=0}^t \norm{x_{k+1}-y_k}^2 \sqrt{\Q[k]}
	}.
\]
We use a telescoping argument from \DoG in order to bound $\sum_{k=0}^t\prn*{\d[k]^2 - \d[k+1]^2}\sqrt{\Gy[k]}$ by \linebreak $\O{ \rbar[t+1](\rbar[t+1] + \d[0])\sqrt{\Gy[t]} }$. Next, following \UniXGrad we leverage smoothness to write 
\[
	\norm{x_{k+1}-y_k}^2 = 
	\prn*{\frac{\sum_{i=0}^k \w[i]}{\w[k]}}^2
	\norm{\xbar[k] - \zbar[k]}^2 
	\overge{\text{\tiny Lem.~\ref{lem: weight sum of alphas}}} 
	\frac{\alpha_k^2}{4} \norm{\xbar[k] - \zbar[k]}^2 
	\ge \frac{\alpha_k^2}{4\L^2} \norm*{\exg[k] - \exm[k]}^2 = \frac{\q[k]^2}{4\L^2} ,
\]
where the last equality is the first time we assumed exact $\gradientOracle{}$. We then show that, for all $S\ge0$,
\[
	\sum_{k=0}^t \norm{x_{k+1}-y_k}^2 \sqrt{\Q[k]} \ge \sum_{k=0}^t \frac{\q[k]^2}{\L^2} \sqrt{\Q[k]} \ge \Omega\prn*{S\sqrt{\Q[t]} - S^{3/2}\L};
	\numberthis \label{eq:noiseless-psketch-smoothness-leverage}
\]
this is a streamlined version of key arguments in~\cite{levy2018online,kavis2019unixgrad} where the authors carefully split the sum above based on the value of the adaptive step size. Taking $S=s\cdot \rbar[t+1](\rbar[t+1]+\d[0])$ and substituting back, we get 
\[
	\mc{R}_t \le \O{ 
	s^{3/2} \L \rbar[t+1]\prn*{ \rbar[t+1] + \d}^2 + \rbar[t+1]\prn*{ \rbar[t+1] + \d} \brk*{ \sqrt{\max\{\Gy[t],\Q[t]\}} - s \sqrt{\Q[t]} }_{+}
	}.
	\numberthis \label{eq:noiseless-psketch-almost-gen-regret-bound}
\]
To conclude the proof, we use the following $\UniXGrad$  ``anytime online-to-batch conversion''~\cite{cutkosky2019anytime} bound:
\[
	f(\xbar[t]) - f(\xopt) \le \sum_{k=0}^{t} \frac{\w[k]}{\sum_{i=0}^t \w[i]} \inner{\exg[k]}{x_{k+1}-\xopt} = \frac{\mc{R}_t}{\sum_{k=0}^{t} \w[k] },
	\numberthis \label{eq:noiseless-psketch-online-to-batch}
\]
where the last equality is the second and final time the proof uses the noiseless gradient assumption. Dividing \cref{eq:noiseless-psketch-almost-gen-regret-bound} by 
\[
\sum_{k=0}^{t} \w[k] 
\overge{\text{\tiny Lem.~\ref{lem: weight sum of alphas}}} \half \rbar[t] \alpha_t^2 
=
\frac{1}{2} \rbar[t] \prn*{ \sum_{k=0}^t \rbar[k] / \rbar[t] }^2
\ge 
\frac{1}{2} \rbar[t+1] \prn*{ \sum_{k=0}^t \rbar[k] / \rbar[t+1] }^2,
\numberthis \label{eq:noiseless-psketch-sum-of-wl}
\]
and employing \eqref{eq:noiseless-psketch-online-to-batch} yields the suboptimality bound~\eqref{eq:noiseless-subopt}.

\subsection{Iterate stability}
In the discussion following \Cref{prop:noiseless-subopt} above, we provisionally imagined that the iterates were bounded ($\rbar[t] \le \diam$ for all $t$) and argued that in this case simply setting $\Gx[t]=\Q[t-1]$ and $\Gy[t]=\Q[t]$ suffices for obtaining optimal rates whenever $\diam=O(\d)$. However, in unconstrained settings this choice of step size is hopeless, as it makes $\eta_{x,0}$ infinite, implying divergence at the first step!\footnote{For constrained domains, however, this choice results in a valid scheme where the first step jumps to the domain boundary. Indeed, \UniXGrad also behaves this way for sufficiently scaled-up instances since it uses a fixed, arbitrary value for $\eta_{x,0}$. This underscores $\UniXGrad$'s strong reliance on the bounded domain assumption.}

In the following proposition, we identify two conditions that together guarantee the iterates remain appropriately bounded. The complete proof appears in \Cref{app:noiseless-stability-proof}.

\begin{proposition}\label{prop:noiseless-stability}
	In the noiseless setting (\Cref{ass:noiseless}), let $s>0$ and define $c_t = 12 \logp^2\prn*{ \frac{s + \Q[t]}{s} }$. If $\reps\le\d$ and the \UDoG step sizes~\eqref{eq:step-size-form} satisfy $(i)$ $\Gy[t] \ge c_{t}^2 (s+\Q[t])$ (with $\Gx[0] \ge 144 s$), and $(ii)$  $\max\{\norm{x_{t+1}-y_t}, \norm{y_{t+1}-x_{t+1}}\} \le \frac{2\rbar[t]}{c_{t-1}}$ for all $t \ge 0$, then we have 
	\begin{align*}
		\dbar[t] \le 2 \d ~~ \text{and} ~~ \rbar[t] \le 4 \d
		~~\mbox{for all }t\ge0
		.
	\end{align*}
\end{proposition}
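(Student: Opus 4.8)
The plan is to establish both bounds simultaneously by strong induction on $t$, with hypothesis $\dbar[\tau]\le2\d$ and $\rbar[\tau]\le4\d$ for all $\tau\le t$; the base case holds since $\rbar[0]=\reps\le\d$ and $\dbar[0]=\d$. For the inductive step I would first reduce the bound on $\rbar[t+1]$ to a bound on $\d[t+1]$. Since $\norm{y_{t+1}-x_0}\le\d[t+1]+\d$ and, routing through $\xopt$ and using movement bound $(ii)$, $\norm{x_{t+1}-x_0}\le\norm{x_{t+1}-y_t}+\norm{y_t-\xopt}+\norm{\xopt-x_0}\le\frac{2\rbar[t]}{c_{t-1}}+\d[t]+\d$, the induction hypothesis and $c_{t-1}\ge12$ give $\norm{x_{t+1}-x_0}\le\tfrac{11}{3}\d<4\d$. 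Hence $\rbar[t+1]\le\max\{4\d,\d[t+1]+\d\}$, and it suffices to prove $\d[t+1]\le2\d$.

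To control $\d[t+1]$ I would sum the Euclidean extragradient/mirror-prox inequality for the two updates with comparator $\xopt$. Writing $\gamma_k=\alpha_k\eta_{y,k}=\w[k]/\sqrt{\Gy[k]}$ and $\gamma_k'=\alpha_k\eta_{x,k}$, this yields
\[
\half\d[t+1]^2\le\half\d^2-\sum_{k=0}^t\gamma_k\inner{\g[k]}{x_{k+1}-\xopt}-\half\sum_{k=0}^t\norm{x_{k+1}-y_k}^2+\half\sum_{k=0}^t\norm{\gamma_k\g[k]-\gamma_k'\m[k]}^2.
\]
The key observation is that the regret sum is nonnegative and can be discarded: its partial sums $\mc{R}_j=\sum_{k\le j}\w[k]\inner{\g[k]}{x_{k+1}-\xopt}$ satisfy $\mc{R}_j\ge\prn*{\sum_{k\le j}\w[k]}\prn*{f(\xbar[j])-f(\xopt)}\ge0$ by the online-to-batch identity \eqref{eq:noiseless-psketch-online-to-batch}, while the weights $1/\sqrt{\Gy[k]}$ are positive and nonincreasing, so Abel summation gives $\sum_k\gamma_k\inner{\g[k]}{x_{k+1}-\xopt}\ge0$. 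This is exactly where convexity prevents the distance from growing.

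It then remains to bound $\half\sum_k\norm{\gamma_k\g[k]-\gamma_k'\m[k]}^2$ by a small multiple of $\rbar[t+1]^2$, after which the movement terms $-\half\sum_k\norm{x_{k+1}-y_k}^2\le0$ are simply dropped. Splitting via $\norm{a-b}^2\le2\norm a^2+2\norm b^2$, the gradient-difference part uses smoothness and \Cref{lem: weight sum of alphas} ($\alpha_k\norm{\g[k]-\m[k]}\le2\L\norm{x_{k+1}-y_k}$) to obtain $\gamma_k^2\norm{\g[k]-\m[k]}^2=\rbar[k]^2\q[k]/\Gy[k]$; condition $(i)$ turns $\sum_k\q[k]/\Gy[k]$ into the convergent series $\frac{1}{144}\sum_k\frac{\q[k]}{(s+\Q[k])\logp^{4}\prn*{(s+\Q[k])/s}}$, which is bounded by an absolute constant, so this part is at most a small constant times $\rbar[t+1]^2$. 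The step-size-gap part involves $\norm{\m[k]}$ and telescopes because $\etat_{x,k}=1/\sqrt{\Gx[k]}$ and $\etat_{y,k}=1/\sqrt{\Gy[k]}$ are interleaved and nonincreasing; its total is of order $\rbar[t+1]^2\M[t]/\Gx[0]$ and is small by $\Gx[0]\ge144s$. Together these give $\half\sum_k\norm{\gamma_k\g[k]-\gamma_k'\m[k]}^2\le\epsilon\,\rbar[t+1]^2$ for an absolute constant $\epsilon$ that can be made as small as needed.

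Combining, $\d[t+1]^2\le\d^2+2\epsilon\,\rbar[t+1]^2$. To close the induction I would substitute $\rbar[t+1]\le\max\{4\d,\d[t+1]+\d\}$ and treat the two cases (equivalently, solve the resulting quadratic inequality in $\d[t+1]$); for $\epsilon$ a small enough absolute constant this forces $\d[t+1]\le2\d$, which in turn yields $\rbar[t+1]\le4\d$ and completes the step. The main obstacle I anticipate is the quantitative calibration: one must check that the specific constants $12$ and $144$ and the $\logp^{2}$ power in $c_t$ make the convergent series, the telescoping gap term, and the final case analysis all fit strictly inside the target radii $2\d$ and $4\d$, since any looseness propagates through the induction. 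A secondary point is verifying that the mirror-prox inequality, the Abel-summation nonnegativity, and the smoothness estimate all remain valid under the projections $\Proj{\kset}{\cdot}$, which follows from their nonexpansiveness and $\xopt\in\kset$.
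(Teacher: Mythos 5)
Your skeleton is close to the paper's proof in most respects: you work with the same $\etat_{y,k}$-weighted regret $\sum_k \alpha_k\eta_{y,k}\inner{\g[k]}{x_{k+1}-\xopt}$, obtain its nonnegativity from the online-to-batch bound \eqref{eq:noiseless-psketch-online-to-batch} together with monotonicity of $\etat_{y,k}$ (your Abel-summation argument is a clean equivalent of the paper's \Cref{lem: bound for distance}), control the $\norm{\g[k]-\m[k]}^2$ contribution via the convergent series of \Cref{lem:bound-a-k-infinite-sum} exactly as the paper does, and close with the same induction (\Cref{lem: d bound recursive}). The genuine gap is in how you pay for the asymmetry $\eta_{x,k}\neq\eta_{y,k}$. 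Your expansion produces the standalone term $\sum_{k}\w[k]^2\norm{\m[k]}^2\bigl(1/\sqrt{\Gx[k]}-1/\sqrt{\Gy[k]}\bigr)^2$, which you bound by $\rbar[t+1]^2\M[t]/\Gx[0]$ and declare small because $\Gx[0]\ge 144s$. This is false: conditions $(i)$--$(ii)$ place no upper bound on $\M[t]$ relative to $s$ (they only lower-bound $\Gy[t]$ and upper-bound the movements), and in any ordinary run $\M[t]=\max_{k\le t}\alpha_k^2\norm{\m[k]}^2$ grows without bound (like $t^2$ when gradient norms stay bounded away from zero, since $\alpha_k\approx k$), while $\Gx[0]$ is a fixed constant. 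So your $\epsilon$ grows with $t$, and the induction, which needs the uniform bound $\d[t+1]^2\le\d^2+\rbar[t]^2/16$, cannot close. Even if you use condition $(ii)$ to trade $\norm{\m[k]}$ for $\sqrt{\Gx[k]}$ (unconstrained, $(ii)$ forces $\alpha_k\norm{\m[k]}\le 2\sqrt{\Gx[k]}/c_{k-1}$), you are left with $\frac{\rbar[t]^2}{36}\sum_k\bigl(1-\sqrt{\Gx[k]/\Gy[k]}\bigr)^2$, and nothing in $(i)$--$(ii)$ makes that sum an absolute constant (it can grow like $\log\prn*{\Gy[t]/\Gx[0]}$). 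A small aside: smoothness plays no role in this proposition, so your invocation of $\alpha_k\norm{\g[k]-\m[k]}\le2\L\norm{x_{k+1}-y_k}$ is spurious; the identity $\gamma_k^2\norm{\g[k]-\m[k]}^2=\rbar[k]^2\q[k]/\Gy[k]$ is purely definitional.

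The paper never produces a standalone $\norm{\m[k]}^2$ term, and this is precisely what condition $(ii)$ is for. Its mirror-prox estimate (\Cref{lem: unixgrad inequality}) introduces a free parameter $\rho_t$ and attaches the asymmetry price $\bigl(\frac{1}{2\rho_t}-\frac{1}{2\etat_{x,t}}\bigr)$ to the squared movements $\norm{x_{t+1}-y_t}^2+\norm{x_{t+1}-y_{t+1}}^2$; condition $(ii)$ bounds these movements by $8\rbar[t]^2/c_{t-1}^2$, while \Cref{lem: diff of etat} with $\rho_t=1/(c_{t-1}\sqrt{s+\Q[t]})$ bounds the price coefficient by $\frac{c_{t-1}^2}{2}\rho_t\,\q[t]$. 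The $c_{t-1}^2$ factors cancel, and this contribution lands in the same convergent series $\sum_k\q[k]/\bigl((s+\Q[k])\logp^2((s+\Q[k])/s)\bigr)\le 1$ as the gradient-difference part, yielding $\d[t+1]^2\le\d^2+\rbar[t]^2/16$ uniformly in $t$. In your proof, condition $(ii)$ is used only for the easy reduction of $\rbar[t+1]$ to $\d[t+1]$, so the mechanism controlling the asymmetry is lost; repairing your argument essentially requires replacing the one-shot expansion of $\norm{\gamma_k\g[k]-\gamma_k'\m[k]}^2$ with the $\rho_t$-centered estimate, i.e., reproducing the paper's proof.
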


Let us briefly explain the two requirements in \Cref{prop:noiseless-stability}. Requirement $(i)$ folds two conditions into one. The first is that we increase the \UniXGrad denominator by a logarithmic factor---this is analogous to the step size attenuation necessary to ensure the stability of \DoG (i.e., the \TDoG step size \cite[Section~3.3]{ivgi2023dog}). 
The second is more subtle, requiring that $\Gy[t]$ upper bound $\Q[t]$ (rather than $\Q[t-1]$ as in \UniXGrad and \Cref{prop:noiseless-subopt}) and hence depend on $\norm{\g[t]-\m[t]}$. This is essential for guaranteeing stability but is also the cause of considerable technical difficulty in the noisy setting. Requirement $(ii)$ simply asks that $\UDoG$ iterates at time $t$ move by no more than a fraction of the estimated distance to optimality $\rbar[t]$; a reasonable requirement if the estimate is good. 

The proof of \Cref{prop:noiseless-stability} is a careful application of the \TDoG stability proof \cite[Proposition~2]{ivgi2023dog} to the \UDoG template. The key to the proof is the following modification of the \UniXGrad online-to-batch conversion bound~\eqref{eq:noiseless-psketch-online-to-batch}, which states that for any optimum $\xopt$ we have
\[
	\mc{R}'_t \defeq 
	\sum_{k=0}^t \eta_{y,k} \alpha_k \inner{\g[k]}{x_{k+1} - \xopt} \overeq{(\star)}
	\sum_{k=0}^t \eta_{y,k} \alpha_k \inner{\exg[k]}{x_{k+1} - \xopt} \ge 0,
	\numberthis \label{eq:noiseless-stability-psketch-regret'}
\]
where $(\star)$ holds only in the noiseless setting. We algebraically manipulate $\mc{R}'_t$ similarly to the weighted regret in the proof of \Cref{prop:noiseless-subopt}. Writing $\Q[t]' = c_{t-1}^2 (s+\Q[t])$, we obtain
\[
	0 \le \mc{R}'_t \le \sum_{k=0}^t \prn*{\d[k]^2 - \d[k+1]^2 +
		 \frac{\q[k]\rbar[k]^2}{\sqrt{\Gy[k]\Q[k]'}} 
		+ \frac{\sqrt{\Q[k]'}-\sqrt{\Gx[k]}}{\sqrt{\Gy[k]}}
		\prn*{\norm{x_{k+1}-y_{k}}^2 + \norm{x_{k+1}-y_{k+1}}^2}
	}.
\]
Our requirements $\Gy[k] \ge c_{t}^2 (s+\Q[t])$ (which entails  $\Gx[k] \ge \Gy[k-1] \ge c_{t-1}^2 (s+\Q[t-1])$) and  $\norm{x_{k+1}-y_{k}}^2 + \norm{x_{k+1}-y_{k+1}}^2 \le \frac{8\rbar[k]^2}{c_{k-1}^2}$, allow us, with some more algebra, to bound the last two summands by $\frac{9\q[k]\rbar[k]^2}{c_k (s+\Q[k])}$.
 From here, the proof proceeds identically to the $\TDoG$ analysis~\citep[Section~3.3]{ivgi2023dog}: we get that $\sum_{k=0}^t \frac{9\q[k]^2\rbar[k]}{c_k (s+\Q[k])} \le \frac{\rbar[t]^2}{16}$ 
by the choice of $c_t$, and substituting back obtain that $\d[t+1]^2 \le \d^2 + \frac{\rbar[t]^2}{16}$, which by straightforward induction implies the desired bounds on $\dbar[t]$ and $\rbar[t]$. 

\subsection{Rate of convergence in the noiseless case}\label{subsec:noiseless-rate}
With the conditional stability guarantee of \Cref{prop:sochastic-stability} in place, we are ready to face a central challenge: finding step sizes $\eta_{x,t},\eta_{y,t}$ that satisfy the proposition's conditions but still lead to good rates of convergence in the smooth case. Our solution is (recalling the notation $\M[t] = \max_{k\le t}\{\alpha_k^2 \norm{\m[k]}^2\}$):
\begin{equation}
	\label{eq: step size option 2}
	\begin{aligned}
		\eta_{x,t} &= \frac{\rbar[t]}{ 12 \logp^2\prn*{ \frac{\norm{\m[0]}^2+ \Q[t-1]}{\norm{\m[0]}^2} } \sqrt{ \max\crl*{ \norm{\m[0]}^2+ \Q[t-1], \M[t] } }} \\ %
		\eta_{y,t} &= \frac{\rbar[t]}{ 12 \logp^2\prn*{ \frac{\norm{\m[0]}^2+ \Q[t]}{\norm{\m[0]}^2} } \sqrt{ \max\crl*{ \norm{\m[0]}^2+ \Q[t], \M[t] } }}
		.
	\end{aligned}
\end{equation}

Clearly, the step sizes~\eqref{eq: step size option 2} satisfy the first condition in \Cref{prop:noiseless-stability} with $s=\norm{\m[0]}^2$. To see why the second condition holds, note that, since $\sqrt{\M[t]} \ge \alpha_t \norm{\m[t]}$, we have $\eta_{x,t}\le \frac{\rbar[t]}{c_t \alpha_t \norm{\m[t]}}$. By the contractive property of projections, we therefore have
\[
	\norm{x_{t+1}-y_t} \le \eta_{x,t} \alpha_t \norm{\m[t]} \le \frac{\rbar[t]}{c_t} \le \frac{2\rbar[t]}{c_t}. 
\]
A similar argument also shows that $\norm{x_{t+1}-y_{t+1}}\le \frac{2\rbar[t]}{c_t}$, fulfilling the conditions of \Cref{prop:noiseless-stability} (see \Cref{lem: rbar grwoth}).  

Now the question becomes: how does the introduction of $\M[t]$ into the step size affect suboptimality? In the non-smooth case the effect is minimal, as we anyway bound $\Q[t]$ with $O(\lip^2 t^3)$, and $\M[t] = O(\lip^2 t^2)$ is of a lower order. In the smooth case, however, $\M[t]$ is potentially more harmful, since while \Cref{prop:noiseless-subopt} allows us to cancel the dependence on $\Q[t]$ by setting $s=c_t$, it leaves $\M[t]$ hanging in the numerator, yielding $f(\xbar[t])-f(\xopt) \le \O{\frac{1}{\alpha_t^2}\prn*{c_t^{3/2}\beta \d[0]^2 + c_t\d[0] \sqrt{\M[t]}}}$.  

Fortunately, smoothness allows us to relate $\M[t]$ back to the optimality gap $f(\xbar[t])-f(\xopt)$. In particular, in the unconstrained setting $\kset=\R^n$ we have
\[
\norm{\m[t]}^2 \le 2\norm{\g[t]-\m[t]}^2 + 2\norm{\g[t]}^2 \le 
2\Q[t]/\alpha_t^2 + 4\L \brk*{f(\xbar[t])-f(\xopt)}, 	
\] 
where the last transition used that $\g[t] =\exg[t]$ in the noiseless setting. 
Combining this bound with \Cref{prop:noiseless-subopt}, we obtain
 \[
	f(\xbar[t])-f(\xopt) \le \O{\frac{c_t^{3/2}\beta \d[0]^2 + \sqrt{c_t^2\L\d[0]^2 \max_{k\le t} \alpha_k^2 \brk*{f(\xbar[k])-f(\xopt)}}}{\alpha_t^2}},\]
from which $f(\xbar[t])-f(\xopt) \le\O{\frac{c_t^2 \L \d[0]^2}{\alpha_t^2}}$ follows by induction. Thus
we arrive at our final guarantee in the noiseless case: \Cref{thm:noiseless-main} (see full proof in \Cref{app: noiseless-main}).

\begin{theorem}\label{thm:noiseless-main}
	In the noiseless setting (\Cref{ass:noiseless}) with $\kset = \R^n$ and $\reps \le \d[0]$,  
	using the step sizes \cref{eq: step size option 2}, we get that $\dbar[T] \le 2 \d$, $\rbar[T] \le 4 \d$ and, for $\tau = \argmax_{t < T} \sum_{i \le t} \frac{\rbar[i]}{\rbar[t+1]}$, the suboptimality is
	\begin{align*}
		f\prn{\xbar[\tau]} - f\prn{x\opt}
		\le \O{c_{\reps, T} 
			\min\crl*{ \frac{\beta \d[0]^2}{T^2} , \frac{ \lip \d[0] }{\sqrt{T}}  
			}}
		,
	\end{align*}
	where $c_{\reps, T} = \logp^4\prn*{1 + \frac{T \min\crl*{ \L \d^2, \lip \d } }{f\prn{x_0} - f\prn{x\opt}} } \logp^2\prn*{ \frac{\d}{\reps} }$.
\end{theorem}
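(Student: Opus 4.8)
The plan is to stitch together the two workhorse results already in hand---\Cref{prop:noiseless-subopt} (general suboptimality) and \Cref{prop:noiseless-stability} (conditional stability)---instantiated at the concrete step sizes \eqref{eq: step size option 2}, and then to close a self-referential smoothness inequality by induction. Throughout I write $s=\norm{\m[0]}^2$ for the stability parameter of \Cref{prop:noiseless-stability}, and I reserve a separate symbol $s'$ for the free parameter of \Cref{prop:noiseless-subopt}. The first step is to discharge the stability claims $\dbar[T]\le 2\d$ and $\rbar[T]\le 4\d$ by verifying the hypotheses of \Cref{prop:noiseless-stability} with this $s$. Reading off the denominators of \eqref{eq: step size option 2} gives $\Gy[t]=c_t^2\max\crl*{s+\Q[t],\M[t]}\ge c_t^2(s+\Q[t])$ and $\Gx[0]=144\max\crl*{s,\M[0]}\ge 144s$ (using $\Q[-1]=0$ and $\logp(1)=1$), which is condition $(i)$, while the monotonicity $\Gx[0]\le\Gy[0]\le\Gx[1]\le\cdots$ required by \eqref{eq:step-size-form} is immediate. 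Condition $(ii)$ follows from the contraction property of the projection together with $\sqrt{\M[t]}\ge\alpha_t\norm{\m[t]}$, which forces $\eta_{x,t}\alpha_t\norm{\m[t]}\le\rbar[t]/c_t$ and hence $\norm{x_{t+1}-y_t}\le 2\rbar[t]/c_t$, symmetrically for $\norm{x_{t+1}-y_{t+1}}$. Since $\reps\le\d$, \Cref{prop:noiseless-stability} then yields both iterate bounds, after which I may use $\rbar[t]+\d=\O{\d}$ freely.

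Next I would invoke \Cref{prop:noiseless-subopt}, whose only precondition $\Gx[t]\ge\Q[t-1]$ holds because $\Gx[t]=c_t^2\max\crl*{s+\Q[t-1],\M[t]}\ge\Q[t-1]$, at two values of its free parameter. Taking $s'=0$ gives the \emph{non-smooth branch}: bounding $\Q[t]=\O{\lip^2 t^3}$ and $\M[t]=\O{\lip^2 t^2}$ via $\alpha_k\le k+1$ turns the numerator into $\O{\d c_t\lip t^{3/2}}$, which will produce the $\lip\d/\sqrt{T}$ rate after the denominator is evaluated. Taking $s'\asymp c_t$ gives the \emph{smooth branch}: here $\max\crl*{\Gy[t],\Q[t]}=\Gy[t]$ and the elementary inequality $\sqrt{\max\crl*{s+\Q[t],\M[t]}}-\sqrt{\Q[t]}\le\sqrt{s}+\sqrt{\M[t]}$ collapses the bracket $\brk*{\sqrt{\Gy[t]}-s'\sqrt{\Q[t]}}_+$, leaving (after absorbing $\sqrt{s}=\alpha_0\norm{\m[0]}\le\sqrt{\M[t]}$) the bound $f(\xbar[t])-f(\xopt)=\O{\alpha_t^{-2}\prn*{c_t^{3/2}\L\d^2+c_t\d\sqrt{\M[t]}}}$.

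The crux is then to tame the stray $\sqrt{\M[t]}$ via smoothness. On $\kset=\R^n$ I would combine $\norm{\m[t]}^2\le 2\norm{\g[t]-\m[t]}^2+2\norm{\g[t]}^2$ with $\norm{\g[t]}^2\le 2\L\prn*{f(\xbar[t])-f(\xopt)}$ to obtain $\M[t]\le 2\Q[t]+4\L\max_{k\le t}\alpha_k^2\prn*{f(\xbar[k])-f(\xopt)}$; substituting and controlling the resulting $\sqrt{\Q[t]}$ contribution through the smoothness-leverage estimate \eqref{eq:noiseless-psketch-smoothness-leverage} (equivalently, the per-step bound $\q[k]\le 4\L^2\norm{x_{k+1}-y_k}^2$) yields a self-referential inequality $G_t\le\O{\alpha_t^{-2}\prn*{c_t^{3/2}\L\d^2+\sqrt{c_t^2\L\d^2\max_{k\le t}\alpha_k^2 G_k}}}$ with $G_t=f(\xbar[t])-f(\xopt)$, which a short induction on $\max_{k\le t}\alpha_k^2 G_k$ closes to $G_t=\O{c_t^2\L\d^2/\alpha_t^2}$. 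Finally I would evaluate the denominator at $\tau=\argmax_{t<T}\sum_{i\le t}\rbar[i]/\rbar[t+1]$: by \cite[Lemma~3]{ivgi2023dog} this maximum is $\Omega\prn*{T/\logp(\rbar[T]/\reps)}=\Omega\prn*{T/\logp(\d/\reps)}$, so $\alpha_\tau^2\gtrsim T^2/\logp^2(\d/\reps)$; this converts both branches into the claimed $\min\crl*{\L\d^2/T^2,\lip\d/\sqrt{T}}$ rate, and collecting $c_\tau^2=144\logp^4\prn*{(s+\Q[\tau])/s}$ together with the $\logp^2(\d/\reps)$ factor---after bounding $\Q[\tau]/s$ in terms of $T\min\crl*{\L\d^2,\lip\d}/\prn*{f(x_0)-f(\xopt)}$---produces exactly $c_{\reps,T}$.

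I expect the main obstacle to be the smooth branch, precisely because of the tension between stability and rate: the denominator $\Gy[t]$ must contain $\M[t]$ to enforce iterate stability (condition $(ii)$ of \Cref{prop:noiseless-stability}), yet that same $\M[t]$ is what refuses to cancel in \Cref{prop:noiseless-subopt} and must instead be recycled into the optimality gap. Making the resulting self-bounding induction close with only polylogarithmic overhead---rather than a factor growing polynomially in $T$---is the delicate point, and it is also where one must confirm that the auxiliary $\sqrt{\Q[t]}$ term does not spoil the accelerated $1/T^2$ rate, which is exactly the role of the smoothness-leverage bound \eqref{eq:noiseless-psketch-smoothness-leverage} controlling the total extragradient movement.
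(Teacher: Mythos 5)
Your proposal follows the paper's own proof essentially step for step: the same verification of the conditions of \Cref{prop:noiseless-stability} for the step sizes \eqref{eq: step size option 2} (via projection contraction and $\sqrt{\M[t]}\ge\alpha_t\norm{\m[t]}$, i.e.\ \Cref{lem: rbar grwoth}), the same two instantiations of \Cref{prop:noiseless-subopt} ($s'=0$ for the Lipschitz branch, $s'\asymp c_t$ for the smooth branch), the same smoothness argument converting $\M[t]$ back into the optimality gap, the same self-bounding induction (which the paper packages as \Cref{lem: remove sqrt sub opt}), and the same use of \Cref{lem:bound-a-ratios} together with convexity, $f(x_0)-f(\xopt)\le \d\norm{\exm[0]}$, to assemble $c_{\reps,T}$.

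One step, however, would fail as literally sequenced. In your smooth branch you first collapse the bracket via $\brk*{\sqrt{\Gy[t]}-s'\sqrt{\Q[t]}}_{+}\le c_t\prn*{\sqrt{s}+\sqrt{\M[t]}}$ with $s'=c_t$ (thereby discarding the negative $\sqrt{\Q[t]}$ contribution), then substitute $\sqrt{\M[t]}\le\sqrt{2\Q[t]}+2\sqrt{\L\max_{k\le t}\alpha_k^2 G_k}$ where $G_k=f(\xbar[k])-f(\xopt)$, and finally propose to control the re-introduced $c_t\d\sqrt{\Q[t]}$ term ``through the smoothness-leverage estimate \eqref{eq:noiseless-psketch-smoothness-leverage}.'' That estimate is internal to the proof of \Cref{prop:noiseless-subopt}: the negative movement term $-\sum_k\norm{x_{k+1}-y_k}^2\sqrt{\Q[k]}$ it lower-bounds has already been spent to generate the $-s'\sqrt{\Q[t]}$ inside the bracket, so it cannot be invoked a second time on the proposition's output. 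Nor can the leftover term simply be dropped: in the smooth case $\Q[t]=O(\L^2\d^2 t)$, so $c_t\d\sqrt{\Q[t]}/\alpha_t^2$ is of order $1/T^{3/2}$, which would spoil the accelerated rate. The repair is exactly the paper's move: choose the constant in $s'\asymp c_t$ large enough \emph{before} collapsing anything, so that a residual $-\Omega(c_t)\sqrt{\Q[t]}$ survives to cancel the $\sqrt{\Q[t]}$ arising from the $\M[t]$-to-gap conversion. Concretely, the paper takes $s=2c_{t+1}$, obtains $\sqrt{\Gy[t]}-s\sqrt{\Q[t]}\le c_{t+1}\prn*{\sqrt{\M[t]}-\sqrt{\Q[t]}}$, and only then applies $\sqrt{\M[t]}\le\sqrt{\Q[t]}+\alpha_{\kappa_t}\sqrt{2\L}\sqrt{G_{\kappa_t}}$; with your squared decomposition, $s'\ge(1+\sqrt{2})\,c_t$ works. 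Since \Cref{prop:noiseless-subopt} holds for every $s'\ge0$ at a cost of only a constant factor in $s'^{3/2}\L\d^2$, this is a one-line adjustment--and your closing paragraph shows you see the tension--but the cancellation must be routed through the free parameter $s'$, not through an external appeal to \eqref{eq:noiseless-psketch-smoothness-leverage}. Two further (minor) omissions: the degenerate regime $T\le 2\logp(\rbar[T]/\reps)$, which the paper dispatches trivially at the end, and the bounded-ratio condition $\rbar[t+1]/\rbar[t]\le b$ required by \Cref{lem: remove sqrt sub opt}, which follows from \Cref{lem: rbar grwoth}.
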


\section{Analysis in the stochastic case}\label{sec:stochastic}
In this section, we extend the $\UDoG$ guarantees to the noisy case. We start by assuming that the gradient noise is bounded, a setting that captures most of the remaining technical challenges. We then generalize our results to sub-Gaussian noise by means of a black-box reduction~\cite{attia2023sgd}. Finally, we specialize the $\UDoG$ guarantee for mini-batches of bounded gradient estimates and conclude with a discussion of the (weak) dependence of our result on problem parameter bounds. Throughout this section, we denote the empirical variance at time $t$ by
\begin{equation}
	\empVar[t] \defeq \frac{1}{t+1} \sum_{k=0}^{t} \prn*{\norm{\g - \exg}^2 + \norm{\m - \exm}^2}.
	\label{eq:emp-var-def}
\end{equation}
We also recall the notation
\[
	\theta_{t,\delta} \defeq \log \frac{60 \log \prn*{6t}}{\delta}.
\]

\subsection{Analysis with bounded noise}
We formalize the bounded noise assumption as follows.
\begin{assumption}\label{ass:bounded-noise}
In addition to \Cref{ass:global}, we assume that $\norm{\gradientOracle{x} - \grad f(x)} \le \bfunc(x)$ with probability 1 for all $x\in\kset$, for some (known\footnote{We may view $\bfunc$ as a coarse upper bound on the true noise magnitude, as it only affects low order terms in our bounds.}) function $\bfunc:\kset\to \R_+$.
\end{assumption}
\noindent 
For the iterates of $\UDoG$ we define
\begin{equation}
	\label{eq: b_t def}
	\bk{t} \defeq \bfunc(\xbar[t])~~\mbox{and}~~\bkbar{t} \defeq \max\crl[\Big]{\max_{i\le t} \bk{i}, \bfunc(\zbar[0])}.
\end{equation}
With the assumption and notation in place, we state the stochastic equivalent of \Cref{prop:noiseless-subopt} in the following (see proof in \Cref{app:stochastic-subopt-proof}). 

\newcommand{\noiselessbound}{\text{RHS}_{\cref{eq:noiseless-subopt}}}

\begin{proposition}\label{prop:stochastic-subopt}
	In the bounded noise setting (\Cref{ass:bounded-noise}), suppose the \UDoG step sizes~\eqref{eq:step-size-form} satisfy $\Gx[t] \ge \Q[t-1]$ for every $t\ge 0$. Then for any $\NB > 0$, $T\in\N$, and $\delta\in(0,1)$, with probability at least $1-\delta-\P\brk{ \bkbar{T-1} > \NB }$ we have, for all $t<T$ and $s\ge0$, 
	\begin{align*}
		f\prn{\xbar[t]} - f\prn{x\opt}
		&\le \O{ \noiselessbound + \frac{ \prn{1+s} \prn*{ \rbar[t+1] + \d } \sqrt{ t^3 \theta_{t+1,\delta} \empVar[t] + \prn*{ t \theta_{t+1,\delta} \NB }^2 } }{ \prn*{\sum_{k=0}^{t} \rbar[k]/\rbar[t+1] }^2 } }
	\end{align*}
	where $\noiselessbound = \frac{ s^{3/2}  \L \prn*{ \rbar[t+1] + \d}^2 + \prn*{ \rbar[t+1] + \d} \brk*{ \sqrt{\max\{\Gy[t],\Q[t]\}} - s \sqrt{\Q[t]} }_{+} }{ \prn*{\sum_{k=0}^{t} \rbar[k]/\rbar[t+1] }^2}$ as in \Cref{prop:noiseless-subopt}.
\end{proposition}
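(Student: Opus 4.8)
The plan is to follow the proof of \Cref{prop:noiseless-subopt} line by line and pinpoint the exactly two places where the noiseless identity $\g[k]=\exg[k]$ (and $\m[k]=\exm[k]$) was invoked, replacing each with a stochastic remainder that is later controlled. The algebraic manipulation of the weighted regret $\mc{R}_t=\sum_{k=0}^t\w[k]\inner{\g[k]}{x_{k+1}-\xopt}$ into the telescoping-plus-movement form is purely a consequence of the update rule and of the weights $\w[k]=\alpha_k\rbar[k]$, so it carries over verbatim, as does the \DoG telescoping bound on $\sum_k(\d[k]^2-\d[k+1]^2)\sqrt{\Gy[k]}$ and the denominator estimate $\sum_k\w[k]\ge\tfrac12\rbar[t+1]\prn*{\sum_{k=0}^t\rbar[k]/\rbar[t+1]}^2$ from \eqref{eq:noiseless-psketch-sum-of-wl}. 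The two noiseless-only steps are (i) the final equality in the online-to-batch conversion \eqref{eq:noiseless-psketch-online-to-batch}, and (ii) the smoothness leverage \eqref{eq:noiseless-psketch-smoothness-leverage}, whose lower bound $\norm{x_{k+1}-y_k}^2\ge\q[k]/(4\L^2)$ silently identifies the stochastic difference $\norm{\g[k]-\m[k]}$ inside $\q[k]$ with the true difference $\norm{\exg[k]-\exm[k]}$ that $\L$-smoothness actually controls.

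First I would redo the online-to-batch step retaining only convexity: $f(\xbar[t])-f(\xopt)\le\frac{1}{\sum_k\w[k]}\sum_k\w[k]\inner{\exg[k]}{x_{k+1}-\xopt}=\frac{\mc{R}_t}{\sum_k\w[k]}+\frac{\mathcal{N}_t}{\sum_k\w[k]}$, where $\mathcal{N}_t\defeq\sum_{k=0}^t\w[k]\inner{\exg[k]-\g[k]}{x_{k+1}-\xopt}$. Since $x_{k+1}$, $\w[k]$, and $\xbar[k]$ are all determined before $\g[k]\sim\gradientOracle{\xbar[k]}$ is drawn and $\E[\g[k]\mid \filt_k]=\exg[k]$, the sequence $\mathcal{N}_t$ is a martingale for the natural filtration $\filt_k$ (everything through the draw of $\m[k]$). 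The deterministic part $\mc{R}_t/\sum_k\w[k]$ I would bound exactly as in \Cref{prop:noiseless-subopt}, except that the smoothness leverage now yields only $\norm{x_{k+1}-y_k}^2\ge\alpha_k^2\norm{\exg[k]-\exm[k]}^2/(4\L^2)$ with the \emph{true} difference; substituting $\norm{\g[k]-\m[k]}^2\le 2\norm{\exg[k]-\exm[k]}^2+4\prn*{\norm{\g[k]-\exg[k]}^2+\norm{\m[k]-\exm[k]}^2}$ reconstructs $\Q[t]$ up to the additive noise sum $\sum_k\alpha_k^2\prn*{\norm{\g[k]-\exg[k]}^2+\norm{\m[k]-\exm[k]}^2}\le(t+1)^3\empVar[t]=O(t^3\empVar[t])$, using $\alpha_k\le t+1$ and the definition \eqref{eq:emp-var-def} of $\empVar[t]$. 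This recovers $\noiselessbound$ from the deterministic part, while the cancellation in the leverage step leaves a deterministic residual of order $\sqrt{t^3\empVar[t]}$ scaled by the splitting parameter $s$ (producing the $s$ in the factor $1+s$), since the splitting uses $S=s\,\rbar[t+1](\rbar[t+1]+\d)$.

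The heart of the argument, and where I expect the main difficulty, is bounding the martingale $\mathcal{N}_t$ with high probability in a \emph{time-uniform} fashion. Its per-step conditional variance is at most $\w[k]^2\norm{x_{k+1}-\xopt}^2\,\E[\norm{\exg[k]-\g[k]}^2\mid \filt_k]$, and on the event $\{\bkbar{T-1}\le\NB\}$ each increment is bounded by $\w[k](\rbar[t+1]+\d)\NB$. Using $\w[k]\le(k+1)\rbar[t+1]$ and $\norm{x_{k+1}-\xopt}\le\rbar[t+1]+\d$, I would apply an empirical-Bernstein / Freedman inequality that holds simultaneously for all $t<T$, replacing the true conditional variances by the empirical variance $\empVar[t]$ and yielding a deviation of order $\rbar[t+1](\rbar[t+1]+\d)\sqrt{t^3\theta_{t+1,\delta}\empVar[t]+\prn*{t\,\theta_{t+1,\delta}\NB}^2}$. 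The $\theta_{t+1,\delta}$ factor and the probability $\delta$ arise from the time-uniform union over $t<T$, and the additive $\P[\bkbar{T-1}>\NB]$ accounts for stepping outside the truncation event. The care here is in fixing the filtration so that $\w[k]$ and $x_{k+1}$ are predictable when $\g[k]$ is sampled, and in obtaining an empirical rather than a priori variance so the bound stays in terms of $\empVar[t]$; this martingale contribution is independent of $s$ and supplies the $1$ in $1+s$.

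Finally I would assemble the three pieces: the deterministic $\noiselessbound$, the $s$-scaled leverage residual of order $\sqrt{t^3\empVar[t]}$, and the martingale deviation, then divide by $\sum_k\w[k]\ge\tfrac12\rbar[t+1]\prn*{\sum_{k=0}^t\rbar[k]/\rbar[t+1]}^2$. Since $\theta_{t+1,\delta}\ge1$, the deterministic residual is dominated by the martingale term under the common root, and one factor of $\rbar[t+1]$ cancels against the denominator, producing exactly the claimed extra summand $\frac{(1+s)(\rbar[t+1]+\d)\sqrt{t^3\theta_{t+1,\delta}\empVar[t]+(t\,\theta_{t+1,\delta}\NB)^2}}{\prn*{\sum_{k=0}^t\rbar[k]/\rbar[t+1]}^2}$ on top of $\noiselessbound$. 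The dominant obstacle throughout is the concentration step, namely securing an empirical-variance, time-uniform deviation bound valid under only the local truncation $\bkbar{T-1}\le\NB$ and with data-dependent weights and iterates.
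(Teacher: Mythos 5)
Your proposal is correct and follows the paper's proof essentially step for step: the same decomposition of $\sum_{k\le t}\w[k]\inner{\exg[k]}{x_{k+1}-\xopt}$ into the noisy weighted regret plus the martingale sum $\sum_{k\le t}\w[k]\inner{\exg[k]-\g[k]}{x_{k+1}-\xopt}$, the same time-uniform empirical-Bernstein concentration under the truncation event $\crl*{\bkbar{T-1}\le\NB}$ (the paper's \Cref{lem: bound on noise mul x dist}, built on \Cref{cor:product-mg-concentration}), the same $\O{s\sqrt{t^3\empVar[t]}}$ price for the gap between the noisy $\Q[t]$ and the true-gradient differences controlled by smoothness, and the same final assembly. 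The one detail you gloss over is the sequence matching required by the discrete-integral cancellation---your negative term carries $\sqrt{\Q[k]}$ inside the sum while your converted positive term involves $\sum_{k\le t}\alpha_k^2\norm{\exg[k]-\exm[k]}^2$---which the paper handles by working throughout with $\minQ[t]=\sum_{k\le t}\alpha_k^2\min\crl*{\norm{\g[k]-\m[k]}^2,\norm{\exg[k]-\exm[k]}^2}$, a sequence dominated by both; one further application of your own pointwise inequality reduces your route to exactly that device.
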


\Cref{prop:stochastic-subopt} is a fairly straightforward extension of its noiseless counterpart. The bound~\eqref{eq:noiseless-psketch-smoothness-leverage} continues to hold if we replace $\Q[t]$ with $\minQ[t] = 
\sum_{k=0}^t \alpha_k^2\min\crl{\norm{\g[k]-\m[k]}^2, \norm{\exg[k]-\exm[k]}^2}$. Proceeding as in the proof of \Cref{prop:noiseless-subopt}, we conclude that 
\[
	f\prn{\xbar[t]} - f\prn{x\opt} \le \O{
		\noiselessbound + \tfrac{
			s(\rbar[t+1]+\d[0])\prn*{\Q[t]^{1/2} - 2\minQ[t]^{1/2}}
			+ \sum_{k=0}^{t}\w[k] \inner{\exg[k] - \g[k]}{x_{k+1}-\xopt}
			}{ \prn*{\sum_{k=0}^{t} \rbar[k]/\rbar[t+1] }^2 }
	}.
\]
We show that $\Q[t]^{1/2} - 2\minQ[t]^{1/2} \le \O{\sqrt{t^3 \empVar[t]}}$ by 
straightforward manipulation. Furthermore, using a time-uniform empirical-Bernstein-type concentration bound~\cite{howard2021time,ivgi2023dog} (\Cref{lem: bound on noise mul x dist}) to show that (with the appropriate high probability) the martingale difference sum
 $\sum_{k=0}^{t}\w[k] \inner{\exg[k] - \g[k]}{x_{k+1}-\xopt}$ is bounded by $\O{\rbar[t]\dbar[t+1] \sqrt{ t^3 \theta_{t+1,\delta} \empVar[t] + \prn*{ t \theta_{t+1,\delta} \NB }^2 }}$. 

Next, we extend our iterate stability guarantee to the stochastic setting (see proof in \cref{app:sochastic-stability-proof}).
\begin{proposition}\label{prop:sochastic-stability}
	In the bounded noise setting \Cref{ass:bounded-noise}, let $s>0$, $T\in\N$ and $\delta\in(0,1)$,  and define $c_t = 400 \theta_{T,\delta} \logp^2\prn*{ \frac{s + \Q[t]}{s} }$.
	Suppose that $\reps\le\d$ and the \UDoG step sizes~\eqref{eq:step-size-form} satisfy, with probability 1, for all $t\ge0$: $(i)$ $\Gy[t] \ge c_{t}^2 (s+  \Q[t])$ (with $\Gx[0] \ge 400^2 \theta_{T,\delta}^2 s$), $(ii)$  $\max\{\norm{x_{t+1}-y_t}, \norm{y_{t+1}-x_{t+1}}\} \le \frac{2\rbar[t]}{c_{t-1}}$, $(iii)$ $\sqrt{\Gy[t]} \ge c_t \alpha_t \max\{\norm{\exg[t]-\g[t]},\norm{\exg[t]-\m[t]}\}$, and $(iv)$  $\eta_{y,t}$ is independent of $\g[t]$ given $x_0, \ldots, x_t$. Then, we have with probability of at least $1-\delta$,
	\begin{align*}
		\dbar[t] \le 2 \d ~~ \text{and} ~~ \rbar[t] \le 4 \d
		~~\mbox{for all } t<T
		.
	\end{align*}
\end{proposition}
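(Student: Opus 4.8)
The plan is to follow the noiseless stability argument behind \Cref{prop:noiseless-stability} almost verbatim, isolate the single place where it invoked the noiseless identity $\g[k]=\exg[k]$ (the online-to-batch step $(\star)$ in \eqref{eq:noiseless-stability-psketch-regret'}), and replace that step by a high-probability martingale bound enabled by the new conditions $(iii)$ and $(iv)$. Concretely, I start from the same modified regret $\mc{R}'_t \defeq \sum_{k=0}^t \eta_{y,k}\alpha_k \inner{\g[k]}{x_{k+1}-\xopt}$ and decompose it along the noise, writing $\mc{R}'_t = A_t - N_t$ with $A_t \defeq \sum_{k=0}^t \eta_{y,k}\alpha_k\inner{\exg[k]}{x_{k+1}-\xopt}$ and $N_t \defeq \sum_{k=0}^t \eta_{y,k}\alpha_k\inner{\exg[k]-\g[k]}{x_{k+1}-\xopt}$. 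The anytime online-to-batch conversion gives $A_t\ge 0$ exactly as in the noiseless case, since it relies only on convexity and the averaging structure and not on the noise; hence $-\mc{R}'_t \le N_t$.

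The algebraic manipulation of $\mc{R}'_t$ uses only the iterate update rules, which are driven by the stochastic gradients themselves, and is therefore unchanged; combined with conditions $(i)$--$(ii)$ it bounds the telescoping remainder by $\sum_{k\le t}\frac{9\q[k]\rbar[k]^2}{c_k(s+\Q[k])}$ just as before. Telescoping the $\d[k]^2-\d[k+1]^2$ terms and using $-\mc{R}'_t\le N_t$ then yields
\[
	\d[t+1]^2 \le \d^2 + \sum_{k=0}^{t}\frac{9\q[k]\rbar[k]^2}{c_k(s+\Q[k])} + N_t .
\]
The deterministic budget $\sum_{k\le t}\frac{9\q[k]\rbar[k]^2}{c_k(s+\Q[k])}\le \frac{\rbar[t]^2}{16}$ is a statement purely about the sequences $\q[k],\Q[k]$ and carries over from the \TDoG analysis \cite[Section~3.3]{ivgi2023dog} by the choice of $c_t$, so everything now reduces to controlling $N_t$.

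To bound $N_t$, I observe that each of its summands is a martingale difference with respect to the filtration generated by the iterates through $x_{k+1}$: given this information $\exg[k]=\nabla f(\xbar[k])$ is determined, $\eta_{y,k}$ carries no information about the noise in $\g[k]$ (condition $(iv)$), while $\E[\g[k]\mid\cdot]=\exg[k]$. Condition $(iii)$ gives $\eta_{y,k}\alpha_k\norm{\exg[k]-\g[k]} = \tfrac{\rbar[k]\alpha_k\norm{\exg[k]-\g[k]}}{\sqrt{\Gy[k]}} \le \tfrac{\rbar[k]}{c_k}$, so each increment is at most $\tfrac{\rbar[k]}{c_k}\norm{x_{k+1}-\xopt}$ and, by \Cref{ass:bounded-noise}, its conditional variance is at most $\tfrac{\rbar[k]^2}{c_k^2}\norm{x_{k+1}-\xopt}^2$. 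Applying the time-uniform empirical-Bernstein bound \Cref{lem: bound on noise mul x dist}, which holds simultaneously for all $t<T$ on one event of probability at least $1-\delta$, controls $N_t$ by a predictable-variance term of order $\sqrt{\theta_{T,\delta}\sum_{k\le t}\tfrac{\rbar[k]^2}{c_k^2}\norm{x_{k+1}-\xopt}^2}$ plus a maximal term of order $\theta_{T,\delta}\max_{k\le t}\tfrac{\rbar[k]}{c_k}\norm{x_{k+1}-\xopt}$. The enlarged constant $400\,\theta_{T,\delta}$ in $c_t$ (replacing the $12$ of \Cref{prop:noiseless-stability}) is calibrated precisely so that these two terms are absorbable into the $\tfrac{\rbar[t]^2}{16}$ budget.

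The main obstacle is the self-referential nature of this last step: the variance proxy in $N_t$ involves $\norm{x_{k+1}-\xopt}\le \rbar[k]+\dbar[k]$, i.e.\ exactly the quantities the proposition seeks to bound, so the concentration estimate is only useful once stability is already known. I would break this circularity as in the \TDoG stochastic analysis: fix the single anytime-valid event of probability $\ge 1-\delta$ on which \Cref{lem: bound on noise mul x dist} holds for all $t<T$, and on that event argue by induction on $t$. Under the inductive hypothesis $\dbar[k]\le 2\d$ and $\rbar[k]\le 4\d$ for $k\le t$, we have $\norm{x_{k+1}-\xopt}=O(\d)$, so the two concentration terms become $O\!\big(\tfrac{\rbar[t]\d}{c_t}\sqrt{\theta_{T,\delta}}\big)$ and $O\!\big(\tfrac{\rbar[t]\d}{c_t}\theta_{T,\delta}\big)$, which the $\theta_{T,\delta}$ factors in $c_t$ drive below $\tfrac{\rbar[t]^2}{16}$. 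Substituting into the recursion gives $\d[t+1]^2\le \d^2+\tfrac{\rbar[t]^2}{8}$, whence $\dbar[t+1]\le 2\d$ under $\rbar[t]\le 4\d$, and the definition of $\rbar$ together with $\norm{\cdot-x_0}\le \norm{\cdot-\xopt}+\d$ closes the induction with $\rbar[t+1]\le 4\d$. This yields the claimed bounds for all $t<T$ on the good event.
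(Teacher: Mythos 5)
Your high-level skeleton does match the paper's: the noiseless argument carries over verbatim up to the recursion \cref{eq: bound d t using prev steps}, conditions $(iii)$--$(iv)$ are indeed what enable a martingale bound on the noise term $N_t \defeq \sum_{k\le t}\eta_{y,k}\alpha_k\inner{\exg[k]-\g[k]}{x_{k+1}-\xopt}$, and the final step is the same deterministic induction (\Cref{lem: d bound recursive}). However, your concentration step has a genuine gap, and it is the crux of the proposition. Your variance proxy does not telescope: you bound each increment's conditional variance by $\frac{\rbar[k]^2}{c_k^2}\norm{x_{k+1}-\xopt}^2$, so even under the inductive hypothesis the variance sum is of order $\sum_{k\le t}\frac{\rbar[k]^2\d^2}{c_k^2}$, which can be $\Theta\prn[\big]{t\,\d^4/c_t^2}$ since $c_k = 400\,\theta_{T,\delta}\logp^2\prn{(s+\Q[k])/s}$ need not grow with $k$. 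Your claim that the resulting term is $O\prn[\big]{\frac{\rbar[t]\d}{c_t}\sqrt{\theta_{T,\delta}}}$ silently replaces a sum of $t+1$ comparable terms by its last term, losing a factor of order $\sqrt{t}$ that cannot be absorbed into $\prn{\d+\rbar[t]/4}^2$ for large $T$. The paper's \Cref{lem: uni noise bound} avoids this by exploiting the empirical-Bernstein feature of \Cref{cor:product-mg-concentration}: it centers the increments at the \emph{predictable} proxy $\hat{X}_k \propto \inner{\exg[k]-\m[k]}{x_{k+1}-\xopt}$, so that $X_k-\hat{X}_k \propto \inner{\g[k]-\m[k]}{x_{k+1}-\xopt}$ and $(X_k-\hat{X}_k)^2 \le \q[k]/\Gy[k] \le \frac{1}{(400\theta_{T,\delta})^2}\cdot\frac{\q[k]}{(s+\Q[k])\logp^2\prn{(s+\Q[k])/s}}$; by \Cref{lem:bound-a-k-infinite-sum} this sum is at most a constant. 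This is precisely where condition $(i)$'s logarithmic factor is spent, and no analogous telescoping is possible with your centering, because $\norm{\exg[k]-\g[k]}^2$ does not appear in $\Q[k]$.

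Your resolution of the circularity is also not rigorous as stated. \Cref{cor:product-mg-concentration} requires increments bounded with probability 1 by predictable quantities $C_k$, and its failure probability includes $\P(\exists t: C_t > c)$; with your increments, $C_k = \frac{\rbar[k]}{c_k}\norm{x_{k+1}-\xopt}$ admits no a-priori constant bound, so the ``single anytime-valid event of probability $\ge 1-\delta$'' you wish to fix cannot be exhibited before stability is known: the circularity lives at the level of the event's probability, not merely its realized values, and inducting on the event does not remove it. This could be repaired with a stopping-time truncation (stop the martingale the first time stability fails), but the paper sidesteps the issue entirely: it normalizes the increments by $\dtilde[k+1] = \max_{i\le k+1}\norm{x_i-\xopt}$, so that by condition $(iii)$ they are bounded by $1/(400\theta_{T,\delta})$ with probability 1, and recovers the distance factor through the nondecreasing weights $Y_k=\rbar[k]\dtilde[k+1]$ permitted by the ``$\exists\{y_i\}$'' clause of \Cref{cor:product-mg-concentration}. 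Since $\dtilde[t+1]\le\rbar[t+1]+\d$ holds deterministically, the resulting bound $\abs{N_t} \le \frac{12\,\theta_{t+1,\delta}}{400\,\theta_{T,\delta}}\rbar[t]\prn{\rbar[t+1]+\d}$ involves only observable quantities and $\d$, so no induction is needed until the final deterministic one. A citation note: \Cref{lem: bound on noise mul x dist} is the concentration lemma for the suboptimality bound (\Cref{prop:stochastic-subopt}), with weights $\rbar[k]\alpha_k$ and failure probability $\delta+\P\prn{\bkbar{T-1}>\NB}$; the relevant tool here is \Cref{lem: uni noise bound}.
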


Conditions $(i)$ and $(ii)$ of \Cref{prop:sochastic-stability} are identical to their noiseless counterparts in \Cref{prop:noiseless-stability}, while conditions $(iii)$ and $(iv)$ are new and facilitate the application of a concentration bound to the weighed regret $\mc{R}_t'$ defined in \cref{eq:noiseless-stability-psketch-regret'}. In particular, the condition $(iv)$ ensures that $\sum_{k=0}^t \eta_{y,k} \alpha_k \inner{\g[k]-\exg[k]}{x_{k+1} - \xopt}$ is a martingale difference sequence, and condition $(iii)$ guarantees boundedness required by our concentration bound (\Cref{lem: uni noise bound}). With this high probability bound in place, the proof continues in the same vein as the noiseless case.

When searching for step sizes meeting the conditions of \Cref{prop:sochastic-stability} we encounter two challenges. First, condition $(iii)$ asks $\Gy[t]$ to be large compared to a quantity depending on the exact gradient $\exg[t]$, which we cannot access directly. We solve it using the bounds given in~\eqref{eq: b_t def}. Simply adding $c_t^2 (t+1)^2 \bkbar{t}^2 \ge c_t^2 \alpha_t^2 \bk{t}^2$ to $\Gy[t]$ guarantees that $\sqrt{\Gy[t]} \ge c_t\alpha_t \norm{\exg[t]-\g[t]}$. Moreover, using $\norm{u}^2 + \norm{v}^2 \ge \half \norm{v+u}^2$, we have
\[
	\norm{\g[t]-\m[t]}^2 + \bkbar{t}^2 \ge \norm{\g[t]-\m[t]}^2 + \norm{\exg[t]-\g[t]}^2 \ge \half  \norm{\exg[t]-\m[t]}^2.
\]
Therefore, taking $\Gy[t] = c_t^2 (s + 2\Q[t] + 2(t+1)^2 \bkbar{t}^2)$ fulfills condition $(iii)$. However, it violates condition $(iv)$ which leads us to the second challenge: how to avoid dependence on $g_t$? 
To address this challenge, we employ the somewhat unusual trick of drawing a \emph{fresh stochastic gradient} $\tg[t] \sim \gradientOracle{\xbar[t]}$ which is, by construction, independent of $\g[t]$ given $\xbar[t]$. We can now replace the forbidden $\norm{\g[t]-\m[t]}$ with the valid upper bound $2 \norm{\tg[t] - \m[t]} + 8 \bkbar{t}$ and thus satisfy conditions $(i)$ and $(iii)$ without violating condition $(iv)$. 

To satisfy condition $(ii)$ we introduce $\M[t]$ to $\Gy[t]$ as done in the noiseless setting and make another modification to ensure the monotonicity required in~\eqref{eq:step-size-form}. Writing,
\[
	\tq[t] \defeq 2\alpha_t^2 \norm{ \tg - \m }^2
	~~\mbox{,}~~
	\maxQ[t] \defeq \sum_{k=0}^{t} \max\crl*{ \q[k], \tq[k] }
	~~\mbox{and}~~
	\noisep[t] \defeq 8 \prn*{t+1}^2 \bkbar{t}^2
	, 
	\numberthis \label{def: qtilde Qbar and pt}
\]
our final step sizes are:
\begin{equation}
	\label{eq: step size option 3}
	\begin{aligned}
		\eta_{x,t} &= \frac{\rbar[t]}{ 400 \theta_{T,\delta} \logp^2\prn*{ 1 +\frac{\noisep[t-1] + \maxQ[t-1]}{\norm{\m[0]}^2 + \noisep[0]} } \sqrt{ \max\crl*{ \norm{\m[0]}^2 + \noisep[0] +  \noisep[t-1] + \maxQ[t-1], \M[t] } }} \\
		\eta_{y,t} &= \frac{\rbar[t]}{ 400 \theta_{T,\delta} \logp^2\prn*{ 1 + \frac{\noisep[t] + \tq[t] + \maxQ[t-1]}{\norm{\m[0]}^2+ \noisep[0]} } \sqrt{ \max\crl*{ \norm{\m[0]}^2 + \noisep[0] + \noisep[t] + \tq[t] + \maxQ[t-1], \M[t] } }}
		.
	\end{aligned}
\end{equation}
Similar to the \TDoG step sizes~\cite[Section~3.3]{ivgi2023dog}, our step sizes depend logarithmically on the desired confidence level $\delta$ and double-logarithmically on the maximum iteration budget $T$.

With all the pieces in place, we now state our main result (see proof in \Cref{app: sochastic-main}).
\begin{theorem}\label{thm:sochastic-main}
	In the bounded noise setting (\Cref{ass:bounded-noise}) with $\kset = \R^n$, for any $T\in\N$ and $\delta\in(0,\frac{1}{5})$, consider \UDoG with step sizes~\eqref{eq: step size option 3}. With probability at least $1-5\delta$, we have $\dbar[T] \le 2 \d$, $\rbar[T] \le 4 \d$ and for  $\tau = \argmax_{t < T} \sum_{i \le t} \frac{\rbar[i]}{\rbar[t+1]}$ and $\bstar \defeq \max_{x:\norm{x-x\opt} \le 2\d[0]}\crl*{ \bfunc(x) }$ we have
	\begin{equation*}
		f\prn{\xbar[\tau]} - f\prn{x\opt} \le \O{ 
			c_{\delta,\reps, T} 
			\prn*{ 
				\min\crl*{ \frac{\beta \d[0]^2}{T^2} , \frac{ \lip \d[0] }{\sqrt{T}}} 
				+ \frac{\d[0] \sqrt{\empVar[T]}}{\sqrt{T}} + \frac{\d[0] \bstar}{T}
			}
		}
		,\numberthis \label{eq:stochastic-main-bound}
	\end{equation*}
	where 
		$c_{\delta,\reps, T} =
		\log^2\prn*{\frac{\logp \prn*{T}}{\delta} } \logp^{4}\prn*{ 1 + T \frac{\bstar + \min\crl*{ \L\d^2 , \lip\d }}{ f(x_0)-f(\xopt) } } \logp^{2}\prn*{ \frac{\d}{\reps} }$ and $\empVar[t]$, defined in \cref{eq:emp-var-def}, is the empirical noise variance.
\end{theorem}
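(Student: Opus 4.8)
The plan is to mirror the blueprint of the noiseless result \Cref{thm:noiseless-main}: I combine the stochastic stability guarantee (\Cref{prop:sochastic-stability}) with the stochastic suboptimality bound (\Cref{prop:stochastic-subopt}), and then close an induction on the optimality gap to absorb the $\M[t]$ term that the step sizes \eqref{eq: step size option 3} introduce. Throughout I fix the step-size parameter $s = \norm{\m[0]}^2 + \noisep[0]$ and take $\NB = \bstar$ in \Cref{prop:stochastic-subopt}, union-bounding all bad events at the end to reach the stated $5\delta$ probability.

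First I would establish iterate stability by verifying that \eqref{eq: step size option 3} meets the four conditions of \Cref{prop:sochastic-stability}. As sketched in the text, conditions $(i)$ and $(iii)$ hold because the denominator contains $\noisep[t] + \tq[t] + \maxQ[t-1]$: bounding $\norm{\g[t]-\m[t]}$ by the fresh-gradient surrogate $2\norm{\tg[t]-\m[t]} + 8\bkbar{t}$ gives $\q[t] \le 4\tq[t] + 16\noisep[t]$, so this sum dominates $\Q[t]$ up to constants, while $\noisep[t] = 8(t+1)^2\bkbar{t}^2$ controls $\norm{\exg[t]-\g[t]}$ and $\norm{\exg[t]-\m[t]}$. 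Condition $(ii)$ holds by the presence of $\M[t]$ inside the maximum, exactly as in \Cref{lem: rbar grwoth}, and condition $(iv)$ holds because $\eta_{y,t}$ depends on the \emph{fresh} draw $\tg[t]$ rather than on $\g[t]$. \Cref{prop:sochastic-stability} then yields $\dbar[T]\le 2\d$ and $\rbar[T]\le 4\d$ with probability at least $1-\delta$. Crucially, on this event every query point lies within distance $2\d$ of $\xopt$, so $\bkbar{T-1}\le\bstar$ deterministically, which lets me set $\NB=\bstar$ and kill the term $\P\brk*{\bkbar{T-1} > \NB}$ in \Cref{prop:stochastic-subopt}.

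Next I would invoke \Cref{prop:stochastic-subopt} and choose its free parameter $s \asymp c_t$. Since $\Gy[t] = c_t^2\max\crl*{s + \noisep[t] + \tq[t] + \maxQ[t-1],\, \M[t]}$ and, using that $\tg[t]$ and $\g[t]$ are i.i.d.\ given $\xbar[t]$ together with empirical-Bernstein concentration, $\maxQ[t-1]+\tq[t] = \O{\Q[t] + t^3\empVar[t] + t^2\bstar^2}$ with high probability, the bracket $\brk*{\sqrt{\max\{\Gy[t],\Q[t]\}} - s\sqrt{\Q[t]}}_+$ collapses up to the residual $c_t\sqrt{\M[t]}$ plus noise, paralleling the noiseless derivation. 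The surviving $\M[t]$ is then tied to the optimality gap through smoothness: in the unconstrained setting $\norm{\exg[t]}^2 \le 2\L\prn*{f(\xbar[t])-\fopt}$, so $\norm{\m[t]}^2 \lesssim \norm{\g[t]-\m[t]}^2 + \norm{\g[t]-\exg[t]}^2 + \L\prn*{f(\xbar[t])-\fopt}$ and hence $\M[t] \lesssim \Q[t] + \max_{k\le t}\alpha_k^2 \bk{k}^2 + \L\max_{k\le t}\alpha_k^2\prn*{f(\xbar[k])-\fopt}$. Substituting this in and dividing by $\prn[\big]{\sum_{k\le t}\rbar[k]/\rbar[t+1]}^2 = \Omega(\alpha_t^2)$ produces a self-referential inequality $f(\xbar[t])-\fopt \lesssim \alpha_t^{-2}\prn[\big]{c_t^{3/2}\L\d^2 + \sqrt{c_t^2\L\d^2 \max_{k\le t}\alpha_k^2(f(\xbar[k])-\fopt)}} + (\text{noise})$, which closes by the same induction as in \Cref{thm:noiseless-main} to give $f(\xbar[\tau])-\fopt = \O{c^2\L\d^2/\alpha_\tau^2 + \text{noise}}$. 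Converting to the final rate uses \cite[Lemma~3]{ivgi2023dog} with stability ($\rbar[T]\le4\d$) to get $\sum_{i\le\tau}\rbar[i]/\rbar[\tau+1] = \Omega(T/\logp(\d/\reps))$, hence the smooth term $\beta\d^2/T^2$; the non-smooth branch $\lip\d/\sqrt{T}$ follows by instead taking $s=0$ and using $\Q[t]=\O{\lip^2 t^3}$, $\M[t]=\O{\lip^2 t^2}$; and the two noise terms $\frac{\d\sqrt{\empVar[T]}}{\sqrt{T}}$ and $\frac{\d\bstar}{T}$ come from the second summand of \Cref{prop:stochastic-subopt} with $\rbar[t+1]\lesssim\d$, $s\lesssim c_t$, $\NB=\bstar$, all logarithmic overhead being collected into $c_{\delta,\reps,T}$.

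The main obstacle is the interplay between $\M[t]$ and the noise. Unlike the noiseless case, $\M[t]$ cannot be related to the optimality gap cleanly because $\g[t]\neq\exg[t]$; one must route through the noise decomposition and the fresh-gradient surrogate while keeping the additional $\empVar[T]$ and $\bstar$ contributions at the correct low order, and only then can the induction absorbing $\max_{k\le t}\alpha_k^2\prn*{f(\xbar[k])-\fopt}$ be closed. Making the fresh-gradient trick rigorous is the most delicate point: one needs $\maxQ[t-1]+\tq[t]+\noisep[t]$ to simultaneously \emph{upper bound} $\Q[t]$ (for stability, condition $(i)$) and be \emph{upper bounded} by $\Q[t]$ plus controllable noise (for the rate), both on the same high-probability event, which requires carefully coupling the stability concentration of \Cref{prop:sochastic-stability} with the suboptimality concentration of \Cref{prop:stochastic-subopt}.
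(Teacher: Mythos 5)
Your proposal is correct and follows essentially the same route as the paper's proof: fix $s=\norm{\m[0]}^2+\noisep[0]$ and verify the four conditions of \Cref{prop:sochastic-stability} via the fresh-gradient surrogate, use stability to justify $\NB=\bstar$ in \Cref{prop:stochastic-subopt}, control $\maxQ[t-1]+\tq[t]$ by $\Q[t]$ plus empirical-Bernstein noise terms (the paper's \Cref{lem: bound on g tide variance}), relate the surviving $\M[t]$ to the optimality gap via smoothness and close the induction with \Cref{lem: remove sqrt sub opt}, then convert with \Cref{lem:bound-a-ratios} and union-bound to $1-5\delta$. The only cosmetic differences are your triangle-inequality decomposition of $\norm{\m[t]}$ (through $\g[t]$ and $\exg[t]$ rather than through $\exm[t]$ and $\m[t]-\exm[t]$ as the paper does) and the omission of the noise contributions in the crude bounds $\Q[t]=O(\lip^2 t^3)$, $\M[t]=O(\lip^2 t^2)$ for the non-smooth branch, neither of which affects the argument.
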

\noindent
We remark that under our assumptions it is straightforward to replace the empirical variance $\empVar[t]$ in~\cref{eq:stochastic-main-bound} with its expectation without altering other non-logarithmic terms in the bound, e.g., via Hoeffding's inequality.

\subsection{From bounded to sub-Gaussian noise}
The bounded noise assumption makes analysis convenient but is not entirely satisfactory since averaging independent bounded-noise estimators does not reduce the probability 1 noise bound, preventing us from making statements about mini-batch scaling. To address this issue, we consider the following standard assumption.

\begin{assumption}\label{ass:sub-guassian-noise}
	In addition to \Cref{ass:global}, we assume that $\norm{\gradientOracle{x} - \grad f(x)}$ is $\sigma^2(x)$-sub-Gaussian for all $x\in\kset$, for some (known) $\sigma:\kset\to \R_+$. That is, \[\Pr*(\norm{\gradientOracle{x} - \grad f(x)} \ge z) \le 2 \exp\prn*{ -{z^2}/{\sigma^2(x)} }\] for all $z\ge 0$ and $x\in\kset$.  
\end{assumption}

To move from bounded to sub-Gaussian we utilize a reduction due to~\citet{attia2023sgd} that allows us to essentially replace $\bfunc(\cdot)$ with $\sigma(\cdot)$ in \Cref{thm:sochastic-main} at the cost of additional logarithmic factors. 
To that end, we define $\skbar{t} \defeq \max\crl*{\max_{i\le t} \sigma(\xbar[k]), \sigma(\zbar[0])}$, 
as well as $\sstar \defeq \max_{x:\norm{x-x\opt} \le 2\d[0]} \sfunc(x)$ and $\stob_{t,\delta} \defeq 3 {\log^{1/2} \prn[\big]{ \frac{15\prn*{t+1}^2}{\delta} }}$. With this notation in hand, we state our guarantee for the sub-Gaussian setting (see proof in \Cref{app: subGuassian-noise}).

\begin{corollary}\label{coro:subGuassian-noise} 
	Consider the sub-Gaussian noise setting (\Cref{ass:sub-guassian-noise}) with $\kset = \R^n$ and $\delta\in(0,\frac{1}{6})$,
	using the step sizes~\eqref{eq: step size option 3} with $\bkbar{t} = \skbar{t} \stob_{t,\delta}$, then with probability at least $1-6\delta$ we get that 
	$\dbar[T] \le 2 \d$, $\rbar[T] \le 4 \d$, and the suboptimality bound~\eqref{eq:stochastic-main-bound} holds for $\bstar = \sstar \stob_{T-1,\delta}$.
\end{corollary}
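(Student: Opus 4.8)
The plan is to deduce the corollary from the bounded-noise guarantee \Cref{thm:sochastic-main} by way of the sub-Gaussian-to-bounded reduction of \citet{attia2023sgd}, with essentially all the new work going into (a) exhibiting a high-probability event on which the sub-Gaussian noise behaves like bounded noise with surrogate magnitude $\sigma(\cdot)\stob_{t,\delta}$, and (b) checking that this surrogate is exactly the one encoded by the choice $\bkbar{t}=\skbar{t}\stob_{t,\delta}$. First I would set up the good event $\mathcal{E}$: conditioned on $\filt_t$ (which determines $\xbar[t]$ and $\zbar[t]$ before the corresponding stochastic gradients are drawn), \Cref{ass:sub-guassian-noise} gives $\P\prn*{\norm{\g[t]-\exg[t]}\ge \sigma(\xbar[t])\stob_{t,\delta}\mid\filt_t}\le 2\exp(-\stob_{t,\delta}^2)$, and likewise for the fresh gradient $\tg[t]$ at $\xbar[t]$ and for $\m[0]$ at $\zbar[0]=x_0$. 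With $\stob_{t,\delta}=3\log^{1/2}\prn*{15(t+1)^2/\delta}$ calibrated so that a union bound over the relevant oracle calls across all $T$ steps contributes at most $\delta$, we get $\P(\mathcal{E})\ge 1-\delta$, and on $\mathcal{E}$ every noise term appearing in the stability argument is dominated by $\bkbar{t}=\skbar{t}\stob_{t,\delta}$.

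Next I would verify that, on $\mathcal{E}$, the execution of $\UDoG$ is consistent with a bounded-noise execution carrying bound $\bfunc(x)=\sigma(x)\stob_{t,\delta}$. The point is that the step sizes~\eqref{eq: step size option 3} already feed $\bkbar{t}$ (rather than a hard bound) into $\Gy[t]$, so on $\mathcal{E}$ the inequalities $\norm{\exg[t]-\g[t]}\le\bkbar{t}$ used in verifying condition $(iii)$ of \Cref{prop:sochastic-stability}, together with the triangle-inequality control of $\norm{\exg[t]-\m[t]}$ via $\norm{\g[t]-\m[t]}+\norm{\exg[t]-\g[t]}$, hold verbatim; conditions $(i),(ii),(iv)$ are unaffected by the noise model. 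Similarly, the event $\{\bkbar{T-1}\le\NB\}$ underlying \Cref{prop:stochastic-subopt} holds on $\mathcal{E}$ with $\NB=\sstar\stob_{T-1,\delta}$. Granting the reduction, I would then apply \Cref{thm:sochastic-main} to the reduced bounded-noise problem: since $\stob_{t,\delta}$ is increasing in $t$, the quantity $\bstar=\max_{x:\norm{x-\xopt}\le 2\d}\bfunc(x)$ becomes $\sstar\stob_{T-1,\delta}$, and $\empVar[T]$ carries over unchanged because on $\mathcal{E}$ the realized noise is the genuine (untruncated) sub-Gaussian noise. Combining $\P(\mathcal{E})\ge 1-\delta$ with the $1-5\delta$ guarantee of \Cref{thm:sochastic-main} yields probability $1-6\delta$, the stability bounds $\dbar[T]\le 2\d$ and $\rbar[T]\le 4\d$, and the suboptimality bound~\eqref{eq:stochastic-main-bound} with $\bstar=\sstar\stob_{T-1,\delta}$.

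The hard part will be the reduction step itself, i.e., legitimately passing from ``the noise bound holds with high probability'' to ``the noise bound holds almost surely.'' The bounded-noise proofs of \Cref{prop:sochastic-stability} and \Cref{prop:stochastic-subopt} use the a.s.\ bound to control martingale increments and invoke the time-uniform concentration inequalities (e.g.\ \Cref{lem: uni noise bound} and the empirical-Bernstein bound), and one cannot simply condition on $\mathcal{E}$ because $\mathcal{E}$ depends on the entire noise sequence and therefore destroys the conditional unbiasedness those arguments require. This is precisely the obstruction that the \citet{attia2023sgd} reduction resolves, by constructing an unbiased, bounded surrogate noise sequence that coincides with the true noise on $\mathcal{E}$ so that the bounded-noise concentration machinery applies to the surrogate while the conclusions transfer back on $\mathcal{E}$. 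Accordingly, the substance of the proof is checking that our noise terms, martingale structure (in particular the independence ensured by the fresh gradient $\tg[t]$ and condition $(iv)$), and step-size quantities fit the template of that reduction; the remaining bookkeeping of constants and logarithmic factors into $c_{\delta,\reps,T}$ is routine.
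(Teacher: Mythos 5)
Your proposal is correct and follows essentially the same route as the paper: both invoke the \citet{attia2023sgd} black-box reduction (\Cref{lem: sub-gaussian to bounded}) to construct a coupled, unbiased, bounded surrogate oracle, apply \Cref{thm:sochastic-main} to the surrogate run with $\bfunc(x)=\sigma(x)\stob_{t,\delta}$, and then union-bound over all oracle calls (three per iteration, with per-call failure probability $\delta'_t=\delta/(5(t+1)^2)$) to show the surrogate and true runs coincide except with probability $\delta$, giving $1-6\delta$ in total. You also correctly identify the key subtlety — that one cannot simply condition on a high-probability noise event, since that would destroy conditional unbiasedness — which is exactly why the paper routes through the coupling construction.
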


\subsection{Corollary: mini-batch of bounded noise}
Finally, we leverage our result for sub-Gaussian noise to demonstrate that \UDoG automatically benefits from increasing mini-batch size (see proof in \Cref{app: mini-batch}).

\begin{assumption}\label{ass:minibatch}
	In addition to \Cref{ass:global}, we assume that $\gradientOracle{x}$ is the average of $B$ unbiased estimates of $\grad f(x)$, each bounded by $\lip$ with a known upper bound $\hat{\lip}\ge \lip$. 
\end{assumption}

\begin{corollary}
	\label{coro:mini-batch}
	In the mini-batch setting (\Cref{ass:minibatch}) 
	with $\kset = \R^n$, for any $T\in\N$ and $\delta\in(0,\frac{1}{6})$, consider \UDoG with step sizes~\eqref{eq: step size option 3} where $\bkbar{t} = \sqrt{2}\frac{\hat{\lip}}{\sqrt{B}}\stob_{t,\delta}$.
	With probability at least $1-6\delta$ we have $\dbar[T] \le 2 \d$, $\rbar[T] \le 4 \d$ and, for  $\tau = \argmax_{t < T} \sum_{i \le t} \frac{\rbar[i]}{\rbar[t+1]}$, %
	\begin{equation*}
		f\prn{\xbar[\tau]} - f\prn{x\opt} \le \O{ 
			c_{\delta,\reps, T} 
			\prn*{ 
				\frac{\beta \d[0]^2}{T^2}
				+ \frac{\prn*{ \lip + \hat{\lip} / \sqrt{T} } \d[0]}{\sqrt{T B}}
			}
		}
		,
	\end{equation*}
	where
		$c_{\delta,\reps, T} = \sqrt{\logp \prn*{ \frac{T}{\delta} }}
		\log^2\prn*{\frac{\logp \prn*{T}}{\delta} } \logp^{4}\prn*{ 1 + T \frac{\hat{\lip} + \min\crl*{ \L\d^2 , \lip\d }}{ f(x_0)-f(\xopt) } } \logp^{2}\prn*{ \frac{\d}{\reps} }$.
\end{corollary}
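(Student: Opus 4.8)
The plan is to derive \Cref{coro:mini-batch} as a direct application of the sub-Gaussian corollary (\Cref{coro:subGuassian-noise}), since the mini-batch setting is a special case in which we have strong quantitative control on both the noise sub-Gaussianity and the empirical variance. The first step is to verify the sub-Gaussian bound. Each of the $B$ constituent estimates is bounded by $\lip$, so the centered estimate (subtracting $\grad f(x)$) has norm at most $2\lip$, and is therefore $O(\lip^2)$-sub-Gaussian; averaging $B$ independent such estimates yields a $\gradientOracle{x}$ whose centered norm is $\sigma^2(x)$-sub-Gaussian with $\sigma(x) = O(\lip/\sqrt{B})$. This justifies invoking \Cref{coro:subGuassian-noise} with $\sstar = O(\lip/\sqrt{B})$ and the stated choice $\bkbar{t} = \sqrt{2}\frac{\hat{\lip}}{\sqrt{B}}\stob_{t,\delta}$, where the known upper bound $\hat{\lip}$ plays the role of the (coarse) noise bound in the high-probability per-step control.

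Next I would bound the empirical variance term $\empVar[T]$ appearing in~\eqref{eq:stochastic-main-bound}. Since each summand $\norm{\g - \exg}^2$ and $\norm{\m - \exm}^2$ is the squared deviation of a $B$-fold average of $\lip$-bounded estimates, we have $\empVar[T] = O(\lip^2/B)$ deterministically (or in expectation, which suffices given the remark following \Cref{thm:sochastic-main} that $\empVar$ may be replaced by its expectation up to logarithmic factors). Substituting $\sqrt{\empVar[T]} = O(\lip/\sqrt{B})$ into the $\frac{\d[0]\sqrt{\empVar[T]}}{\sqrt{T}}$ term yields $O\prn*{\frac{\lip \d[0]}{\sqrt{TB}}}$, which is exactly the first part of the middle term in the claimed bound. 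Similarly, $\bstar = \sstar \stob_{T-1,\delta} = O\prn*{\frac{\hat{\lip}}{\sqrt{B}} \stob_{T-1,\delta}}$, so the $\frac{\d[0]\bstar}{T}$ term becomes $O\prn*{\frac{\hat{\lip}\d[0]}{T\sqrt{B}}}$, matching the $\frac{\hat{\lip}/\sqrt{T}\cdot\d[0]}{\sqrt{TB}}$ part of the middle term. The $\frac{\beta\d[0]^2}{T^2}$ smoothness term carries over unchanged, and one drops the $\frac{\lip\d[0]}{\sqrt{T}}$ branch of the $\min$ since in the mini-batch regime the variance-based term already dominates the relevant trade-off (we retain smoothness acceleration through the $\beta$ term).

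The final step is bookkeeping on the logarithmic prefactor: the extra $\sqrt{\logp\prn*{T/\delta}}$ factor in $c_{\delta,\reps,T}$ relative to \Cref{thm:sochastic-main} comes from the $\stob_{t,\delta}$ factor introduced by the bounded-to-sub-Gaussian reduction (absorbing the per-step confidence inflation), and the replacement of $\bstar$ by $\hat{\lip}$ inside the $\logp^4(\cdots)$ argument follows from the explicit value of $\bstar$ above. I would track these factors carefully to confirm they assemble into the stated $c_{\delta,\reps,T}$. I expect the main obstacle to be the variance and sub-Gaussianity bookkeeping: one must be careful that averaging $B$ bounded estimates genuinely produces the $1/\sqrt{B}$ scaling in \emph{both} the sub-Gaussian parameter $\sstar$ and the empirical variance $\empVar[T]$ simultaneously, and that the two contributions combine cleanly into the single factor $\prn*{\lip + \hat{\lip}/\sqrt{T}}/\sqrt{B}$ rather than double-counting. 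Everything else is a transparent substitution into \Cref{coro:subGuassian-noise}.
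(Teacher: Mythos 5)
Your overall route is the same as the paper's: reduce the mini-batch setting to the sub-Gaussian setting, invoke \Cref{coro:subGuassian-noise} with $\sigma(x) = \O{\lip/\sqrt{B}}$, and then substitute bounds on $\bstar$ and $\empVar[T]$. However, there is a genuine gap in how you control the empirical variance. The claim that $\empVar[T] = \O{\lip^2/B}$ holds \emph{deterministically} is false: each summand $\norm{\g[k]-\exg[k]}^2$ is only bounded by $(2\lip)^2$ with probability $1$; the $1/B$ factor is a statement about expectations or typical realizations, not a pointwise bound (all $B$ constituent estimates can coincide on an extreme value with positive probability). Your fallback to expectation does not repair this cleanly either: the remark following \Cref{thm:sochastic-main} is an informal aside, and making it rigorous via Hoeffding/Bernstein on the sum of squared noise norms---whose almost-sure range is $\Theta(\lip^2)$, not $\Theta(\lip^2/B)$---introduces an additive deviation term of order $\lip^2\sqrt{\theta/T}$ (or $\lip^2\theta/T$ with Bernstein) that carries no $1/B$ factor, so for large $B$ the resulting bound is strictly weaker than the stated $\prn*{\lip+\hat{\lip}/\sqrt{T}}\d[0]/\sqrt{TB}$.

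The paper closes exactly this hole by a different mechanism: it \emph{reuses} the sub-Gaussian-to-bounded reduction (\Cref{lem: sub-gaussian to bounded}) already invoked in the proof of \Cref{coro:subGuassian-noise}, so that on the same probability-$(1-6\delta)$ event every individual noise norm is at most $3\sigma\sqrt{\log(3/\delta'_t)} \le \frac{\sqrt{2}\lip}{\sqrt{B}}\stob_{T,\delta}$, whence $\sqrt{\empVar[T]} \le \frac{\sqrt{2}\lip}{\sqrt{B}}\stob_{T,\delta}$ holds \emph{pointwise} on that event---giving the $1/\sqrt{B}$ scaling at the cost of only the logarithmic factor $\stob_{T,\delta}$, which is where the extra $\sqrt{\logp(T/\delta)}$ in $c_{\delta,\reps,T}$ comes from. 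A smaller issue: your assertion that averaging $B$ bounded random \emph{vectors} yields an $\O{\lip^2/B}$-sub-Gaussian norm is the content of a nontrivial dimension-free concentration inequality (the paper's \Cref{lem: mini-batch to sub-Gausian}, due to \citet{howard2020time}); under the norm-based sub-Gaussianity of \Cref{ass:sub-guassian-noise} this does not follow from the scalar argument you sketch, though your conclusion is correct. With these two repairs your argument coincides with the paper's proof.
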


\subsection{Discussion: how parameter-free is our algorithm?}
With our results established, we now discuss in more detail the extent to which our algorithms and complexity bounds are free of a-priori knowledge of problem parameters. \UDoG requires a lower bound $\reps$ on the initial distance to the optimum $\d[0]$, and pointwise upper bounds $\bk{t}$ on the noise magnitude at each iteration. \Cref{thm:sochastic-main} provides suboptimality bounds that depend poly-logarithmically on $\frac{\d[0]}{\reps} $ which quantifies how $\reps$ underestimates $\d[0]$. Many works \citeg{bhaskara2020online,carmon2022making,cutkosky2019artificial,cutkosky2018black,ivgi2023dog,jacobsen2022parameter,mcmahan2014unconstrained,mhammedi2020lipschitz,orabona2016coin} treat such logarithmic dependence as the \emph{definition} of a parameter-free algorithm, and in that strict sense our method is certainly parameter-free. The noise bounds impact our suboptimality guarantees polynomially via the additive term $\bstar/T$ where  $\bstar \defeq \max_{x:\norm{x-x\opt} \le 2\d[0]}\crl*{ \bfunc(x) }$, potentially implying greater sensitivity to problem parameters. Neverthless, we argue that our method fully deserve the title ``parameter-free'' for the following reasons.
\begin{enumerate}[leftmargin=*] 
	\item \textbf{The noise bound only contributes a low-order error term. }
	To see why $\bstar / T$ is low-order, let $\btbar{T}$ be the largest stochastic gradient error in the first $T$ iterations of $\UDoG$.
	Then the empirical variance satisfies $\empVar[T] = \O{\btbar{T}^2}$ and the noise-dependent part of \Cref{thm:sochastic-main} is $\Otil{ \frac{\d[0] \btbar{T}}{\sqrt{T}} + \frac{\d[0] \bstar}{T} }$.
	Therefore, as long as $\bstar / \btbar{T} = O(\sqrt{T})$ (i.e. the noise bound is loose by $\sqrt{T}$ or less) we get the near-optimal dependence on the unknown, true noise magnitude $\btbar{T}$. 
	
	\item \textbf{The low-order term and noise bound assumption are unavoidable.}
	Recent work \cite[Theorem 6]{attia2024free} proves that \emph{any} algorithm with logarithmic dependence on uncertainty in distance to optimality \emph{must} suffer the low-order error term $\bstar / T$, and hence also require an a-priori noise bound (concurrent work \cite{khaled2024tuning,carmon2024price} also shows similar results). Moreover, prior parameter-free algorithms
	assume known bounds on stochastic gradient magnitude, which is stronger than assuming noise bounds.
	In this sense, our method is as parameter-free as it gets. 
	
	\item \textbf{The noise bound is often easy to obtain and vanishes as batch size grows.} \Cref{coro:mini-batch} and \Cref{ass:minibatch} give a general setting where a noise bound is readily available.
	For a concrete instantiation, consider logistic regression with normalized covariates.
	In this case $\hat{\lip}=1$ and the noise bound at batch size $B$ is $\Otil{1/B}$, which decreases as the batch size grows.
\end{enumerate}

\section{Experiments}\label{sec:experiments}

\newcommand{\lscaption}[2]{Training a linear model with ViT-B/32 features and least-squares loss on {#1}. Top: Train loss. Bottom: Test accuracy after iterate averaging. First column: Batch size scaling of complexity to reach target performance. Second column: Learning curves. Third column: ASGD performance at all learning rates and momenta, contrasted with \DoG variants.}

\newcommand{\convexcaption}[2]{Training a linear model with ViT-B/32 features and log loss on {#1}. Top: Train loss. Bottom: Test accuracy after iterate averaging. First column: Batch size scaling of complexity to reach target performance. Second column: Learning curves. Third column: ASGD performance at all learning rates and momenta, contrasted with \DoG variants.}

\newcommand{\lscaptioncurves}[2]{Training a linear model with ResNet50 features and least-squares loss on {#1}. Top: Loss vs. batches processed training for different batch sizes. Bottom: Test accuracy of averaged model vs. batches processed for different batch sizes. 
}

\newcommand{\convexcaptioncurves}[2]{Training a linear model with ViT-B/32 features and log loss on {#1}. 
Top: Loss vs. batches processed training for different batch sizes. Bottom: Test accuracy of averaged model vs. batches processed for different batch sizes. 
}

\newcommand{\libsvmcaption}[2]{Training a linear model with log loss on {#1}. Top: Train loss. Bottom: Test accuracy after iterate averaging. First column: Batch size scaling of complexity to reach target performance. Second column: Learning curves. Third column: ASGD performance at all learning rates and momenta, contrasted with \DoG variants.}

\newcommand{\libsvmcaptioncurves}[2]{Training a linear model with log loss on {#1}. 
Top: Loss vs. batches processed training for different batch sizes. Bottom: Test accuracy of averaged model vs. batches processed for different batch sizes. 
}

\newcommand{\nonconvexcaptioncurves}[2]{Training a {#1} model from scratch on  {#2}. Top: Loss vs. batches processed training for different batch sizes. Bottom: Test accuracy vs. batches processed for averaged iterates at varied batch sizes.}

\insertfigure{LS_vtab+svhn_vit_base_patch32_224_in21k_perc=-0_062,_0_633-_BS=4096.pdf}{\lscaption{SVHN}{90}}{fig:main-results}

We test \UDoG on a suite of experiments on convex and non-convex learning problems.  
We also heuristically derive and experiment with an algorithm we call \ADoG, which integrates ideas from \AcceleGrad~\citep{levy2018online} and \DoG. Namely, it uses the \AcceleGrad step with $\DoG$ numerator and $\alpha_t$ as in \UDoG. The pseudocode for \ADoG is given in \Cref{alg: accelegrad dog} in \Cref{app:experiments-accelegrad-dog}.

We compare our algorithms to \DoG as well as carefully tuned SGD with constant Nesterov momentum (ASGD for short) across a wide range of batch sizes. Detailed experimental results and analyses, as well as implementation details, are presented in \Cref{app:experimental-details}.

Our testbed consists of multiple classification problems based on the VTAB benchmark~\citep{zhai2019large} and libsvm datasets~\citep{libsvm}, which we solve with both multiclass log loss and least squares loss, as well as a synthetic noiseless linear regression problem (see \Cref{app-subsec:convex}). In addition, we perform preliminary experiments in the non-convex setting, including training neural networks from scratch on CIFAR-10 and VTAB datasets, and fine-tuning a CLIP model on ImageNet (see \Cref{app-subsec:non-convex}). 

On convex optimization problems, both $\UDoG$ and $\ADoG$ often substantially improve over $\DoG$, with $\ADoG$ achieving results comparable to well-tuned ASGD and outperforming $\UDoG$, likely by avoiding extra-gradient computations. \Cref{fig:main-results} illustrates these results on a particular dataset and least-squares loss function configuration and \Cref{app-subsec:convex} repeats this figure for additional configurations. The left panels in the figure show the rate of convergence of $\ADoG$, $\UDoG$ and ASGD plateaus at a larger batch size compared to \DoG and SGD without momentum. This is the typical effect of acceleration in stochastic optimization~\citep{shallue2019measuring}, and is also supported by \Cref{coro:mini-batch} which shows that, for sufficiently large batch size, $\UDoG$ converges at rate scaling as $1/T^2$. In contrast, non-accelerated methods like $\DoG$ and SGD converge with rate scaling as $1/T$. The right panels of the figure show that, at a tight computational budget, the performance of ASGD is very sensitive to the tuning of both step size and momentum, with only the very best values matching the performance of \ADoG. When using logarithmic instead of least-squares loss, the test accuracy becomes more robust to large step size choices (see \Cref{fig:convex-svhn} in the appendix). This is partly because the log loss is Lispchitz which prevents complete divergence at any fixed step size.

In our preliminary non-convex experiments on neural network models (reported in detail in \Cref{app-subsec:convex,app-subsec:non-convex}), we find that $\UDoG$ often fails to converge to competitive results, while $\ADoG$ is competitive with \DoG on most VTAB tasks, but under-performs it for CIFAR-10 and ImageNet fine-tuning, indicating that it is not a yet a viable general-purpose neural network optimizer.

\arxiv{
\newcommand{\acks}[1]{
	\section*{Acknowledgments}
	#1
}
\newpage
}

\acks{
We thank Konstantin Mishchenko for helpful discussion.
This work was supported by the NSF-BSF program, under NSF grant \#2239527 and BSF grant \#2022663. MI acknowledges support from the Israeli Council of Higher Education. OH acknowledges support from Pitt Momentum Funds, and AFOSR grant \#FA955023-1-0242. YC acknowledges support from the Israeli Science Foundation (ISF) grant no. 2486/21 and the Alon Fellowship.
}

\notarxiv{\newpage}
\arxiv{
	\bibliographystyle{abbrvnat}
}

\appendix

\notarxiv{
\crefalias{section}{appendix} %
}

\newpage
\tableofcontents
\newpage

\section{Proof for \Cref{sec:noiseless} (the noiseless setting)}

\subsection{Proof of \Cref{prop:noiseless-subopt}}
\label{app:noiseless-subopt-proof}

\begin{proof}
	Define
	\begin{align*}
		\rho_{t} &\defeq \frac{1}{\sqrt{\Q[t]}}~\text{and}\\
		\minQ[t] &\defeq \sum_{k=0}^{t} \alpha_k^2 \min\crl*{ \norm{ \exg[k] - \exm[k] }^2 , \norm{ \g[k] - \m[k] }^2 }
		.
	\end{align*}
	Note that in the noiseless setting $\minQ = \Q$.
	However, most of the proof carries over to the noisy setting as well.
	Therefore, until a later stage of the proof, we do not use that $\m[t]=\exm[t]$, $\g[t]=\exg[t]$ and $\minQ[t] = \Q[t]$ in the noiseless setting.
	
	Recall the notation $\etat_{x,t} = \frac{1}{\sqrt{\Gx[t]}}$ and $ 
	\etat_{y,t} = \frac{1}{\sqrt{\Gy[t]}}$. 
	Algebraic manipulation gives us that for all $k\ge0$
	\begin{align*}
		\rbar[k] \alpha_k \inner{\g[k] }{ x_{k+1}-\xopt}
		&\le \frac{\rbar[k]^2 \alpha_k^2 \rho_k}{2} \norm{\g[k] - \m[k]}^2 - \sum_{k=0}^{t} \frac{1}{2\rho_k}\norm{x_{k+1} - y_{k}}^2\\
		&\qquad+ \prn*{ \frac{1}{2\rho_k} - \frac{1}{2\etat_{x,k}} } \prn*{ \norm{x_{k+1} - y_{k}}^2 + \norm{x_{k+1}-y_{k+1}}^2 }\\
		&\qquad+ \frac{1}{2\etat_{y,k}} \prn*{ \norm{\xopt - y_{k}}^2 - \norm{\xopt - y_{k+1}}^2 };
	\end{align*}
	see \Cref{lem: unixgrad inequality} for a proof.
	Therefore, by summing over both sides of the inequality we get that for all $t\ge0$
	\begin{align*}
		\sum_{k=0}^{t}\rbar[k] \alpha_k \inner{\g[k] }{ x_{k+1}-\xopt}
		&\le \underbrace{\frac{\rbar[t]^2}{2} \sum_{k=0}^{t} \frac{\alpha_k^2 \norm{\g[k] - \m[k]}^2}{ \sqrt{ \sum_{j=0}^{k} \alpha_j^2 \norm{\g[j] - \m[j]}^2 } }}_{(A)}
		\underbrace{- \sum_{k=0}^{t} \frac{1}{2\rho_k}\norm{x_{k+1} - y_{k}}^2}_{(B)}\\
		&\qquad+ \underbrace{4\rbar[t+1]^2\sum_{k=0}^{t} \brk*{ \frac{1}{\rho_k} - \frac{1}{\etat_{x,k}} }_+}_{(C)}
		+ \underbrace{\frac{1}{2}\sum_{k=0}^{t} \frac{1}{\etat_{y,k}} \prn*{\d[k]^2 -\d[k+1]^2 }}_{(D)}
		.
	\end{align*}
	
	\paragraph{Bounding $(A)$:}
	We have $\sum_{k=0}^{t} \frac{\alpha_k^2 \norm{\g[k] - \m[k]}^2}{ \sqrt{ \sum_{j=0}^{k} \alpha_j^2 \norm{\g[j] - \m[j]}^2 } } \le 2\sqrt{ \sum_{k=0}^{t} \alpha_k^2 \norm{\g[k] - \m[k]}^2 }$; see \Cref{lem: sqrt inequalities} with $s_k = \alpha_k^2 \norm{\g[k]-\m[k]}^2$, and therefore
	\begin{align*}
		\frac{\rbar[t]^2}{2} \sum_{k=0}^{t} \frac{\alpha_k^2 \norm{\g[k] - \m[k]}^2}{ \sqrt{ \sum_{j=0}^{k} \alpha_j^2 \norm{\g[j] - \m[j]}^2 } }
		= \frac{\rbar[t]^2}{ \rho_t}
		.
	\end{align*}
	
	\paragraph{Bounding $(B)$:}
	We have that for all $k\ge0$
	\begin{align*}
		\norm{\exg[k] - \exm[k]}^2
		&\overset{(1)}{\le} \L^2 \norm{\xbar[k] - \zbar[k]}^2\\
		&= \frac{\L^2 \rbar[k]^2 \alpha_k^2}{\prn*{\sum_{0=1}^{k}\rbar[i]\alpha_i}^2}\norm{x_{k+1} - y_{k}}^2\\
		&\overset{(2)}{\le} \frac{ 4\L^2 }{ \alpha_k^2 }\norm{x_{k+1} - y_{k}}^2
		,
	\end{align*}
	where $(1)$ is from the $\L$-smoothness of $f$, and $(2)$ is because $\rbar[k] \alpha_k^2 \le 2 \sum_{0=1}^{k}\rbar[i]\alpha_i$ by \Cref{lem: weight sum of alphas} .
	Therefore,
	\begin{align*}
		-\norm{x_{k+1} - y_{k}}^2
		&\le -\frac{ \alpha_k^2\norm{\exg[k] - \exm[k]}^2 }{ 4\L^2  } 
		.
	\end{align*}
	Thus,
	\begin{align*}
		- \sum_{k=0}^{t} \frac{1}{2\rho_k}\norm{x_{k+1} - y_{k}}^2
		\le - \sum_{k=0}^{t} \frac{ \alpha_k^2\norm{\exg[k] - \exm[k]}^2 }{8 \L^2 \rho_k} 
	\end{align*}
	
	\paragraph{Bounding $(C)$:}
	As $\frac{1}{\rho_k} - \frac{1}{\etat_{x,k}}$ is not necessarily non-negative for all $k\in\crl*{0,\dots,t}$ we define the set of indices for which it is non-negative as
	\begin{align*}
		I \triangleq \crl*{ k \in \crl*{0,1,\dots,t} ~:~ \frac{1}{\rho_t} - \frac{1}{\etat_{x,t}} \ge 0 }
		.
	\end{align*}
	Define $i_k$ as the $k$-th smallest index in $I$, and define $i_{\abs{I}+1} \defeq t+1$.
	We note that for all $k \in I$ then $i_k \le i_{k+1} - 1 \le t$.
	Therefore,
	\begin{align*}
		(C)
		&= 4\rbar[t+1]^2\sum_{k=1}^{\abs{I}} \prn*{ \frac{1}{\rho_{i_k}} - \frac{1}{\etat_{x,i_k}} } %
		\le 4\rbar[t+1]^2\sum_{k=1}^{\abs{I}} \prn*{ \frac{1}{\rho_{\brk{i_{k+1}-1}}} - \frac{1}{\etat_{x,i_k}} }\\
		&\le \frac{4\rbar[t+1]^2}{\rho_{t}} + 4\rbar[t+1]^2\sum_{k=2}^{\abs{I}} \prn*{ \frac{1}{\rho_{\brk{i_{k}-1}}} - \frac{1}{\etat_{x,i_k}} }%
		\le \frac{4\rbar[t+1]^2}{\rho_{t}} + 4\rbar[t+1]^2\sum_{k=0}^{t-1} \brk*{ \frac{1}{\rho_{k}} - \frac{1}{\etat_{x,k+1}}}_{+}
		.
	\end{align*}
	
	\paragraph{Bounding $(D)$:}
	\begin{align*}
		\frac{1}{2}\sum_{k=0}^{t} \frac{1}{\etat_{y,t}} \prn*{\d[k]^2 -\d[k+1]^2 }
		&= \frac{d_{0}^2}{2\etat_{y,0}} - \frac{d_{t+1}^2}{2\etat_{y,t}} + \frac{1}{2}\sum_{k=0}^{t} \prn*{ \frac{1}{\etat_{y,k}} - \frac{1}{\etat_{y,k-1}} }\d[k]^2\\
		&\le \frac{d_{0}^2}{2\etat_{y,0}} - \frac{d_{t+1}^2}{2\etat_{y,t}} + \frac{\dbar[t]^2}{2}\sum_{k=0}^{t} \prn*{ \frac{1}{\etat_{y,k}} - \frac{1}{\etat_{y,k-1}} }
		.
	\end{align*}
	By performing telescopic summation we obtain
	\begin{align*}
		(D)
		&\le \frac{\dbar[t+1]^2 -\d[t+1]^2}{2\etat_{y,t}}
		.
	\end{align*}
	Let $s\in \argmax_{k\le t+1} \d[k]$, we have that $\dbar[t+1]^2 -\d[t+1]^2 = \dbar[s]^2 -\d[t+1]^2 = \prn*{ \dbar[s] -\d[t+1]} \prn*{ \dbar[s] +\d[t+1] } \le \norm{ y_s - y_{t+1} } \prn*{ \dbar[s] +\d[t+1] } \le \prn*{ \rbar[s] + r_{t+1} } \prn*{ \dbar[s] +\d[t+1] } \le 4 \rbar[t+1] \dbar[t+1]$.
	Thus,
	\begin{align*}
		(D)
		&\le \frac{ 2\rbar[t+1] \dbar[t+1] }{\etat_{y,t}}
		.
	\end{align*}
	
	\paragraph{Bounding $(A)+(B)+(C)+(D)$:}
	Combining all of the above, we obtain that
	\begin{align*}
		\sum_{k=0}^{t}\rbar[k] \alpha_k \inner{\g[k] }{ x_{k+1}-\xopt}
		&\le 5\rbar[t+1] \prn*{ \rbar[t+1] + \dbar[t+1] } \max \crl*{ \frac{1}{ \rho_t} , \frac{1}{\etat_{y,t}}}\\
		&\qquad+ 4\rbar[t+1]^2\sum_{k=0}^{t-1} \brk*{ \frac{1}{\rho_{k}} - \frac{1}{\etat_{x,k+1}}}_{+}
		- \sum_{k=0}^{t} \frac{ \alpha_k^2\norm{\exg[k] - \exm[k]}^2 }{8 \L^2 \rho_k} 
		.
	\end{align*}
	Therefore, as for any we have that $\Gx[k] \ge \Q[k-1]$,
	\begin{align*}
		\sum_{k=0}^{t}\rbar[k] \alpha_k \inner{\g[k] }{ x_{k+1}-\xopt}
		&\le 5\rbar[t+1] \prn*{ \rbar[t+1] + \dbar[t+1] } \sqrt{ \max \crl*{ \Gy[t] , \Q[t] } }
		- \sum_{k=0}^{t} \frac{ \alpha_k^2\norm{\exg[k] - \exm[k]}^2 }{8 \L^2 \rho_k} 
		.
	\end{align*}
	Let $s\ge0$ and recall that $\frac{1}{\rho_k} = \sqrt{\Q[k]}$. We get that
	\begin{align*}
		\sum_{k=0}^{t}\rbar[k] \alpha_k \inner{\g[k] }{ x_{k+1}-\xopt}
		&\le 10 s \rbar[t+1] \prn*{ \rbar[t+1] + \dbar[t+1] } \sqrt{ \minQ[t] } - \sum_{k=0}^{t} \frac{ \alpha_k^2\norm{\exg[k] - \exm[k]}^2 }{8 \L^2 }\sqrt{ \Q[k] }  \\
		&\qquad + 5\rbar[t+1] \prn*{ \rbar[t+1] + \dbar[t+1] } \prn*{ s \sqrt{ \Q[t] } - 2s \sqrt{ \minQ[t] } }\\
		&\qquad + 5\rbar[t+1] \prn*{ \rbar[t+1] + \dbar[t+1] } \prn*{ \sqrt{ \max \crl*{ \Gy[t] , \Q[t] } } - s \sqrt{ \Q[t] } } \numberthis \label{eq: partial regret bound}
		.
	\end{align*}
	We have that
	\begin{align*}
		10 s \rbar[t+1] & \prn*{ \rbar[t+1] + \dbar[t+1] } \sqrt{ \minQ[t] } - \sum_{k=0}^{t} \frac{\alpha_t^2\norm{\exg - \exm}^2}{8\L^2 } \sqrt{ \Q[k] } \\
		&\le 10 s \rbar[t+1] \prn*{ \rbar[t+1] + \dbar[t+1] } \sqrt{ \minQ[t] } - \sum_{k=0}^{t} \frac{ \alpha_k^2 \min\crl*{ \norm{ \exg[k] - \exm[k] }^2 , \norm{ \g[k] - \m[k] }^2 } }{8\L^2 } \sqrt{ \minQ[k] }
		.
	\end{align*}
	Define $B_k^2= \alpha_k^2 \min\crl*{ \norm{ \exg[k] - \exm[k] }^2 , \norm{ \g[k] - \m[k] }^2 }$, $c_1 = 10s\rbar[t+1](\rbar[t+1]+\dbar[t+1])$, and $c_2 = 8\L^2$.
	\Cref{lem: Bound on sqrt sum B} gives us that for all $t\ge0$
	\begin{align*}
		c_1 \sqrt{\sum_{k=0}^{t} B_k^2 } - \sum_{k=0}^{t} \frac{B_k^2}{c_2} \sqrt{\sum_{j=0}^{k} B_j^2 }
		&\le 2 c_1^{3/2} c_2^{1/2}
		.
	\end{align*}
	Therefore,
	\begin{align*}
		10 s \rbar[t+1] & \prn*{ \rbar[t+1] + \dbar[t+1] } \sqrt{ \minQ[t] } - \sum_{k=0}^{t} \frac{\alpha_t^2\norm{\exg - \exm}^2}{8\L^2 } \sqrt{ \Q[k] }
		\\
		&
		\le 2\prn*{10s \rbar[t+1] \prn*{ \rbar[t+1] + \dbar[t+1] }}^{3/2} (8\L)^{1/2}
		\le 180 s^{3/2} \rbar[t+1] \prn*{ \rbar[t+1] + \dbar[t+1] }^2 \L
		.
	\end{align*}
	Combining this result with \cref{eq: partial regret bound} yields that for all $t\ge0$ and $s\ge0$
	\begin{align*}
		\sum_{k=0}^{t}\rbar[k] \alpha_k \inner{\g[k] }{ x_{k+1}-\xopt}
		&\le 180 s^{3/2} \rbar[t+1] \prn*{ \rbar[t+1] + \dbar[t+1] }^2 \L\\
		&\qquad + 5\rbar[t+1] \prn*{ \rbar[t+1] + \dbar[t+1] } \prn*{ s \sqrt{ \Q[t] } - 2s \sqrt{ \minQ[t] } }\\
		&\qquad + 5\rbar[t+1] \prn*{ \rbar[t+1] + \dbar[t+1] } \prn*{ \sqrt{ \max \crl*{ \Gy[t] , \Q[t] } } - s \sqrt{ \Q[t] } } \numberthis \label{eq: general regret bound}
		.
	\end{align*}
	\Cref{lem: sub-optimality inequality} gives us that 
	\begin{align*}
		f\prn*{ \xbar[t] } - f(\xopt)
		&\le  \frac{ 1 }{ \sum_{k=0}^{t} \rbar[k] \alpha_k }  \sum_{k=0}^{t}\rbar[k] \alpha_k \inner{\exg[k] }{ x_{k+1}-\xopt}
		.
	\end{align*}
	Now, by additionally using the fact that in the noiseless setting 
	\begin{align*}
		\minQ[t] &= \Q[t] ~~\text{and}\\
		\sum_{k=0}^{t}\rbar[k] \alpha_k \inner{\exg[k] }{ x_{k+1}-\xopt}
		&= \sum_{k=0}^{t}\rbar[k] \alpha_k \inner{\g[k] }{ x_{k+1}-\xopt}
	\end{align*}
	we get that
	\begin{align*}
		f\prn*{ \xbar[t] } - f(\xopt)
		&\le 180 s^{3/2} \frac{ \rbar[t+1] }{ \sum_{k=0}^{t} \rbar[k] \alpha_k } \L \prn*{ \rbar[t+1] + \dbar[t+1] }^2\\
		&+ 5 \frac{ \rbar[t+1] }{ \sum_{k=0}^{t} \rbar[k] \alpha_k } \prn*{ \rbar[t+1] + \dbar[t+1] } \prn*{ \sqrt{ \max \crl*{ \Gy[t] , \Q[t] } } - s \sqrt{ \Q[t] } }
		.
	\end{align*}
	Finally, by using the fact that $\dbar[t+1] \le \d[0] + \rbar[t+1]$ and because $\rbar[k] \alpha_k^2 \le 2 \sum_{0=1}^{k}\rbar[i]\alpha_i$ for all $k\ge0$ (\Cref{lem: weight sum of alphas}), we obtain that
	\begin{align*}
		f\prn{\xbar[t]} - f(\xopt)
		\le \O{\frac{ s^{3/2} \L \prn*{ \rbar[t+1] + \d}^2 + \prn*{ \rbar[t+1] + \d} \brk*{ \sqrt{ \max \crl*{ \Gy[t] , \Q[t] } } - s \sqrt{\Q[t]} }_{+}}{ \prn*{\sum_{k=0}^{t} \rbar[k]/\rbar[t+1] }^2 } }.
	\end{align*}
\end{proof}

\subsection{Proof of \Cref{prop:noiseless-stability}}
\label{app:noiseless-stability-proof}

\begin{proof}
	For any $h>0$ (in this case $h=12$), define
	\begin{align*}
		c_t &\defeq h \logp^2\prn*{ \frac{s + \Q[t]}{s} }~~\text{and}\\
		\rho_t &\defeq \frac{1}{c_{t-1} \sqrt{s + \Q[t]}} 
		.
	\end{align*}
	
	\Cref{lem: unixgrad inequality} gives us that, for all $t\ge0$,
	\begin{align*}
		\rbar[t] \alpha_t \inner{\g }{ x_{t+1}-\xopt}
		&\le \frac{\rbar[t]^2 \alpha_t^2 \rho_t}{2} \norm{\g-\m}^2
		+ \prn*{ \frac{1}{2\rho_t} - \frac{1}{2\etat_{x,t}} } \prn*{ \norm{x_{t+1} - y_{t}}^2 + \norm{x_{t+1}-y_{t+1}}^2 }\\
		&\qquad+ \frac{1}{2\etat_{y,t}} \prn*{ \norm{\xopt - y_{t}}^2 - \norm{\xopt - y_{t+1}}^2 }
		.
	\end{align*}
	From the definitions  of $\rho_t$ and $\etat_{x,t}=1/\sqrt{\Gx[t]}\le 1/\sqrt{\Gy[t-1]}\le 1/\rho_{t-1}$ we obtain that
	\begin{align*}
		\frac{1}{2\rho_t} - \frac{1}{2\etat_{x,t}} \le \frac{c_{t-1}^2}{2} \rho_t \prn*{ \Q[t] - \Q[t-1] }
		;
	\end{align*}
	See proof in \Cref{lem: diff of etat}.
	Now, because we also have that $\max\{\norm{x_{t+1}-y_t}, \norm{y_{t+1}-x_{t+1}}\} \le \frac{2\rbar[t]}{c_{t-1}}$, we get
	\begin{align*}
		\rbar[t] \alpha_t \inner{\g }{ x_{t+1}-\xopt}
		&\le \frac{9}{2} \rbar[t]^2 \rho_t \alpha_t^2 \norm{\g-\m}^2
		+ \frac{1}{2\etat_{y,t}} \prn*{ \d[t]^2 - \d[t+1]^2 }
		.
	\end{align*}
	Thus,
	\begin{align*}
		2 \etat_{y,t} \rbar[t] \alpha_t \inner{\g }{ x_{t+1}-\xopt}
		&\le 9 \rbar[t]^2 \etat_{y,t} \rho_t \alpha_t^2 \norm{\g-\m}^2
		+ \prn*{ \d[t]^2 - \d[t+1]^2 }
		.
	\end{align*}
	Consequentially, by summing the two sides of the inequality, we get that for all $t\ge0$
	\begin{align*}
		2 \sum_{k=0}^{t} \etat_{y,k} \rbar[k] \alpha_k \inner{\g[k] }{ x_{k+1}-\xopt}
		&\le 9 \sum_{k=0}^{t} \rbar[k]^2 \etat_{y,k} \rho_k \alpha_k^2 \norm{\g[k] - \m[k]}^2
		+ \sum_{k=0}^{t} \prn*{ \d[k]^2 - \d[k+1]^2 }\\
		&\le \frac{ 9 \rbar[t]^2 }{ h^2 } \sum_{k=0}^{t} \frac{ \Q[k] - \Q[k-1] }{ \prn*{ s + \Q[k] } \logp^2\prn*{ \frac{s + \Q[k]}{s} } }
		+ \sum_{k=0}^{t} \prn*{ \d[k]^2 - \d[k+1]^2 }
		.
	\end{align*}
	\Cref{lem:bound-a-k-infinite-sum} gives us that
	\begin{align*}
		\sum_{k=0}^{t} \frac{ \Q[k] - \Q[k-1] }{ \prn*{ s + \Q[k] } \logp^2\prn*{ \frac{s + \Q[k]}{s} } } \le 1
		.
	\end{align*}
	Therefore, we obtain that
	\begin{align*}
		2 \sum_{k=0}^{t} \etat_{y,k} \rbar[k] \alpha_k \inner{\g[k] }{ x_{k+1}-\xopt}
		&\le \frac{ 9 \rbar[t]^2 }{ h^2 }
		+ \sum_{k=0}^{t} \prn*{ \d[k]^2 - \d[k+1]^2 }
		.
	\end{align*}
	Thus,
	\begin{align*}
		2 \sum_{k=0}^{t} \etat_{y,k} \rbar[k] \alpha_k \inner{\exg[k] }{ x_{k+1}-\xopt}
		\le & \frac{ 9 \rbar[t]^2 }{ h^2 }
		+ 2 \sum_{k=0}^{t} \etat_{y,k} \rbar[k] \alpha_k \inner{ \exg[k] - \g[k] }{ x_{k+1}-\xopt}
		\\
		&
		+ \sum_{k=0}^{t} \prn*{ \d[k]^2 - \d[k+1]^2 }
		.
	\end{align*}
	Consequentially, as \Cref{lem: bound for distance} gives us that
	\begin{align*}
		\sum_{k=0}^{t} \etat_{y,k} \rbar[k] \alpha_k \inner{\exg[k] }{ x_{k+1}-\xopt} \ge 0
		,
	\end{align*}
	we get that
	\begin{align*}
		0
		\le \frac{ 9 \rbar[t]^2 }{ h^2 }
		+ 2 \sum_{k=0}^{t} \eta_{y,k} \alpha_k \inner{ \exg[k] - \g[k] }{ x_{k+1}-\xopt}
		+ \sum_{k=0}^{t} \prn*{ \d[k]^2 - \d[k+1]^2 }
		.
	\end{align*}
	Therefore, we get that for all $t\ge0$
	\begin{align}
		\label{eq: bound d t using prev steps}
		\d[t+1]^2
		\le \frac{ 9 \rbar[t]^2 }{ h^2 }
		&+ 2 \sum_{k=0}^{t} \eta_{y,k} \alpha_k \inner{ \exg[k] - \g[k] }{ x_{k+1}-\xopt}
		+ \d[0]^2
		.
	\end{align}
	
	As we are in the noiseless case, and $h=12$, we get that for all $t\ge0$
	\begin{align*}
		\d[t+1]^2
		&\le \frac{ \rbar[t]^2 }{ 16 }
		+ \d[0]^2\\
		&\le \prn*{ \d[0] + \frac{1}{4}\rbar[t] }^2
		.
	\end{align*}
	Finally, \Cref{lem: d bound recursive} now gives us that for all $t\ge0$
	\begin{align*}
		\d[t] \le 2 \d[0] ~~\text{and}~~ \r[t] \le 4 \d[0]
		.
	\end{align*}
\end{proof}

\subsection{Proof of \Cref{thm:noiseless-main}}
\label{app: noiseless-main}
\begin{proof}
	Define
	\begin{align*}
		c_t = 12 \logp^2\prn*{ \frac{\norm{\m[0]}^2 + \Q[t]}{\norm{\m[0]}^2} }
		.
	\end{align*}
	From \Cref{lem: rbar grwoth}, we get that for all $t \ge 0$ the distance between iterates is not large:
	\begin{align*}
		\max\crl*{ \norm{ x_{t+1} - y_{t} } , \norm{ x_{t+1} - y_{t+1} } } &\le \frac{2\rbar[t]}{c_{t-1}}
		.
	\end{align*}
	Now, we fulfill all the conditions for \Cref{prop:noiseless-stability} and therefore, for all $t\ge0$
	\begin{align*}
		\dbar[t] \le 2 \d ~~ \text{and} ~~ \rbar[t] \le 4 \d
		.
	\end{align*}

	\Cref{prop:noiseless-subopt} gives that for all $t\ge0$ and for all $s\ge0$
	\begin{align*}
		f\prn{\xbar[t]} - f\prn{x\opt}
		\le \O{\frac{ s^{3/2} \L \prn*{ \rbar[t+1] + \d}^2 + \prn*{ \rbar[t+1] + \d} \brk*{ \sqrt{\Gy[t]} - s \sqrt{\Q[t]} }_{+}}{ \prn*{\sum_{k=0}^{t} \rbar[k]/\rbar[t+1] }^2 } }
		.
	\end{align*}
	By using the fact that $\rbar[t] \le 4 \d$, we get that for all $t\ge0$
	\begin{align}
		\label{eq: prop5 eq0}
		f\prn{\xbar[t]} - f\prn{x\opt}
		\le \O{\frac{ s^{3/2} \L \d^2 + \d \brk*{ \sqrt{\Gy[t]} - s \sqrt{\Q[t]} }_{+}}{ \prn*{\sum_{k=0}^{t} \rbar[k]/\rbar[t+1] }^2 } }
		.
	\end{align}

	Recall that 
	\begin{align*}
		\tau = \argmax_{t < T} \sum_{i \le t} \frac{\rbar[i]}{\rbar[t+1]}
		.
	\end{align*}
	To show the non-smooth rate, we set $s=0$ and obtain
	\begin{align*}
		\sqrt{\Gy[\tau]}
		&\le c_{T} \sqrt{ \max_{k\le T-1} \crl*{\alpha_k^2 \norm{\m[k]}^2 } + \sum_{k=0}^{T-1} \alpha_k^2 \norm{ \g[k] - \m[k] }^2 }
		\le c_{T} \sqrt{ T^2 \lip^2 + T^3 \lip^2 }
		\le 2\lip T^{3/2} c_{T}
		.
	\end{align*}
	This result, with \cref{eq: prop5 eq0}, gives us that
	\begin{align}
		\label{eq: prop5 1}
		f\prn{\xbar[\tau]} - f\prn{x\opt}
		\le \O{\frac{ \lip \d T^{3/2}  }{ \prn*{\sum_{k=0}^{\tau} \rbar[k]/\rbar[t+1] }^2 } c_{T}  }
		.
	\end{align}

	To show the smooth rate, setting $s=2c_{t+1}$ yields
	\begin{align*}
		\sqrt{\Gy[t]} - s \sqrt{\Q[t]}
		&\le c_{t+1} \prn*{ \sqrt{ \Q[t] + \M[t] }  - 2\sqrt{ \Q[t] } }
		\le c_{t+1} \prn*{ \sqrt{ \M[t] }  - \sqrt{ \Q[t] } }
		.
	\end{align*}
	For some $\kappa_{t}\le t$ we have that $\sqrt{ \M[t] } = \alpha_{\kappa_{t}} \norm{ \m[\kappa_{t}] }$.
	In addition, 
	the smoothness of $f$ implies that $\norm{\grad f(z)}^2 \le 2\L [ f(z)-f(\xopt)]$ for all $z\in\xset$. Combining this fact with the triangle inequality gives us that, in the noiseless setting,
	\begin{align*}
		\alpha_{\kappa_{t}} \norm{ \m[\kappa_{t}] }
		=
		\alpha_{\kappa_{t}} \norm{ \exm[\kappa_{t}] }
		&\le \alpha_{\kappa_{t}} \norm{\exg[\kappa_{t}] - \exm[\kappa_{t}]} + \alpha_{\kappa_{t}} \sqrt{2\L} \sqrt{f\prn{\xbar[\kappa_{t}]} - f\prn{\xopt}}
		.
	\end{align*}
	Thus,
	\begin{align*}
		\sqrt{ \M[t] }
		&\le \sqrt{ \Q[t] } + \alpha_{\kappa_{t}} \sqrt{2\L} \sqrt{f\prn{\xbar[\kappa_{t}]} - f\prn{\xopt}}
		.
	\end{align*}
	Therefore,
	\begin{align*}
		\sqrt{\Gy[t]} - s \sqrt{\Q[t]}
		&\le \alpha_{\kappa_{t}} \sqrt{2 c_{t+1}^2 \L} \sqrt{f\prn{\xbar[\kappa_{t}]} - f\prn{\xopt}}
		.
	\end{align*}
	This result, together with \cref{eq: prop5 eq0}, give us that for all $t\ge0$, there exist $\kappa_{t}\le t$ such as
	\begin{align*}
		f\prn{\xbar[t]} - f\prn{x\opt}
		\le \O{\frac{ c_{t+1}^{3/2} \L \d^2 + \alpha_{\kappa_{t}} \sqrt{ c_{t+1}^2 \L \d^2} \sqrt{f\prn{\xbar[\kappa_{t}]} - f\prn{\xopt}} }{ \prn*{\sum_{k=0}^{t} \rbar[k]/\rbar[t+1] }^2 }  }
		.
	\end{align*}
	Using the previous inequality and \Cref{lem: remove sqrt sub opt} we obtain that for all $t\ge0$ that
	\begin{align}
		\label{eq: prop5 2}
		f\prn{\xbar[t]} - f\prn{x\opt}
		\le \O{\frac{ \L \d^2 }{ \prn*{\sum_{k=0}^{t} \rbar[k]/\rbar[t+1] }^2 } c_{t+1}^2 }
		.
	\end{align}
	Combining the result from \cref{eq: prop5 1} and \cref{eq: prop5 2} gives
	\begin{align}
		\label{eq: prop5 4}
		f\prn{\xbar[\tau]} - f\prn{x\opt}
		\le \O{\frac{ \min\crl*{ \L \d^2 , \lip \d T^{3/2} }  }{ \prn*{\sum_{k=0}^{\tau} \rbar[k]/\rbar[t+1] }^2 } c_{T}^2  }
		.
	\end{align}
	
	\Cref{lem:bound-a-ratios} gives us that
	\begin{equation*}
		\sum_{k=0}^{\tau} \rbar[k]/\rbar[t+1] \ge \frac{1}{e}\prn*{ \frac{\numSteps}{\log_{+}(\rbar[\numSteps] / \reps)} -1}.
	\end{equation*}
	Thus, if $\numSteps \ge 2\log_{+}(\rbar[\numSteps] / \reps)$ then
	\begin{align*}
		\frac{1}{ \sum_{k=0}^{\tau} \rbar[k]/\rbar[t+1] }
		\le \O{ \frac{ 1 }{T} \logp\prn*{ \frac{\rbar[T]}{\reps} } }
		.
	\end{align*}
	Therefore, from \cref{eq: prop5 4}, we obtain
	\begin{align}
		\label{eq: prop5 3}
		f\prn{\xbar[\tau]} - f\prn{x\opt}
		\le \O{\frac{ \min\crl*{ \L \d^2 , \lip \d T^{3/2} }  }{ T^2 } c_{T}^2 \logp^2\prn*{ \frac{\rbar[T]}{\reps} } }
		.
	\end{align}
	We have that
	\begin{align*}
		c_{T}
		&\le \O{ \logp^2\prn*{ \frac{\norm{\m[0]}^2 + \Q[T-1]}{\norm{\m[0]}^2} } } \\&
		\overle{(i)} \O{ \logp^2\prn*{1 + \frac{T^3 \min\crl*{ \L \d, \lip } }{\norm{\exm[0]}^2} } }
		\le \O{ \logp^2\prn*{1 + T\frac{\min\crl*{ \L \d^2, \lip \d } }{f\prn{x_0} - f\prn{x\opt}} } }
		,
	\end{align*}
	due to $(i)$ the noiseless setting and $f$ being $\L$-smooth and $\lip$-Lipschitz, and $(ii)$ convexity, which implies  $f\prn{x_0} - f\prn{x\opt} \le \d \norm{\exm[0]}$
	Finally, from \cref{eq: prop5 3}, we obtain
	\begin{align}\label{eq:noiseless-main-bound pf}
		f\prn{\xbar[\tau]} - f\prn{x\opt}
		\le \O{\frac{ \min\crl*{ \L \d^2 , \lip \d T^{3/2} }  }{ T^2 } \logp^4\prn*{1 + T\frac{\min\crl*{ \L \d^2, \lip \d } }{f\prn{x_0} - f\prn{x\opt}} } \logp^2\prn*{ \frac{\d}{\reps} } }
		.
	\end{align}
	
	Finally, for $\numSteps \le 2\log_{+}(\rbar[\numSteps] / \reps)$ the theorem holds trivially since $f\prn{\xbar[\tau]} - f\prn{x\opt} \le \min\crl*{ \L \dbar[ \tau ]^2, \lip \dbar[ \tau ] }$ and $\dbar[\tau] \le 2\d[0]$ by \Cref{prop:sochastic-stability}. Therefore,
	\begin{align*}
		f\prn{\xbar[\tau]} - f\prn{x\opt}
		\le \O{\min\crl*{ \L \d[ 0 ]^2, \lip \d[ 0 ] }} %
		\le \O{ \frac{\min\crl*{ \L \dbar[ 0 ]^2, \lip \dbar[ 0 ] }}{ \numSteps^2 } \log_{+}^2(\rbar[\numSteps] / \reps) }
		,
	\end{align*}
	and so the bound \Cref{eq:noiseless-main-bound pf} holds in all cases, concluding the proof.
\end{proof}

\section{Proofs for \Cref{sec:stochastic} (the stochastic setting)}

\subsection{Proof of \Cref{prop:stochastic-subopt}}
\label{app:stochastic-subopt-proof}

\begin{proof}
	Define 
	\begin{align*}
		\minQ[t] \defeq \sum_{k=0}^{t} \alpha_k^2 \min\crl*{ \norm{ \exg[k] - \exm[k] }^2 , \norm{ \g[k] - \m[k] }^2 }
		.
	\end{align*}
	
	Our proof continues from \cref{eq: general regret bound} in the proof 
	\Cref{prop:noiseless-subopt}, which also holds for stochastic gradients.  
	\begin{align*}
		\sum_{k=0}^{t}\rbar[k] \alpha_k \inner{\g[k] }{ x_{k+1}-\xopt}
		&\le 180 s^{3/2} \rbar[t+1] \prn*{ \rbar[t+1] + \dbar[t+1] }^2 \L\\
		&\qquad + 5\rbar[t+1] \prn*{ \rbar[t+1] + \dbar[t+1] } \prn*{ s \sqrt{ \Q[t] } - 2s \sqrt{ \minQ[t] } }\\
		&\qquad + 5\rbar[t+1] \prn*{ \rbar[t+1] + \dbar[t+1] } \prn*{ \sqrt{ \max \crl*{ \Gy[t] , \Q[t] } } - s \sqrt{ \Q[t] } }
		.
	\end{align*}
	For all $k\ge0$
	\begin{align*}
		\norm{ \g[k] - \m[k] }^2
		&\le 2\norm{ \exg[k] - \exm[k] }^2 + 2\norm{ (\g[k] - \exg[k]) - (\m[k] -\exm[k]) }^2\\
		&\le 2 \min\crl*{ \norm{ \exg[k] - \exm[k] }^2 , \norm{ \g[k] - \m[k] }^2 } + 4 \norm{\m - \exm}^2 + 4 \norm{\g - \exg}^2
		.
	\end{align*}
	Thus, for all $k\ge0$
	\begin{align*}
		\norm{ \g[k] - \m[k] }^2
		&\le 2 \min\crl*{ \norm{ \exg[k] - \exm[k] }^2 , \norm{ \g[k] - \m[k] }^2 } + 4 \norm{\m - \exm}^2 + 4 \norm{\g - \exg}^2
		.
	\end{align*}
	Multiplying by $\alpha_k^2$, summing and recalling that $\alpha_k \le k+1$ implies $\Q[t] \le 2\minQ[t] + 4(t+1)^3 \empVar[t]$, where $\empVar[t] = \frac{1}{t+1} \sum_{k=0}^{t} \prn*{\norm{\g - \exg}^2 + \norm{\m - \exm}^2}$ is the empirical variance.
	Substituting into \cref{eq: general regret bound}, we get that 
	\begin{align*}
		\sum_{k=0}^{t}\rbar[k] \alpha_k \inner{\g[k] }{ x_{k+1}-\xopt}
		&\le 180 s^{3/2} \rbar[t+1] \prn*{ \rbar[t+1] + \dbar[t+1] }^2 \L\\
		&\qquad + 5\rbar[t+1] \prn*{ \rbar[t+1] + \dbar[t+1] } \prn*{ \sqrt{ \max \crl*{ \Gy[t] , \Q[t] } } - s \sqrt{ \Q[t] } }\\
		&\qquad + 10 s \rbar[t+1] \prn*{ \rbar[t+1] + \dbar[t+1] } \sqrt{ \prn{t+1}^3 \empVar[t] }\numberthis\label{eq: noisy subopt 1}
		.
	\end{align*}
\Cref{lem: bound on noise mul x dist} gives us that with probability of at least $1-\delta-\P\brk*{ \bkbar{T-1} > \NB }$, for all $t\in \crl*{ 0,1, \dots, T-1 } $,
	\begin{align*}
		\abs*{ \sum_{k=0}^{t}\rbar[t] \alpha_k \inner{\exg[k] - \g[k]}{ x_{k+1}-\xopt } }
		&\le 8 \alpha_t \rbar[t] \prn*{ \rbar[t+1] + \d[0] } \sqrt{ \theta_{t+1,\delta}\sum_{k=0}^{t} \norm{ \exg[k] - \g[k] }^2 + \prn*{ \theta_{t+1,\delta} \NB }^2 }
		.
	\end{align*}
	Using the previous equality and the definition of $V_t$ we obtain that
	\begin{align*}
		\sum_{k=0}^{t} \rbar[k] \alpha_k & \inner{\exg[k] }{ x_{k+1}-\xopt}\\
		&= \sum_{k=0}^{t} \rbar[k] \alpha_k \inner{\g[k] }{ x_{k+1}-\xopt} + \sum_{k=0}^{t}\rbar[k] \alpha_k \inner{\exg[k] - \g[k] }{ x_{k+1}-\xopt}\\
		&\le \sum_{k=0}^{t} \rbar[k] \alpha_k \inner{\g[k] }{ x_{k+1}-\xopt}
		+ 8 \alpha_t \rbar[t] \prn*{ \rbar[t+1] + \d[0] } \sqrt{ \prn{t+1} \theta_{t+1,\delta} \empVar[t] + \prn*{ \theta_{t+1,\delta} \NB }^2 }.\numberthis \label{eq: noisy subopt 2}
	\end{align*}

	\Cref{lem: sub-optimality inequality} gives us that
	\begin{align*}
		f\prn*{ \xbar[t] } - f(\xopt)
		&\le \frac{ 1 }{ \sum_{k=0}^{t} \rbar[k] \alpha_k } \sum_{k=0}^{t} \rbar[k] \alpha_k \inner{\exg[k] }{ x_{k+1}-\xopt}
		.
	\end{align*}
	By combining the above inequality with \cref{eq: noisy subopt 1} and \cref{eq: noisy subopt 2}, we obtain
	\begin{align*}
		f\prn*{ \xbar[t] } - f(\xopt)
		&\le 180 s^{3/2} \frac{ \rbar[t+1] }{ \sum_{k=0}^{t} \rbar[k] \alpha_k } \L \prn*{ \rbar[t+1] + \dbar[t+1] }^2\\
		&+ 5 \frac{ \rbar[t+1] }{ \sum_{k=0}^{t} \rbar[k] \alpha_k } \prn*{ \rbar[t+1] + \dbar[t+1] } \prn*{ \sqrt{ \max \crl*{ \Gy[t] , \Q[t] } } - s \sqrt{ \Q[t] } }\\
		& + 10 \prn*{1+s} \frac{ \rbar[t+1] }{ \sum_{k=0}^{t} \rbar[k] \alpha_k } \prn*{ \rbar[t+1] + \dbar[t+1] } \sqrt{ \prn{t+1}^3 \empVar[t] + \prn*{ \theta_{t+1,\delta} \NB }^2 }
		.
	\end{align*}
	Now, as \Cref{lem: weight sum of alphas} gives us that $\rbar[t] \alpha_t^2 \le 2\sum_{k=0}^{t} \rbar[k] \alpha_k$, we obtain that
	\begin{align*}
		f\prn*{ \xbar[t] } - f(\xopt)
		&\le 360 s^{3/2} \frac{ 1 }{ \prn*{\sum_{k=0}^{t} \rbar[k]/\rbar[t+1] }^2 } \L \prn*{ \rbar[t+1] + \dbar[t+1] }^2\\
		&+ 10 \frac{ 1 }{ \prn*{\sum_{k=0}^{t} \rbar[k]/\rbar[t+1] }^2 } \prn*{ \rbar[t+1] + \dbar[t+1] } \prn*{ \sqrt{ \max \crl*{ \Gy[t] , \Q[t] } } - s \sqrt{ \Q[t] } }\\
		& + 20 \frac{ 1 }{ \prn*{\sum_{k=0}^{t} \rbar[k]/\rbar[t+1] }^2 } \prn*{ \rbar[t+1] + \dbar[t+1] } \sqrt{ \prn{t+1}^3 \empVar[t] + \prn*{ \theta_{t+1,\delta} \NB }^2 }
		.
	\end{align*}
	Finally, because that $\dbar[t+1] \le \d[0] + \rbar[t+1]$, we get that for any $\NB > 0$ with probability of at least $1-\delta-\P\brk{ \bkbar{T-1} > \NB }$ we have that for all $t<T$ and for any number $s\ge0$
	\begin{align*}
		f\prn{\xbar[t]} - f\prn{\xopt}
		&\le \O{ \text{RHS}_{\cref{eq:noiseless-subopt}} + \frac{ \prn{1+s} \prn*{ \rbar[t+1] + \d } \sqrt{ t^3 \theta_{t,\delta} \empVar[t] + \prn*{ t \theta_{t,\delta} \NB }^2 } }{ \prn*{\sum_{k=0}^{t} \rbar[k]/\rbar[t+1] }^2 } }
	\end{align*}
	where $\text{RHS}_{\cref{eq:noiseless-subopt}} = \frac{ s^{3/2}  \L \prn*{ \rbar[t+1] + \d}^2 + \prn*{ \rbar[t+1] + \d} \brk*{ \sqrt{ \max \crl*{ \Gy[t] , \Q[t] } } - s \sqrt{\Q[t]} }_{+} }{ \prn*{\sum_{k=0}^{t} \rbar[k]/\rbar[t+1] }^2}$ is the error term appearing in \Cref{prop:noiseless-subopt}.
\end{proof}

\subsection{Proof of \Cref{prop:sochastic-stability}}
\label{app:sochastic-stability-proof}

\begin{proof}
	The proof continues from \cref{eq: bound d t using prev steps} in the proof of \Cref{prop:noiseless-stability}, which also holds for stochastic gradients. Substituting $h=400$ in \cref{eq: bound d t using prev steps} gives, for all $t\ge0$, 
	\begin{align*}
		\d[t+1]^2
		\le \frac{ 9 \rbar[t]^2 }{ 400^2 }
		&+ 2 \sum_{k=0}^{t} \eta_{y,k} \alpha_k \inner{ \exg[k] - \g[k] }{ x_{k+1}-\xopt}
		+ \d[0]^2
		.
	\end{align*}
	Now, \Cref{lem: uni noise bound} gives us that with probability at least $1-\delta$, for all $t<T$
	\begin{align*}
		\abs*{ \sum_{k=0}^{t} \alpha_k \eta_{y,k} \inner{ \g[k] - \exg[k] }{ x_{k+1} - \xopt } }
		&\le \frac{ 12 \theta_{t+1,\delta} }{ 400 \theta_{T,\delta} } \rbar[t] \prn*{ \rbar[t+1] + \d[0] }
		\\&
		\le \frac{ 12 \theta_{t+1,\delta} }{ 400 \theta_{T,\delta} } \prn*{ \rbar[t] \rbar[t+1] + \rbar[t] \d }
		\le \frac{12}{400} \prn*{ 1 + \frac{3}{400} } \rbar[t]^2 + \frac{12}{400}\rbar[t] \d
		.
	\end{align*}
	Therefore,
	\begin{align*}
		\d[t+1]^2
		\le \frac{ 81 \rbar[t]^2 }{ 400^2 }
		& + \frac{24}{400}\rbar[t]^2 + \frac{24}{400}\rbar[t] \d
		+ \d[0]^2
		\le \frac{ \rbar[t]^2 }{ 16 } + \frac{ \rbar[t] \d }{ 2 } + \d
		.
	\end{align*}
	Thus, with probability of at least $1-\delta$, for all $t<T$
	\begin{align*}
		\d[t+1]^2
		&\le \prn*{ \d[0] + \frac{1}{4}\rbar[t] }^2
		.
	\end{align*}
	Finally, \Cref{lem: d bound recursive} gives us that with probability of at least $1-\delta$ for all $t<T$
	\begin{align*}
		\d[t] \le 2 \d[0] ~~\text{and}~~ \r[t] \le 4 \d[0]
		.
	\end{align*}
\end{proof}

\subsection{Proof of \Cref{thm:sochastic-main}}
\label{app: sochastic-main}
\begin{proof}
	Recall the notation
	\begin{align*}
	\tq[t] \defeq 2\alpha_t^2 \norm{ \tg - \m }^2
	~~\mbox{,}~~
	\maxQ[t] \defeq \sum_{k=0}^{t} \max\crl*{ \q[k], \tq[k] }
	~~\mbox{and}~~
	\noisep[t] \defeq 8 \prn*{t+1}^2 \bkbar{t}^2,
	\end{align*}
	and that our step sizes are of the form~\eqref{eq:step-size-form} with
	\[
	\Gy[t] = \hat{c}_{t}^2 \max\crl*{ \norm{\m[0]}^2  +\noisep[0]+ \noisep[t] + \tq[t] + \maxQ[t-1], \M[t] },	
	\]
	where 
	\[
	\hat{c}_t = 400 \theta_{T,\delta} \logp^2\prn*{1+ \frac{ \noisep[t] + \tq[t] + \maxQ[t-1]}{\norm{\m[0]}^2 + \noisep[0]} }
	.	
	\]

	We begin by verifying the conditions of \Cref{prop:sochastic-stability} with $s=\norm{\m[0]}^2 + \noisep[0]$, where condition $(iv)$ holds by construction. By \Cref{ass:bounded-noise} we have
	\[
	\norm{\g[t] - \tg[t]}^2 \le 2\norm{\g[t] - \exg[t]}^2 + 2\norm{\tg[t] - \exg[t]}^2 \le 4\bkbar{t}^2.
	\]
	Therefore, since $t+1\ge \alpha_t$, we have
	\[
		\tq[t] + \noisep[t] \ge \alpha_t^2\prn*{2\norm{\tg[t]-\m[t]}^2 + 2\norm{\g[t] - \tg[t]}^2 } \ge \alpha_t^2\norm{\g[t]-\m[t]}^2 = \q[t],
	\]
	and consequently
	\[
		\tq[t] + \noisep[t] + \maxQ[t-1] \ge \Q[t].
	\]

	Defining
	\[
		c_t = 400 \theta_{T,\delta} \logp^2\prn*{ 1+\frac{\Q[t]}{\norm{\m[0]}^2 + \noisep[0]} },
	\]
	we conclude that
	\[
	\Gy[t] \ge 
	c_t^2 \max\crl*{\norm{m_0}^2 + \noisep[0] + \Q[t], \M[t]}
	\ge
	c_t^2 (\norm{m_0}^2 + \noisep[0] + \Q[t])
	\]
	so that condition $(i)$ of \Cref{prop:sochastic-stability} holds. Next, since
	\[
	\Gy[t] \ge c_t^2 \max\{\Q[t],\M[t]\} \ge c_t^2\alpha_t^2 \max\{\norm{\g[t]-\m[t]}^2,\norm{\m[t]}^2\},
	\]
	\Cref{lem: rbar grwoth} guarantees condition $(ii)$ of  \Cref{prop:sochastic-stability}. Finally, we note that 
	\[
	\noisep[t] \ge 8\alpha_t^2 \max\{\norm{\g[t] -\exg[t]}^2, \norm{\tg[t] -\exg[t]}^2	\}
	\]
	and
	\[
		\noisep[t] + \tq[t] \ge \alpha_t^2 \prn*{2\norm{\m[t]-\tg[t]}^2 + 2\norm{\tg[t]-\exg[t]}^2 } \ge \alpha_t^2 \norm{\m[t] -\exg[t]}^2.
	\]
	Therefore, as $\sqrt{\Gy[t]} \ge c_t \sqrt{\noisep[t] + \tq[t]}$, condition $(iii)$ of \Cref{prop:sochastic-stability} holds.

	As all the conditions for \Cref{prop:sochastic-stability} hold, with probability of at least $1-\delta$, for all $t\ge0$
	\begin{align*}
		\dbar[t] \le 2 \d ~~ \text{and} ~~ \rbar[t] \le 4 \d
		.
	\end{align*}
	Recalling that $\bstar \defeq \max_{x:\norm{x-x\opt} \le 2\d[0]}\crl*{ \bfunc(x) }$, this also implies that $\P\brk{ \bkbar{T-1} > \bstar }\le \delta$. 

	We now combine the conclusions of \Cref{prop:sochastic-stability} with \Cref{prop:noiseless-subopt} to obtain a suboptimality bound for $\UDoG$. Substituting $\Pr*(\rbar[T] \le 4\d) \le \delta$ and $\P\brk{ \bkbar{T-1} > \bstar }\le \delta$ into \Cref{prop:stochastic-subopt} we get that, with probability at least $1-3\delta$, for all $t < T$ and $s\ge 0$, 
	\begin{align}
		\label{eq: prop9 eq first}
		f\prn{\xbar[t]} - f\prn{x\opt}
		&
		\le \O{ \frac{
		s^{3/2} \L \d^2 + \d \brk*{ \sqrt{\Gy[t]} - s \sqrt{\Q[t]} }_{+} +	
		\prn{1+s} \d \sqrt{ t^3 \theta_{t+1,\delta} \empVar[t] + \prn*{ t \theta_{t+1,\delta} \bstar }^2 } }{ \prn*{\sum_{k=0}^{t} \rbar[k]/\rbar[t+1] }^2 } }.
	\end{align}
	To simplify $\Gy[t]$ in the bound above, we invoke \Cref{lem: bound on g tide variance} which gives that, with probability at least   $1-\delta-\P\brk{ \bkbar{T-1} > \bstar }\ge 1-2\delta$, for all $t < T$, 
	\begin{align*}
			\maxQ[t] \le 5 \Q[t] + 80 \prn*{t+1}^3 \sqrt{ \theta_{t+1,\delta} } \empVar[t] + 2\prn*{t+1}^2 \theta_{t+1,\delta} \bstar^2
			,
	\end{align*}
	and hence
	\[
		\sqrt{\Gy[t]} \le \hat{c}_t \sqrt{\maxQ[t] + 2\M[t] + 2\noisep[t]}=
		O\prn*{
			\hat{c}_t\sqrt{\Q[t]} + \hat{c}_t\sqrt{\M[t]} + \hat{c}_t\theta_{T,\delta} \sqrt{t^3 V_t + t^2 \bstar^2}
		}.
	\]
	Combining this with the bound~\eqref{eq: prop9 eq first} and replacing $s$ with $s\hat{c}_t\sqrt{3}$, we get that with probability at least $1-5\delta$, for all $t<T$ and $s\ge 0$,
	\begin{align}
		\label{eq: prop9 clean subopt}
		f\prn{\xbar[t]} - f\prn{x\opt}
		&
		\le \O{ \frac{
			s^{3/2}\hat{c}_{t}^{3/2} \L \d^2 + \hat{c}_t \d \prn*{ \brk*{(1-s)\sqrt{\Q[t]} + \sqrt{\M[t]}}_+ + (1+s)\theta_{T,\delta} \sqrt{ t^3 V_t + t^2 \bstar^2}} }{ \prn*{\sum_{k=0}^{t} \rbar[k]/\rbar[t+1] }^2 } }.
	\end{align}

	The remainder of the proof parallels the proof of \Cref{thm:noiseless-main}, where we specialize our bound to the Lipschitz and smooth cases by choosing different values of $s$. For the Lipschitz case, we use the facts that 
	\[
		\Q[t] \le 4\sum_{k\le T} \alpha_t^2 \prn*{
			\norm{\exg[k]}^2 + \norm{\exm[k]}^2 + \norm{\g[k]-\exg[k]}^2 +\norm{\m[k]-\exm[k]}^2} = O( \lip^2 T^3 + \empVar[T] T^3)
	\]
	
	$\Q[t] = O(\lip^2 T^3)$ and $\M[t] \le O(\lip^2 T^2)$
	and (under the event $\dbar[T] \le 2\d$)
	\[
		\M[t] \le \max_{k \le T} \crl*{2\alpha_t^2 \prn*{\norm{\exm[k]}^2 + \norm{\m[k] - \exm[k]}^2}} = O( \lip^2 T^2 + \bstar^2 T^2),
	\]
	giving the suboptimality bound. Substituting these expression and $s=0$ into~\eqref{eq: prop9 clean subopt} we get, for all $t<T$, 
	\begin{align}
		\label{eq: prop9 1}
		f\prn{\xbar[t]} - f\prn{x\opt}
		&\le \O{ \hat{c}_t\frac{ \lip \d T^{3/2} + \d \theta_{T,\delta} \sqrt{ T^3  \empVar[T] + T^2\bstar^2 } }{ \prn*{\sum_{k=0}^{t} \rbar[k]/\rbar[t+1] }^2 } }.
	\end{align}
	
	For the smooth case and any $t<T$, let $\kappa_t \le t$ be such that 
	For some $\kappa_{t}\le t$ we have that
	\begin{align*}
		\sqrt{ \M[t] } = \alpha_{\kappa_{t}} \norm{ \m[\kappa_{t}] }
		.
	\end{align*}
	The smoothness of $f$ implies that $\norm{\grad f(z)}^2 \le 2\L [ f(z)-f(\xopt)]$ for all $z\in\xset$. Combining this fact with the triangle inequality gives us that
	\begin{align*}
		\alpha_{\kappa_{t}} \norm{ \m[\kappa_{t}] }
		\le \alpha_{\kappa_{t}} \norm{\exg[\kappa_{t}] - \exm[\kappa_{t}]}
		+ \alpha_{\kappa_{t}} \norm{ \m[\kappa_{t}] - \exm[\kappa_{t}] }
		+ \alpha_{\kappa_{t}} \sqrt{2\L} \sqrt{f\prn{\xbar[\kappa_{t}]} - f\prn{\xopt}}
	\end{align*}
	and therefore,
	\begin{align*}
		\sqrt{ \M[t] }
		&\le \sqrt{ \Q[t] } + \sqrt{ \prn{t+1}^3 \empVar[t] } + \alpha_{\kappa_{t}} \sqrt{2\L} \sqrt{f\prn{\xbar[\kappa_{t}]} - f\prn{\xopt}}.
	\end{align*}
	Substituting into \cref{eq: prop9 clean subopt} and taking $s=2$, we get, for all $t<T$, 
	\[
		f\prn{\xbar[t]} - f\prn{x\opt}
		\le \O{ \frac{
			\hat{c}_{t}^{3/2} \L \d^2 + \hat{c}_t \d \theta_{T,\delta} \sqrt{ T^3 V_T + T^2 \bstar^2} + \alpha_{\kappa_{t}} \sqrt{ \hat{c}_{t+1}^2 \L \d^2} \sqrt{f\prn{\xbar[\kappa_{t}]} - f\prn{\xopt}}}{ \prn*{\sum_{k=0}^{t} \rbar[k]/\rbar[t+1] }^2 } }.	
	\]
	Applying \Cref{lem: remove sqrt sub opt} and noting that $\theta_{T,\delta} \le \hat{c}_t$ simplifies the bound to
	\[
		f\prn{\xbar[t]} - f\prn{x\opt}
		\le \O{ \frac{
			\hat{c}_{T}^{2}\L \d^2 + \hat{c}_{T}\theta_{T,\delta}\d\sqrt{ T^3 V_T + T^2 \bstar^2} }{ \prn*{\sum_{k=0}^{t} \rbar[k]/\rbar[t+1] }^2 } }.	
			\numberthis
			\label{eq: prop9 2}
	\]	
	Combining the bounds \cref{eq: prop9 1} and \cref{eq: prop9 2} and noting that $\theta_{T,\delta} \le \hat{c}_T$, we conclude that, with probability at least $1-5\delta$, for all $t<T$, 
	\begin{align*}
		f\prn{\xbar[t]} - f\prn{x\opt}
		\le \O{\hat{c}_{T}^2 \cdot \frac{ \min\crl*{ \L \d^2 , \lip \d T^{3/2} } +  \d \sqrt{ T^3  \empVar[T-1] + T^2\bstar^2 }  }{ \prn*{\sum_{k=0}^{\tau} \rbar[k]/\rbar[t+1] }^2 }  }
		.
	\end{align*}
	For $\tau = \argmax_{t < T} \sum_{i \le t} \frac{\rbar[i]}{\rbar[t+1]}$, 
	\Cref{lem:bound-a-ratios} gives us that
	\begin{equation*}
		\sum_{k=0}^{\tau} \rbar[k]/\rbar[t+1] \ge \frac{1}{e}\prn*{ \frac{\numSteps}{\log_{+}(\rbar[\numSteps] / \reps)} -1}.
	\end{equation*}
	Thus, for $\numSteps \ge 2\log_{+}(\rbar[\numSteps] / \reps)$ we get (under the event $\rbar[T] \le 4\d$)
	\begin{align*}
		f\prn{\xbar[\tau]} - f\prn{x\opt}
		\le \O{\hat{c}_{T}^2 \logp^2 \prn*{\frac{\d}{\reps}} \cdot \frac{ \min\crl*{ \L \d^2 , \lip \d T^{3/2} } +  \d \sqrt{ T^3  \empVar[T-1] + T^2\bstar^2 }  }{ T^2  } },
	\end{align*}
	which establishes the theorem, since
	\begin{align*}
		\hat{c}_{T}
		&\overset{(i)}{\le} \O{ \logp^2\prn*{1 + \frac{T^2 \bstar^2 + \maxQ[T-1]}{\norm{\exm[0]}^2} } }\le \O{ \logp^2\prn*{1 + \frac{T^3 \bstar^2 + T^3 \sum_{k=0}^{T-1} \norm{ \exg[k] - \exm[k] }^2 }{\norm{\exm[0]}^2} } }
		\\ &
		\le  \O{ \logp^2\prn*{1 + \frac{T^3 \bstar^2 + T^3 \min\crl*{ \L \d, \lip } }{\norm{\exm[0]}^2} } }\overset{(ii)}{\le} \O{ \logp^2\prn*{1 + \frac{T^3 \bstar^2 \d^2 + T^3 \min\crl*{ \L \d^3, \lip \d^2 } }{f\prn{x_0} - f\prn{x\opt}} } }
		\\&
		= \O{ \logp^2\prn*{1 + T\frac{\bstar \d + \min\crl*{ \L \d^2, \lip \d } }{f\prn{x_0} - f\prn{x\opt}} } }
		,
	\end{align*}
	where $(i)$ is because $\norm{\exm[0]}^2 \le \norm{\exm[0] - \m[0] + \m[0]}^2 \le 2 \norm{\m[0]}^2 + \noisep[0] $, and 
	$(ii)$ is from convexity: $f\prn{x_0} - f\prn{x\opt} \le \d \norm{\exm[0]}$.

	Finally, when $\numSteps \le 2\log_{+}(\rbar[\numSteps] / \reps)$ the required bound is immediate from problem geometry, as explained at the end of the proof of \Cref{thm:noiseless-main}.
\end{proof}

\subsection{Proof of \Cref{coro:subGuassian-noise}}
\label{app: subGuassian-noise}

\begin{proof}
	Define
	\begin{align*}
		\delta'_t = \frac{\delta}{5\prn{t+1}^2}.
	\end{align*}
	A black-box reduction from sub-Gaussian to bounded stochastic gradient (\Cref{lem: sub-gaussian to bounded}) shows that at each iteration $t$, with probability at least $1-\delta_t'$, a call to a $\sigma^2$-sub-Gaussian subgradient oracle produces an identical result to a call to an alternative stochastic gradient that is bounded by $3\sigma\sqrt{\log\prn{ 3 / \delta'_t }}$.

	We apply \Cref{thm:sochastic-main} to $\UDoG$ with the alternative, bounded stochastic gradient oracle. Thus, for this setting, with probability at least $1-5\delta$, we have $\dbar[T] \le 2 \d$, $\rbar[T] \le 4 \d$, and the suboptimality bound~\eqref{eq:stochastic-main-bound} holds for $\bstar = \sstar \stob_{T-1,\delta}$. To conclude the proof we use \Cref{lem: sub-gaussian to bounded} to show that the algorithm described above produces output different than $\UDoG$ with the original sub-Gaussian oracle as at most 
	\begin{align*}
		3\sum_{t=0}^{\infty}  \delta'_t
		&\le \frac{3\delta}{5} \sum_{t=1}^{\infty} \frac{1}{t^2}
		\le \frac{3 \cdot \pi^2}{5 \cdot 6}\delta
		\le \delta
		,
	\end{align*}
	where the factor of $3$ comes from the fact that every $\UDoG$ iteration involves 3 stochastic gradient queries.
\end{proof}

\subsection{Proof of \Cref{coro:mini-batch}}
\label{app: mini-batch}

\begin{proof}
	A mini-batch of  $B$ gradient oracle results, each with noise bounded by $\lip$, is a $\frac{2 \lip^2}{B}$-sub-Gaussian (see \Cref{lem: mini-batch to sub-Gausian}), and we can therefore apply \Cref{coro:subGuassian-noise} with $\sigma_t^2 = \frac{2 \lip^2}{B}$. Moreover, reusing the  sub-Gaussian-to-bounded reduction in the proof of \Cref{coro:subGuassian-noise} (\Cref{app: subGuassian-noise}) we get that, with probability at least $1-6\delta$, 
	\begin{align*}
		\sqrt{\empVar[T]} \le \frac{\sqrt{2} \lip}{\sqrt{B}} \stob_{T,\delta}
	\end{align*}
	holds in addition to the suboptimality bound given by \Cref{coro:subGuassian-noise}. Substituting the above bound on $\sqrt{\empVar[T]}$ along with $\bstar \le \sqrt{2}\frac{\lip}{\sqrt{B}} \zeta_{T,\delta}$ concludes the proof.
\end{proof} 
\section{Suboptimality lemmas}

\subsection{Weighted regret to suboptimality conversion (\Cref{lem: sub-optimality inequality})}

The following lemma is a straightforward reproduction of Lemma 1 from \citet{kavis2019unixgrad} with minor changes.
In addition, we use the proof of the following lemma as a starting point for the proof of \Cref{lem: bound for distance}.
\begin{lemma}[{\citet{kavis2019unixgrad}}]
	\label{lem: sub-optimality inequality}
	For any sequence of positive numbers $\w[0], \w[1], \w[2], \dots$, define
	\begin{align*}
		\xbar[t] \defeq \frac{\sum_{k=0}^{t} \w[k] x_{k+1} }{\sum_{k=0}^{t} \w[k]}
		.
	\end{align*}
	We have that for any $T>0$
	\begin{align*}
		f\prn*{ \xbar[T-1] } - f(\xopt)
		\le \frac{1}{\sum_{t=0}^{T-1} \w[t]}\sum_{t=0}^{T-1} \w[t] \inner{ \exg }{ x_{t+1} -\xopt }
		.
	\end{align*}
\end{lemma}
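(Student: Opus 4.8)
The plan is to prove the equivalent statement obtained after clearing the denominator: writing $W_t \defeq \sum_{k=0}^{t}\w[k]$ and introducing the weighted potential $\Phi_t \defeq W_t\prn*{f(\xbar[t]) - f(\xopt)}$, it suffices to show $\Phi_{T-1} \le \sum_{t=0}^{T-1}\w[t]\inner{\exg}{x_{t+1}-\xopt}$, since dividing by $W_{T-1}>0$ then gives the claim. I would obtain this by a telescoping argument, bounding each one-step increment $\Phi_t - \Phi_{t-1}$ separately (with the convention $W_{-1}=0$, so that $\Phi_{-1}=0$ and the $t=0$ case reduces to plain convexity at $\xbar[0]=x_1$).

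First I would record the defining recursion of the running average. From $W_t\xbar[t] = \sum_{k=0}^{t}\w[k] x_{k+1}$ we get $W_t \xbar[t] = W_{t-1}\xbar[t-1] + \w[t] x_{t+1}$, and since $W_t = W_{t-1} + \w[t]$ with all weights positive, $\xbar[t]$ is a genuine convex combination of $\xbar[t-1]$ and $x_{t+1}$. Rearranging this identity yields the key algebraic fact $W_{t-1}\prn*{\xbar[t-1] - \xbar[t]} = \w[t]\prn*{\xbar[t] - x_{t+1}}$, which I will use to convert a displacement-of-the-average term into the desired inner product against $x_{t+1}$.

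The heart of the argument is to bound $\Phi_t - \Phi_{t-1} = W_t f(\xbar[t]) - W_{t-1} f(\xbar[t-1]) - \w[t] f(\xopt)$ by applying convexity of $f$ twice, both times linearizing at the current average $\xbar[t]$: the inequalities $f(\xbar[t-1]) \ge f(\xbar[t]) + \inner{\exg}{\xbar[t-1]-\xbar[t]}$ and $f(\xopt) \ge f(\xbar[t]) + \inner{\exg}{\xopt - \xbar[t]}$. Substituting these and using $W_t - W_{t-1} - \w[t] = 0$ cancels all the $f(\xbar[t])$ contributions, leaving $\Phi_t - \Phi_{t-1} \le -W_{t-1}\inner{\exg}{\xbar[t-1]-\xbar[t]} - \w[t]\inner{\exg}{\xopt - \xbar[t]}$. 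Invoking the algebraic identity from the previous paragraph rewrites the first term as $\w[t]\inner{\exg}{x_{t+1}-\xbar[t]}$, and after combining with the second term the point $\xbar[t]$ cancels to give exactly $\Phi_t - \Phi_{t-1} \le \w[t]\inner{\exg}{x_{t+1}-\xopt}$. Summing over $t=0,\dots,T-1$ telescopes the left-hand side to $\Phi_{T-1}$ and completes the argument.

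I expect the main obstacle to be recognizing that convexity must be anchored at the current average $\xbar[t]$ rather than at $\xbar[t-1]$; this is precisely what makes the gradient appear at the point $\xbar[t]$ (so that $\exg = \nabla f(\xbar[t])$ matches the statement) and what forces the $f(\xbar[t])$ terms to cancel after weighting. The remaining manipulations are routine bookkeeping with the convex-combination identity and the telescoping sum, and positivity of the weights guarantees that every division and every convex combination is well defined.
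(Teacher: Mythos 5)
Your proof is correct and is essentially the paper's own argument: your per-step bound $\Phi_t - \Phi_{t-1} \le \omega_t \langle \nabla f(\hat{x}_t),\, x_{t+1} - x_\star\rangle$ is, after rearranging terms, exactly the paper's key inequality, derived from the same averaging recursion $W_t \hat{x}_t = W_{t-1}\hat{x}_{t-1} + \omega_t x_{t+1}$ and the same two convexity linearizations anchored at $\hat{x}_t$ (against $x_\star$ and against $\hat{x}_{t-1}$). The only difference is bookkeeping: you telescope the potential $\Phi_t$ directly, while the paper sums its per-step inequalities and then swaps and telescopes a double sum; the underlying mathematics is identical.
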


\begin{proof}
	For any $t\ge0$ we have that
	\begin{align*}
		\w \inner{\exg}{ x_{t+1}-\xopt }
		&= \w \inner{\exg}{ \frac{ \sum_{k=0}^{t} \w[k] }{ \w }\xbar - \frac{ \sum_{k=0}^{t-1} \w[k] }{ \w }\xbar[t-1] - \xopt }\\
		&= \w \inner{\exg}{ \frac{ \sum_{k=0}^{t} \w[k] }{ \w }\prn{\xbar -\xopt} - \frac{ \sum_{k=0}^{t-1} \w[k] }{ \w }\prn{\xbar[t-1] - \xopt} }\\
		&= \sum_{k=0}^{t} \w[k] \inner{\exg}{ \xbar -\xopt } - \sum_{k=0}^{t-1} \w[k] \inner{\exg}{ \xbar[t-1] -\xopt }\\
		&= \w \inner{\exg}{ \xbar -\xopt } + \sum_{k=0}^{t-1} \w[k] \inner{\exg}{ \xbar - \xbar[t-1] }
		.
	\end{align*}
	By using the convexity of $f$, we get
	\begin{align}
		\label{eq: single element inequality}
		\w \inner{\exg}{ x_{t+1}-\xopt }
		&\ge \w \prn*{ f\prn{\xbar} - f\prn{\xopt} } + \sum_{k=0}^{t-1} \w[k] \prn*{ f\prn{ \xbar } - f\prn{ \xbar[t-1] } }
		.
	\end{align}
	Therefore, for any $T>0$
	\begin{align*}
		\sum_{t=0}^{T-1} \w \inner{\exg}{ x_{t+1}-\xopt }
		&\ge \sum_{t=0}^{T-1} \w \prn*{ f\prn{ \xbar } - f\prn{\xopt} } + \sum_{t=0}^{T-1} \sum_{k=0}^{t-1} \w[k] \prn*{ f\prn{ \xbar } - f\prn{ \xbar[t-1] } }\\
		&= \sum_{t=0}^{T-1} \w \prn*{ f\prn{ \xbar } - f\prn{\xopt} } + \sum_{k=0}^{T-2} \sum_{t=k+1}^{T-1} \w[k] \prn*{ f\prn{ \xbar } - f\prn{ \xbar[t-1] } }
		.
	\end{align*}
	By performing a telescopic summation, we obtain
	\begin{align*}
		\sum_{t=0}^{T-1} \w \inner{\exg}{ x_{t+1}-\xopt }
		&\ge \sum_{t=0}^{T-1} \w \prn*{ f\prn{ \xbar } - f\prn{ \xopt } } + \sum_{t=0}^{T-2} \w \prn*{ f\prn{\xbar[T-1]} - f\prn{\xbar} }\\
		&= \w[T-1] \prn*{ f\prn{ \xbar[T-1] } -f \prn{ \xopt } } + \sum_{t=0}^{T-2} \w \prn*{ f\prn{ \xbar } - f\prn{ \xopt } + f\prn{ \xbar[T-1] } - f\prn{ \xbar } }\\
		&= \sum_{t=0}^{T-1} \w \prn*{ f\prn{\xbar[T-1]} - f\prn{\xopt} }
		.
	\end{align*}
	Dividing both sides by $\sum_{t=0}^{T-1} \w$ concludes the proof.
\end{proof}

\subsection{Inductive suboptimality bound (\Cref{lem: remove sqrt sub opt})}

\begin{lemma}
	\label{lem: remove sqrt sub opt}
	Let $s_0,s_1,\dots,s_{T-1}$ and $h_0,h_1,\dots,h_{T-1}$ be non-negative non-decreasing sequences.
	Let $b>1$  such that $\rbar[t+1]/\rbar[t] \le b$ for any $t \in \crl*{ 0,1,2 \dots, T-1 }$.
	If for all $t \in \crl*{ 0,1,2 \dots, T-1 }$ there exist $\kappa_t\in \crl*{ 0,1,2 \dots, t } $ such that
	\begin{align*}
		f\prn{\xbar} - f\prn{\xopt}
		\le \frac{ \alpha_{\kappa_t} \sqrt{s_{t}}\sqrt{f\prn{\xbar[\kappa_t]} - f\prn{\xopt}} + h_{t} }{ \prn*{\sum_{k=0}^{t} \rbar[k]/\rbar[t+1] }^2 }
		,
	\end{align*}
	then for all $t \in \crl*{ 0,1,2 \dots, T-1 }$ we have that
	\begin{align*}
		f\prn{\xbar} - f\prn{\xopt}
		&\le \frac{ 4 b^2 \prn{s_t + h_t} }{ \prn*{\sum_{k=0}^{t} \rbar[k]/\rbar[t+1] }^2 }
		.
	\end{align*}
\end{lemma}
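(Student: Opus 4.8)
The plan is to recast the recursion in lighter notation and then run a strong induction on $t$. Write $\Delta_t \defeq f(\xbar[t]) - f(\xopt)$ for the suboptimality and $A_t \defeq \sum_{k=0}^{t}\rbar[k]/\rbar[t+1]$ for the denominator factor, so that the hypothesis becomes $A_t^2 \Delta_t \le \alpha_{\kappa_t}\sqrt{s_t}\sqrt{\Delta_{\kappa_t}} + h_t$ and the goal is $\Delta_t \le 4b^2(s_t+h_t)/A_t^2$. The one structural fact I would isolate first is the two-sided comparison $A_t \le \alpha_t \le b\,A_t$: since $\rbar[t+1]\ge\rbar[t]$ (the $\rbar$ sequence is a running maximum, hence non-decreasing) we have $\alpha_t/A_t = \rbar[t+1]/\rbar[t] \ge 1$, while the assumption $\rbar[t+1]/\rbar[t]\le b$ supplies the upper bound. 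In particular $\alpha_{\kappa_t}\le b\,A_{\kappa_t}$ for every index $\kappa_t$, which is the only place the hypothesis on $b$ enters.

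For the inductive step I would split on whether the reference index satisfies $\kappa_t < t$ or $\kappa_t = t$. In the case $\kappa_t < t$, the inductive hypothesis applies to $\Delta_{\kappa_t}$, giving $\sqrt{\Delta_{\kappa_t}} \le 2b\sqrt{s_{\kappa_t}+h_{\kappa_t}}/A_{\kappa_t} \le 2b\sqrt{s_t+h_t}/A_{\kappa_t}$, where the last inequality uses that $s$ and $h$ are non-decreasing and $\kappa_t\le t$. Substituting this and using $\alpha_{\kappa_t}/A_{\kappa_t}\le b$ together with $\sqrt{s_t}\le\sqrt{s_t+h_t}$ bounds the cross term $\alpha_{\kappa_t}\sqrt{s_t}\sqrt{\Delta_{\kappa_t}}$ by $2b^2(s_t+h_t)$; adding $h_t\le b^2(s_t+h_t)$ and dividing by $A_t^2$ yields $\Delta_t \le 3b^2(s_t+h_t)/A_t^2$, comfortably inside the claimed bound.

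The remaining case $\kappa_t = t$ (which also subsumes the base case $t=0$, where necessarily $\kappa_0=0$) is the genuinely delicate one, since the recursion is then self-referential: $A_t^2\Delta_t \le \alpha_t\sqrt{s_t}\sqrt{\Delta_t}+h_t$. Here I would absorb the $\sqrt{\Delta_t}$ term using Young's inequality $\alpha_t\sqrt{s_t}\sqrt{\Delta_t}\le \tfrac12 A_t^2\Delta_t + \tfrac12\,\alpha_t^2 s_t/A_t^2$ (equivalently, treating the inequality as a quadratic in $u=\sqrt{\Delta_t}$ of the form $A_t^2 u^2 - \alpha_t\sqrt{s_t}\,u - h_t \le 0$ and bounding its positive root). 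This leaves $\tfrac12 A_t^2\Delta_t \le \tfrac12\,\alpha_t^2 s_t/A_t^2 + h_t$, and invoking $\alpha_t\le b\,A_t$ gives $\Delta_t \le (b^2 s_t + 2h_t)/A_t^2 \le 2b^2(s_t+h_t)/A_t^2$, again within the target.

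Combining the two cases closes the induction with the stated constant $4b^2$ (indeed with slack, since each case produces at most $3b^2$). I expect essentially no obstacle beyond the self-referential step: the main subtlety is recognizing that when $\kappa_t=t$ the suboptimality cannot be controlled by the inductive hypothesis and must instead be decoupled via Young/the quadratic, while every other step is routine monotonicity bookkeeping on $s,h$ and repeated use of the comparison $A_t\le\alpha_t\le b\,A_t$.
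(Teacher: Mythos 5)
Your proof is correct and follows essentially the same route as the paper: strong induction on $t$, the comparison $\alpha_t/A_t = \rbar[t+1]/\rbar[t] \in [1,b]$, the inductive hypothesis for $\kappa_t < t$, and a decoupling of the self-referential case $\kappa_t = t$ (which also covers the base case). The only difference is cosmetic: where you decouple $\sqrt{\Delta_t}$ via Young's inequality (equivalently, bounding the positive root of the quadratic), the paper splits into cases according to whether $\tfrac{1}{2}\Delta_t$ is dominated by the linear term or by $h_t/A_t^2$ — both yield constants within the stated $4b^2$.
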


\begin{proof}
	We prove by induction that
	\begin{align*}
		f\prn{\xbar} - f\prn{\xopt}
		&\le \frac{ 4 b^2 \prn{s_t + h_t} }{ \prn*{\sum_{k=0}^{t} \rbar[k]/\rbar[t+1] }^2 }
		.
	\end{align*}
	We will only use the induction assumption for the case where $\kappa_t < t$.
	
	\paragraph{If $\kappa_t = t$:}
	We have that
	\begin{align*}
		f\prn{\xbar} - f\prn{\xopt}
		&\le \frac{ \alpha_{\kappa_t} \sqrt{s_{t}}\sqrt{f\prn{\xbar[\kappa_t]} - f\prn{\xopt}} + h_{t} }{ \prn*{\sum_{k=0}^{t} \rbar[k]/\rbar[t+1] }^2 }\\
		&\le \frac{ \frac{\rbar[t+1]}{\rbar[t]} \sqrt{s_{t}}\sqrt{f\prn{\xbar[\kappa_t]} - f\prn{\xopt}} }{ \sum_{k=0}^{t} \rbar[k]/\rbar[t+1] }
		+ \frac{ h_{t} }{ \prn*{\sum_{k=0}^{t} \rbar[k]/\rbar[t+1] }^2 }\\
		&\le \frac{ b \sqrt{s_{t}}\sqrt{f\prn{\xbar[\kappa_t]} - f\prn{\xopt}} }{ \sum_{k=0}^{t} \rbar[k]/\rbar[t+1] }
		+ \frac{ h_{t} }{ \prn*{\sum_{k=0}^{t} \rbar[k]/\rbar[t+1] }^2 }
		.
	\end{align*} 
	Thus,
	\begin{align*}
		f\prn{\xbar} - f\prn{\xopt} - \frac{ b \sqrt{s_{t}}\sqrt{f\prn{\xbar[\kappa_t]} - f\prn{\xopt}} }{ \sum_{k=0}^{t} \rbar[k]/\rbar[t+1] }
		\le \frac{ h_{t} }{ \prn*{\sum_{k=0}^{t} \rbar[k]/\rbar[t+1] }^2 }
		.
	\end{align*}
	If
	\begin{align*}
		\frac{f\prn{\xbar} - f\prn{\xopt}}{2}
		\le f\prn{\xbar} - f\prn{\xopt} - \frac{ b \sqrt{s_{t}}\sqrt{f\prn{\xbar[\kappa_t]} - f\prn{\xopt}} }{ \sum_{k=0}^{t} \rbar[k]/\rbar[t+1] }
		,
	\end{align*}
	then
	\begin{align*}
		f\prn{\xbar} - f\prn{\xopt}
		\le \frac{ 2h_{t} }{ \prn*{\sum_{k=0}^{t} \rbar[k]/\rbar[t+1] }^2 }
		.
	\end{align*}
	Otherwise,
	\begin{align*}
		\frac{f\prn{\xbar} - f\prn{\xopt}}{2}
		\le \frac{ b \sqrt{s_{t}}\sqrt{f\prn{\xbar[\kappa_t]} - f\prn{\xopt}} }{ \sum_{k=0}^{t} \rbar[k]/\rbar[t+1] }
		.
	\end{align*}
	Therefore,
	\begin{align*}
		\sqrt{f\prn{\xbar} - f\prn{\xopt}}
		\le \frac{ 2b \sqrt{s_{t}} }{ \sum_{k=0}^{t} \rbar[k]/\rbar[t+1] }
		.
	\end{align*}
	Consequentially, 
	\begin{align*}
		f\prn{\xbar} - f\prn{\xopt}
		\le \frac{ 4b^2 s_{t} }{ \prn*{ \sum_{k=0}^{t} \rbar[k]/\rbar[t+1] }^2 }
		.
	\end{align*}
	In either case, we obtain that
	\begin{align*}
		f\prn{\xbar} - f\prn{\xopt}
		\le \frac{ 4b^2 \prn*{  s_{t} + h_t } }{ \prn*{ \sum_{k=0}^{t} \rbar[k]/\rbar[t+1] }^2 }
		.
	\end{align*}
	
	\paragraph{If $\kappa_t < t$:}
	We assume by induction that
	\begin{align*}
		f\prn{\xbar[\kappa_t]} - f\prn{\xopt}
		\le \frac{  4 b^2 \prn*{ s_{\kappa_t} + h_{\kappa_t} } }{ \prn*{ \sum_{k=0}^{\kappa_t} \rbar[k]/\rbar[\kappa_t+1] }^2 }
		.
	\end{align*}
	Therefore,
	\begin{align*}
		\alpha_{\kappa_t} \sqrt{s_{t}}\sqrt{f\prn{\xbar[\kappa_t]} - f\prn{\xopt}}
		&\le 2 b \sqrt{ s_{t}}  \sqrt{  s_{\kappa_t} +  h_{\kappa_t} } \frac{ \alpha_{\kappa_t} }{ \sum_{k=0}^{\kappa_t} \rbar[k]/\rbar[\kappa_t+1] }\\
		&\le 2 b \frac{\rbar[t+1]}{\rbar[t]} \sqrt{s_{t}}  \sqrt{ s_{t} + h_{t} }\\
		&\le 2 b^2 \prn*{ s_{t} + h_{t} }
		.
	\end{align*}
	Thus,
	\begin{align*}
		f\prn{\xbar} - f\prn{\xopt}
		&\le \frac{ 2 b^2 \prn*{ s_{t} + h_{t} } + h_t }{ \prn*{\sum_{k=0}^{t} \rbar[k]/\rbar[t+1] }^2 }\\
		&\le \frac{ 4 b^2 \prn*{ s_{t} + h_{t} } }{ \prn*{\sum_{k=0}^{t} \rbar[k]/\rbar[t+1] }^2 }
		.
	\end{align*}
	
	\paragraph{Finalizing the induction:}
	For $t=0$ we have $\kappa_t = 0 = t$. For the case $\kappa_t = t$ we did not use the induction assumption, and therefore we have the base of the induction:
	\begin{align*}
		f\prn{\xbar[0]} - f\prn{\xopt}
		\le \frac{ 4 b^2 \prn*{ s_{0} + h_{0} } }{ \prn*{ \rbar[0]/\rbar[1] }^2 }
		.
	\end{align*}
	Thus, by induction we get that for all $t\in \crl*{ 0,1,2,\dots, T-1 }$,
	\begin{align*}
		f\prn{\xbar} - f\prn{\xopt}
		&\le \frac{ 4 b^2 \prn*{ s_{t} + h_{t} } }{ \prn*{\sum_{k=0}^{t} \rbar[k]/\rbar[t+1] }^2 }
		.
	\end{align*}
	
\end{proof}

\subsection{General regret bound (\Cref{lem: unixgrad inequality})}

The following lemma is inspired by the regret analysis of $\UniXGrad$~\cite{kavis2019unixgrad}.

\begin{lemma}
	\label{lem: unixgrad inequality}
	Using \Cref{alg: general unixgrad dog}, \cref{eq:step-size-form} and \cref{eq: etat notation}, for any $t\ge0$, $\rho_t > 0$, we have that
	\begin{align*}
		\rbar[t] \alpha_t \inner{\g }{ x_{t+1}-\xopt}
		&\le \frac{\rbar[t]^2 \alpha_t^2 \rho_t}{2} \norm{\g-\m}^2 - \frac{1}{2\rho_t}\norm{x_{t+1} - y_{t}}^2\\
		&\qquad+ \prn*{ \frac{1}{2\rho_t} - \frac{1}{2\etat_{x,t}} } \prn*{ \norm{x_{t+1} - y_{t}}^2 + \norm{x_{t+1}-y_{t+1}}^2 }\\
		&\qquad+ \frac{1}{2\etat_{y,t}} \prn*{ \norm{y_{t}-\xopt}^2 - \norm{y_{t+1}-\xopt}^2 }
		.
	\end{align*}
\end{lemma}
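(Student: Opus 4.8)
The plan is to reproduce the optimistic/extragradient regret argument underlying $\UniXGrad$, adapted to the two differing step sizes $\eta_{x,t},\eta_{y,t}$ and the extra $\rbar[t]$ scaling that the \DoG numerator introduces. The two workhorses are the variational (first-order optimality) characterization of Euclidean projection and the three-point identity $\inner{a-b}{b-c}=\half\prn*{\norm{a-c}^2-\norm{a-b}^2-\norm{b-c}^2}$. No smoothness or convexity of $f$ is needed here; this is a purely geometric regret inequality.

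First I would write the optimality conditions for the two projection steps. Since $x_{t+1}=\Proj{\kset}{y_t-\alpha_t\eta_{x,t}\m}$ and $y_{t+1}=\Proj{\kset}{y_t-\alpha_t\eta_{y,t}\g}$, for every $z\in\kset$ these give $\alpha_t\eta_{x,t}\inner{\m}{x_{t+1}-z}\le\inner{y_t-x_{t+1}}{x_{t+1}-z}$ and $\alpha_t\eta_{y,t}\inner{\g}{y_{t+1}-z}\le\inner{y_t-y_{t+1}}{y_{t+1}-z}$. Then I would decompose the target as
\[
\inner{\g}{x_{t+1}-\xopt}=\inner{\g}{y_{t+1}-\xopt}+\inner{\g-\m}{x_{t+1}-y_{t+1}}+\inner{\m}{x_{t+1}-y_{t+1}}
\]
and bound the three pieces separately. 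For the first, apply the $y$-step inequality with $z=\xopt$ and the three-point identity, producing the telescoping term $\norm{y_t-\xopt}^2-\norm{y_{t+1}-\xopt}^2$ together with $-\norm{y_t-y_{t+1}}^2$. For the third, apply the $x$-step inequality with $z=y_{t+1}$ and the three-point identity, yielding $\norm{y_t-y_{t+1}}^2-\norm{x_{t+1}-y_t}^2-\norm{x_{t+1}-y_{t+1}}^2$. For the middle term, Young's inequality with parameter $\rho_t$ splits off $\frac{\rbar[t]^2\alpha_t^2\rho_t}{2}\norm{\g-\m}^2+\frac{1}{2\rho_t}\norm{x_{t+1}-y_{t+1}}^2$.

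The key bookkeeping step is the rescaling: after dividing the $x$- and $y$-step inequalities by their respective step sizes $\eta_{x,t}=\rbar[t]\etat_{x,t}$ and $\eta_{y,t}=\rbar[t]\etat_{y,t}$ and multiplying through by $\rbar[t]$, the factors $1/\etat_{x,t}$ and $1/\etat_{y,t}$ appear exactly as in the claimed bound. Collecting terms, the two $\norm{y_t-y_{t+1}}^2$ contributions combine with coefficient $\tfrac{1}{2\etat_{x,t}}-\tfrac{1}{2\etat_{y,t}}$, which is nonpositive because the monotonicity $\Gx[t]\le\Gy[t]$ built into \eqref{eq:step-size-form} gives $\etat_{x,t}\ge\etat_{y,t}$; these terms are therefore dropped. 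Finally, rewriting $-\tfrac{1}{2\etat_{x,t}}\norm{x_{t+1}-y_t}^2$ as $-\tfrac{1}{2\rho_t}\norm{x_{t+1}-y_t}^2+\prn*{\tfrac{1}{2\rho_t}-\tfrac{1}{2\etat_{x,t}}}\norm{x_{t+1}-y_t}^2$ and grouping with the $\norm{x_{t+1}-y_{t+1}}^2$ terms recovers the stated inequality verbatim.

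I expect the only real obstacle to be the sign and coefficient bookkeeping rather than any conceptual difficulty: one must track the $\rbar[t]$ rescaling consistently across the two inequalities and verify that the two $\norm{y_t-y_{t+1}}^2$ terms cancel with the correct sign, which is precisely where the step-size monotonicity of \eqref{eq:step-size-form} is used. Everything else is a mechanical application of the projection inequality, the three-point identity, and Young's inequality.
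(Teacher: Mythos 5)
Your proposal is correct and follows essentially the same route as the paper's proof: the identical three-term decomposition of $\inner{\g}{x_{t+1}-\xopt}$, Young's inequality with parameter $\rho_t$ on the $\inner{\g-\m}{x_{t+1}-y_{t+1}}$ term, the projection optimality conditions with $z=y_{t+1}$ and $z=\xopt$ combined with the three-point identity, and dropping the $\norm{y_{t+1}-y_t}^2$ term via $\etat_{x,t}\ge\etat_{y,t}$. The only (immaterial) difference is that the paper phrases the projection steps as Bregman-regularized argmin problems rather than using the variational characterization of Euclidean projection directly, which coincide in the Euclidean setting.
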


\begin{proof}
	We have
	\begin{align*}
		\rbar[t] \alpha_t & \inner{\g }{ x_{t+1}-\xopt}
		\\&
		= \rbar[t] \alpha_t \inner{\g-\m }{ x_{t+1}-y_{t+1}}
		+ \rbar[t] \alpha_t \inner{\m }{ x_{t+1}-y_{t+1}}
		+ \rbar[t]\alpha_t \inner{\g }{ y_{t+1}-\xopt}. \numberthis\label{eq: subopt sum}
	\end{align*}
	In addition
	\begin{align*}
		\rbar[t] \alpha_t \inner{\g-\m }{ x_{t+1}-y_{t+1}}
		&\overset{(i)}{\le} \rbar[t]  \alpha_t \norm{\g-\m} \norm{x_{t+1}-y_{t+1}}\\
		&\overset{(ii)}{\le} \frac{\rho_t \rbar[t] \alpha_t^2}{2} \norm{\g - \m}^2 + \frac{1}{2 \rho_t}\norm{x_{t+1}-y_{t+1}}\label{eq: uni inequality opt} \numberthis
		,
	\end{align*}
	where $(i)$ is from Holder’s Inequality and $(ii)$ is due to Young’s Inequality.
	
	For the Euclidean Bregman divergence $\DR{x}{y} = \frac{1}{2} \norm{x-y}^2$ we have that the update rule $x_{t+1} = \Proj{\kset}{y_{t} - \alpha_t \eta_{x,t} \m }=\Proj{\kset}{y_{t} - \rbar[t]\alpha_t \etat_{x,t} \m }$ is equivalent to the update rule $x_{t+1} = \argmin_{x\in\kset} \crl*{\rbar[t]\alpha_t\inner{x }{ \m} + \frac{1}{\etat_{x,t}} \DR{x}{y_{t}}}$.
	Therefore, from the optimality of $x_{t+1}$ we get
	\begin{align}
		\rbar[t] \alpha_t \inner{\m }{ x_{t+1}-y_{t+1}}
		&\le \frac{1}{\etat_{x,t}} \inner{ \nabla_x \DR{x_{t+1}}{y_{t}} }{ x_{t+1}-y_{t+1} }\nonumber\\
		&= \frac{1}{\etat_{x,t}} \prn*{ \DR{y_{t+1}}{y_{t}} - \DR{x_{t+1}}{y_{t}} - \DR{y_{t+1}}{x_{t+1}} }\label{eq: uni x opt}
		.
	\end{align}
	
	Similarly, $y_{t+1} = \argmin_{y\in\kset} \crl*{\rbar[t]\alpha_t\inner{y }{ \g} + \frac{1}{\etat_{y,t}} \DR{y}{y_{t}}}$.
	Therefore, from the optimality of $y_{t+1}$ we get
	\begin{align}
		\rbar[t] \alpha_t \inner{\g }{ y_{t+1}-\xopt}
		&\le \frac{1}{\etat_{y,t}} \inner{ \nabla_x \DR{y_{t+1}}{y_{t}} }{ \xopt - y_{t+1} }\nonumber\\
		&= \frac{1}{\etat_{y,t}} \prn*{ \DR{\xopt}{y_{t}} - \DR{y_{t+1}}{y_{t}} - \DR{\xopt}{y_{t+1}} } \label{eq: uni y opt}
		.
	\end{align}
	
	By combining \cref{eq: uni inequality opt,eq: uni x opt,eq: uni y opt} into \cref{eq: subopt sum} we obtain that
	\begin{align*}
		\rbar[t] \alpha_t \inner{\g }{ x_{t+1}-\xopt}
		&\le \frac{\rbar[t]^2 \alpha_t^2 \rho_t}{2} \norm{\g-\m}^2 + \frac{1}{2 \rho_t}\norm{x_{t+1}-y_{t+1}}^2\\
		&\qquad+ \frac{1}{2\etat_{x,t}} \prn*{ \norm{y_{t+1} - y_{t}}^2 - \norm{x_{t+1} - y_{t}}^2 - \norm{y_{t+1} - x_{t+1}}^2 }
		\\&
		\qquad+ \frac{1}{2\etat_{y,t}} \prn*{ \norm{\xopt - y_{t}}^2 - \norm{y_{t+1} - y_{t}}^2 - \norm{\xopt - y_{t+1}}^2 }\\
		&= \frac{\rbar[t]^2 \alpha_t^2 \rho_t}{2} \norm{\g-\m}^2 - \frac{1}{2\rho_t}\norm{x_{t+1} - y_{t}}^2\\
		&\qquad+ \prn*{ \frac{1}{2\rho_t} - \frac{1}{2\etat_{x,t}} } \prn*{ \norm{x_{t+1} - y_{t}}^2 + \norm{x_{t+1}-y_{t+1}}^2 }\\
		&\qquad+ \frac{1}{2\etat_{y,t}} \prn*{ \norm{\xopt - y_{t}}^2 - \norm{\xopt - y_{t+1}}^2 }+ \prn*{ \frac{1}{2\etat_{x,t}} - \frac{1}{2\etat_{y,t}} } \norm{y_{t+1} - y_{t}}^2
		.
	\end{align*}
	Since $\etat_{y,t} \le \etat_{x,t}$, we may drop the final term in the above display, completing the proof.
\end{proof} 

\section{Iterate stability lemmas}

\subsection{A weighted regret bound (\Cref{lem: bound for distance})}

\begin{lemma}
	\label{lem: bound for distance}
	For any sequence of positive numbers $\w[0], \w[1], \w[2], \dots$, define
	\begin{align*}
		\xbar[t] \defeq \frac{\sum_{k=0}^{t} \w[k] x_{k+1} }{\sum_{k=0}^{t} \w[k]}
		.
	\end{align*}
	Let $\etat_{0}, \etat_{1}, \etat_{2},\dots$ be a non-increasing sequence of positive numbers.
	We have that for any $T>0$,
	\begin{align*}
		\sum_{t=0}^{T-1} \w[t] \etat_{t} \inner{ \exg }{ x_{t+1} -\xopt } \ge 0
		.
	\end{align*}
\end{lemma}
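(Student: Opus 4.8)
The plan is to reduce the claim to the non-negativity of the \emph{partial} weighted regrets and then transfer this property through the monotone sequence $\etat_t$ by summation by parts. First I would recall the per-step convexity inequality~\eqref{eq: single element inequality} established inside the proof of \Cref{lem: sub-optimality inequality}, namely that for every $t\ge0$
\[
\w[t]\inner{\exg[t]}{x_{t+1}-\xopt} \ge \w[t]\prn*{f(\xbar[t]) - f(\xopt)} + \sum_{k=0}^{t-1}\w[k]\prn*{f(\xbar[t]) - f(\xbar[t-1])}.
\]
Summing this over $t=0,\dots,m$ and running exactly the telescopic argument from that proof --- swapping the order of the resulting double sum and collapsing its inner telescope to $\sum_{k=0}^{m-1}\w[k]\prn*{f(\xbar[m]) - f(\xbar[k])}$ --- yields, for every $m\ge0$,
\[
P_m \defeq \sum_{t=0}^{m}\w[t]\inner{\exg[t]}{x_{t+1}-\xopt} \ge \prn*{\sum_{t=0}^{m}\w[t]}\prn*{f(\xbar[m]) - f(\xopt)} \ge 0,
\]
where the final inequality uses that the weights $\w[t]$ are positive and that $\xopt$ minimizes $f$.

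Next I would set $P_{-1}\defeq 0$, write each summand as $\w[t]\etat_t\inner{\exg[t]}{x_{t+1}-\xopt} = \etat_t\prn*{P_t - P_{t-1}}$, and apply summation by parts to obtain
\[
\sum_{t=0}^{T-1}\w[t]\etat_t\inner{\exg[t]}{x_{t+1}-\xopt} = \etat_{T-1}P_{T-1} + \sum_{t=0}^{T-2}\prn*{\etat_t - \etat_{t+1}}P_t.
\]
Since $\etat$ is a non-increasing sequence of positive numbers, we have $\etat_{T-1}>0$ and $\etat_t-\etat_{t+1}\ge0$ for all $t$; combined with $P_m\ge0$ for every $m$, every term on the right-hand side is non-negative, proving the claim.

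The only step requiring genuine care is upgrading \Cref{lem: sub-optimality inequality} from a statement about the single full sum to the monotone family of bounds $P_m\ge0$ for \emph{all} $m$; this amounts to carrying the same telescoping bookkeeping with $m$ in place of $T-1$, and I expect no real obstacle there. Once the partial sums are known to be non-negative, the summation-by-parts identity is the natural mechanism for threading the non-increasing weights $\etat_t$ through them, and the sign of each resulting term then follows immediately from the monotonicity of $\etat$.
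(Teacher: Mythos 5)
Your proof is correct, but it takes a genuinely different---and more modular---route than the paper's. The paper never forms the partial regrets: instead it re-enters the proof of \Cref{lem: sub-optimality inequality}, multiplies the per-step convexity inequality~\eqref{eq: single element inequality} by $\etat_{t}$, and redoes the double-sum/telescoping bookkeeping with the weights threaded inside, invoking monotonicity only at the single point where $\etat_{t}\tf\prn{\xbar[t-1]}$ is replaced by $\etat_{t-1}\tf\prn{\xbar[t-1]}$ (valid since $\tf\prn{\xbar[t-1]}\ge 0$ and $\etat_{t-1}\ge\etat_{t}$); the non-negativity invariant it propagates is that of the suboptimality values $\tf\prn{\xbar[t]}$. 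You instead treat \Cref{lem: sub-optimality inequality} as a black box and combine it with summation by parts, propagating non-negativity of the partial sums $P_m$. One simplification you missed: no ``upgrading'' of \Cref{lem: sub-optimality inequality} is required at all, since the lemma is already stated for \emph{every} $T>0$; applying it with $T=m+1$ immediately gives $P_m \ge \prn*{\textstyle\sum_{t\le m}\w[t]}\prn*{f\prn{\xbar[m]}-f\prn{\xopt}} \ge 0$, so your argument is even shorter than you anticipated. As for what each approach buys: your decomposition cleanly separates the analytic content (convexity plus the anytime online-to-batch bound) from a purely algebraic fact---a non-increasing, non-negative weight sequence applied to any sequence with non-negative prefix sums yields a non-negative weighted sum---and this abstraction would apply verbatim in other contexts; the paper's inline computation is self-contained and structurally parallels the weighted-regret manipulations reused in its stability analysis, at the cost of repeating the telescoping manipulation rather than reusing \Cref{lem: sub-optimality inequality} directly.
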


\begin{proof}
	Define
	\begin{align*}
		\tf\prn{x} = f\prn{x} - f\prn{\xopt}
		.
	\end{align*}
	We start from \cref{eq: single element inequality} inside the proof of \Cref{lem: sub-optimality inequality}, which says that for all $t\ge0$
	\begin{align*}
		\w \inner{\exg}{ x_{t+1}-\xopt }
		&\ge \w \prn*{ f\prn{\xbar} - f\prn{\xopt} } + \sum_{k=0}^{t-1} \w[k] \prn*{ f\prn{ \xbar } - f\prn{ \xbar[t-1] } }
		.
	\end{align*}
	Multiplying each side by $\etat_{t}$ and summing, we obtain 
	\begin{align*}
		\sum_{t=0}^{T-1} \w \etat_{t} \inner{\exg}{ x_{t+1}-\xopt }
		&\ge \sum_{t=0}^{T-1} \w \etat_{t} \prn*{ f\prn{\xbar} - f\prn{\xopt} } + \sum_{t=0}^{T-1} \sum_{k=0}^{t-1} \w[k] \etat_{t} \prn*{ f\prn{\xbar} - f\prn{\xbar[t-1]} }\\
		&= \sum_{t=0}^{T-1} \w \etat_{t} \tf\prn{\xbar} + \sum_{t=0}^{T-1} \sum_{k=0}^{t-1} \w[k] \etat_{t} \prn*{ \tf\prn{\xbar} - \tf\prn{\xbar[t-1]} }\\
		&\overset{(\star)}{\ge} \sum_{t=0}^{T-1} \w \etat_{t} \tf\prn{\xbar} + \sum_{t=0}^{T-1} \sum_{k=0}^{t-1} \w[k] \prn*{ \etat_{t}\tf\prn{\xbar} - \etat_{t-1}\tf\prn{\xbar[t-1]} }\\
		&= \sum_{t=0}^{T-1} \w \etat_{t} \tf\prn{ \xbar } + \sum_{k=0}^{T-2} \sum_{t=k+1}^{T-1} \w[k] \prn*{ \etat_{t} \tf\prn{\xbar} - \etat_{t-1}\tf\prn{\xbar[t-1]} }
		,
	\end{align*}
	where $(\star)$ is because that $\tf\prn{\xbar[t-1]}\ge 0$ and $\etat_{t-1} \ge \etat_{t} > 0$.
	
	We can now perform a telescopic summation and obtain
	\begin{align*}
		\sum_{t=0}^{T-1} & \w \etat_{t} \inner{\exg}{ x_{t+1}-\xopt }
		\ge \sum_{t=0}^{T-1} \w \etat_{t} \tf\prn{\xbar} + \sum_{t=0}^{T-2} \w \prn*{ \etat_{T-1} \tf\prn{\xbar[T-1]} - \etat_{t} \tf\prn{\xbar} }\\
		&= \w[T-1] \etat_{T-1} \tf\prn{\xbar[T-1]} + \sum_{t=0}^{T-2} \w \prn*{ \etat_{t} \tf\prn{ \xbar } + \etat_{T-1} \tf\prn{\xbar[T-1]} - \etat_{t} \tf\prn{\xbar} }\\
		&= \w[T-1] \etat_{T-1} \tf\prn{\xbar[T-1]} + \sum_{t=1}^{T-1} \w \etat_{T-1} \tf\prn{\xbar[T-1]}
		.
	\end{align*}
	Thus, because $\tf\prn{\xbar[T-1]} \ge 0$, we obtain that
	\begin{align*}
		\sum_{t=0}^{T-1} \w \etat_{t} & \inner{\exg}{ x_{t+1}-\xopt } \ge 0
		.
	\end{align*}
\end{proof}

\subsection{Inductive stability bound (\Cref{lem: d bound recursive})}

\begin{lemma}
	\label{lem: d bound recursive}
	If $\reps=\r[0] \le \d[0]$, and for all $t\ge1$ we have that
	\begin{align*}
		\norm{y_{t}-x_{t}} &\le \frac{\rbar[t-1]}{4} ~~\text{and}\\
		\d[t]^2 &\le \prn*{ \d[0] + \frac{1}{4} \rbar[t-1] }^2
		,
	\end{align*}
	then for all  $t\ge0$ we get that
	\begin{align*}
		\d[t] \le 2 \d[0] ~~\text{and}~~ \r[t] \le 4 \d[0]
		.
	\end{align*}
\end{lemma}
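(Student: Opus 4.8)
The plan is to prove the single bound $\rbar[t] \le 4\d[0]$ for all $t \ge 0$ by induction on $t$, since this is the stronger of the two claims: $\rbar[t]$ is non-decreasing and satisfies $\r[t] \le \rbar[t]$, so $\rbar[t] \le 4\d[0]$ immediately gives $\r[t] \le 4\d[0]$. The bound $\d[t] \le 2\d[0]$ will fall out as a byproduct of the inductive step rather than requiring a separate argument.

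For the base case $t = 0$, I would use that $y_0 = x_0$ together with the hypothesis $\reps = \r[0] \le \d[0]$ to get $\rbar[0] = \max\{\r[0], \reps\} = \reps \le \d[0] \le 4\d[0]$, while $\d[0] \le 2\d[0]$ holds trivially. For the inductive step, assume $\rbar[t-1] \le 4\d[0]$. The second hypothesis of the lemma then yields $\d[t] \le \d[0] + \tfrac14 \rbar[t-1] \le 2\d[0]$, which already establishes the distance bound at step $t$. To bound $\r[t]$, I would apply the triangle inequality twice: first, using $\norm{x_0 - \xopt} = \d[0]$ (again from $y_0 = x_0$), I get $\norm{y_t - x_0} \le \d[t] + \d[0] \le 3\d[0]$; second, routing through $y_t$ and using the first hypothesis $\norm{y_t - x_t} \le \tfrac14 \rbar[t-1] \le \d[0]$, I get $\norm{x_t - x_0} \le \norm{x_t - y_t} + \norm{y_t - x_0} \le 4\d[0]$. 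Hence $\r[t] = \max\{\norm{y_t - x_0}, \norm{x_t - x_0}\} \le 4\d[0]$, and since $\rbar[t] = \max\{\rbar[t-1], \r[t], \reps\}$ with all three terms at most $4\d[0]$, the induction closes.

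This argument is elementary; the only thing needing care is the bookkeeping of the induction, so that the hypothesis $\rbar[t-1] \le 4\d[0]$ is available exactly when bounding both $\d[t]$ and the step movement $\norm{y_t - x_t} \le \tfrac14\rbar[t-1]$. The constants line up precisely: the chain $\norm{x_t - x_0} \le \norm{x_t - y_t} + \norm{y_t - x_0} \le \d[0] + 3\d[0] = 4\d[0]$ relies on $\tfrac14\rbar[t-1] \le \d[0]$, which is exactly the content of the inductive hypothesis. I do not anticipate any genuine obstacle beyond keeping these constants consistent, as the smoothness/convexity structure has already been fully exploited upstream to produce the two per-step hypotheses fed into this lemma.
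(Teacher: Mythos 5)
Your proof is correct and follows essentially the same route as the paper's: induction with base case $\rbar[0]=\reps\le\d[0]$, then $\d[t]\le\d[0]+\tfrac14\rbar[t-1]\le 2\d[0]$, the triangle inequality $\norm{y_t-x_0}\le\d[t]+\d[0]\le 3\d[0]$, and finally $\norm{x_t-x_0}\le\norm{x_t-y_t}+\norm{y_t-x_0}\le 4\d[0]$. The only cosmetic difference is that you carry just $\rbar[t-1]\le 4\d[0]$ as the inductive hypothesis and derive the distance bound as a byproduct, whereas the paper carries both bounds; the argument is otherwise identical.
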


\begin{proof}
	We prove this lemma by induction. The basis of the induction is that
	for $t=0$ we get that $\d[0] \le 2 \d[0]$ and  $\r[0] \le \d[0] \le 4 \d[0]$.
	
	For any $t\ge1$, we assume that $\dbar[t-1] \le 2 \d[0]$ and  $\rbar[t-1] \le 4 \d[0]$.
	Thus,
	\begin{align*}
		\d[t] \le \d[0] + \frac{1}{4} \rbar[t-1] \le 2 \d[0]
		.
	\end{align*}
	Also,
	\begin{align*}
		\norm{ y_t - x_0 } \le \norm{ y_t - \xopt } + \norm{ x_0 - \xopt } = \d[t] + \d[0] \le 3 \d[0]
		.
	\end{align*}
	In addition,
	\begin{align*}
		\norm{ x_{t} - x_0 }
		&\le \norm{ y_{t} - x_0 } + \norm{ x_{t} - y_{t} }\\
		&\overset{(\star)}{\le} 3 \d[0] + \frac{\rbar[t-1]}{4}\\
		&\le 4 \d[0]
		.
	\end{align*}
	where $(\star)$ is because $\norm{ x_{t} - y_{t} } \le \frac{\rbar[t-1]}{4}$.
	As a result,
	\begin{align*}
		\d[t] \le 2 \d[0] ~~\text{and}~~ \r[t] \le 4 \d[0]
		.
	\end{align*}
	Finally, by induction, we get that for all  $t\ge0$
	\begin{align*}
		\d[t] \le 2 \d[0] ~~\text{and}~~ \r[t] \le 4 \d[0]
		.
	\end{align*}
\end{proof}

\subsection{Single-step iterate stability (\Cref{lem: rbar grwoth})}

\begin{lemma}
	\label{lem: rbar grwoth}
	Let $c$ be a positve number. Using \Cref{alg: general unixgrad dog}, for any $t\ge0$, if $\eta_{x,t} \le \frac{\rbar[t]}{c \alpha_t \norm{ \m } }$, $\eta_{y,t} \le \frac{\rbar[t]}{c \alpha_t \norm{ \g - \m } }$ and $\eta_{y,t} \le \eta_{x,t}$ then
	\begin{align*}
		\norm{ x_{t+1} - y_{t} } &\le \frac{\rbar[t]}{c}\\
		\norm{ y_{t+1} - y_{t} } &\le \frac{2\rbar[t]}{c}\\
		\norm{ x_{t+1} - y_{t+1} } &\le \frac{2\rbar[t]}{c}\\
		\rbar[t+1] &\le \rbar[t] \prn*{ 1 + \frac{2}{c} }
		.
	\end{align*}
\end{lemma}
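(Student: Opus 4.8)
The plan is to derive all four inequalities purely from the non-expansiveness (1-Lipschitzness) of the Euclidean projection $\Proj{\kset}{\cdot}$, together with the triangle inequality and the three step-size hypotheses. The central observation is that $y_t\in\kset$, so $\Proj{\kset}{y_t}=y_t$; applying non-expansiveness to the update $x_{t+1}=\Proj{\kset}{y_t-\alpha_t\eta_{x,t}\m}$ gives at once $\norm{x_{t+1}-y_t}\le \alpha_t\eta_{x,t}\norm{\m}$, and the first hypothesis $\eta_{x,t}\le \rbar[t]/(c\alpha_t\norm{\m})$ then yields the first claim $\norm{x_{t+1}-y_t}\le \rbar[t]/c$.

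For the second bound I would again use non-expansiveness, now on $y_{t+1}=\Proj{\kset}{y_t-\alpha_t\eta_{y,t}\g}$, to get $\norm{y_{t+1}-y_t}\le \alpha_t\eta_{y,t}\norm{\g}$, and then split $\norm{\g}\le \norm{\g-\m}+\norm{\m}$. The first piece is controlled by the second hypothesis $\eta_{y,t}\le \rbar[t]/(c\alpha_t\norm{\g-\m})$, while the second piece is controlled by chaining $\eta_{y,t}\le\eta_{x,t}\le \rbar[t]/(c\alpha_t\norm{\m})$, where the first inequality is exactly the third hypothesis; together these give $\norm{y_{t+1}-y_t}\le 2\rbar[t]/c$.

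The third bound is the step that needs a little care: passing through $y_t$ via the first and third bounds would cost $3\rbar[t]/c$, so instead I would compare the two projection arguments directly, $\norm{x_{t+1}-y_{t+1}}\le \alpha_t\norm{\eta_{y,t}\g-\eta_{x,t}\m}$, and decompose $\eta_{y,t}\g-\eta_{x,t}\m=\eta_{y,t}(\g-\m)+(\eta_{y,t}-\eta_{x,t})\m$. Using $\eta_{y,t}\le\eta_{x,t}$ to bound $\lvert\eta_{y,t}-\eta_{x,t}\rvert\le \eta_{x,t}$, the two resulting terms are controlled by the second and first hypotheses respectively, giving $\norm{x_{t+1}-y_{t+1}}\le 2\rbar[t]/c$.

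Finally, for the bound on $\rbar[t+1]$, I would unfold its definition as $\rbar[t+1]=\max\{\rbar[t],\,\norm{y_{t+1}-x_0},\,\norm{x_{t+1}-x_0}\}$ and bound each new term by one triangle-inequality step back to $y_t$, namely $\norm{x_{t+1}-x_0}\le \norm{x_{t+1}-y_t}+\norm{y_t-x_0}\le \rbar[t]/c+\rbar[t]$ and $\norm{y_{t+1}-x_0}\le \norm{y_{t+1}-y_t}+\norm{y_t-x_0}\le 2\rbar[t]/c+\rbar[t]$, where $\norm{y_t-x_0}\le\rbar[t]$ holds by the definition of $\rbar[t]$. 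Taking the maximum gives $\rbar[t+1]\le \rbar[t](1+2/c)$. I expect no real obstacle: the only point requiring attention is avoiding the naive triangle inequality through $y_t$ for the third bound, so the direct comparison of the two projection arguments is the key move.
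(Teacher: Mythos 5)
Your proposal is correct and follows essentially the same route as the paper's proof: nonexpansiveness of the projection for the first two bounds (with the same split $\norm{\g}\le\norm{\g-\m}+\norm{\m}$), a direct comparison of the two projection arguments with the decomposition $\eta_{y,t}\g-\eta_{x,t}\m=\eta_{y,t}(\g-\m)+(\eta_{y,t}-\eta_{x,t})\m$ for the third, and one triangle-inequality step back through $y_t$ together with $\norm{y_t-x_0}\le\rbar[t]$ for the bound on $\rbar[t+1]$. No gaps.
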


\begin{proof}
	First, by definition of the iterates and the fact that $\kset$ is convex 
	(and projection onto a closed convex set is nonexpansive) we have
	\begin{align}
		\label{eq: norm 1}
		\norm{ x_{t+1} - y_{t} }
		= \norm{\Proj{\kset}{y_{t} - \alpha_t \eta_{x,t} \m } - y_t} 
		\le \alpha_t \eta_{x,t} \norm{\m}
		\le \frac{\rbar[t]}{c}
		.
	\end{align}
	Second, by definition of the iterates and the fact that $\kset$ is convex, we also have
	\begin{align}
		\norm{ y_{t+1} - y_{t} } &= \norm{\Proj{\kset}{y_{t} - \alpha_t \eta_{y,t} \g } - y_{t} }
		\le \alpha_t \eta_{y,t} \norm{\g} \nonumber \\
		&\le \alpha_t \eta_{y,t} \norm{\g - \m} + \alpha_t \eta_{y,t} \norm{\m}
		\le \frac{2\rbar[t]}{c} \label{eq: norm 2}
		.
	\end{align}
	Third, by definition of the iterates, the fact that $\kset$ is convex, 
	the fact $\eta_{y,t} \le \eta_{x,t}$, and the assumed upper bounds on $\eta_{y,t}$ and $\eta_{x,t}$ in the premise of this lemma
	we have
	\begin{align*}
		\norm{ x_{t+1} - y_{t+1} } 
		&= \norm{\Proj{\kset}{y_{t} - \alpha_t \eta_{x,t} \m } - \Proj{\kset}{y_{t} - \alpha_t \eta_{y,t} \g } } \\
		&\le \alpha_t \norm{\eta_{x,t}\m - \eta_{y,t} \g }
		\le \alpha_t \eta_{y,t} \norm{\g - \m} + \alpha_t \prn*{ \eta_{x,t} - \eta_{y,t} } \norm{\m}\\
		&\le \alpha_t \eta_{y,t} \norm{\g - \m} + \alpha_t \eta_{x,t} \norm{\m} \le \frac{2\rbar[t]}{c}
		.
	\end{align*}
	Finally,
	\begin{align*}
		\r[t+1] \le \r[t] + \max \prn*{ \norm{ x_{t+1} - y_{t} }, \norm{ y_{t+1} - y_{t} } }
		.
	\end{align*}
	Therefore, using \cref{eq: norm 1} and \cref{eq: norm 2} we obtain
	\begin{align*}
		\rbar[t+1]
		= \max \prn*{ \rbar[t] , \r[t+1] }
		\le \rbar[t] + \max \prn*{ \norm{ x_{t+1} - y_{t} }, \norm{ y_{t+1} - y_{t} } }
		\le \rbar[t] \prn*{ 1 + \frac{2}{c} }
		.
	\end{align*}
\end{proof} 
\section{Concentration bounds}

\subsection{An empirical-Bernstein-type time uniform concentration bound (\Cref{cor:product-mg-concentration})}

\begin{lemma}[From \citet{ivgi2023dog}]\label{cor:product-mg-concentration}
	Let $S$ be the set of nonnegative and nondecreasing sequences.
	Let $C_t\in\mathcal{F}_{t-1}$ and let $X_{t}$ be a martingale difference sequence adapted to $\filt_{t}$ such that $\left|X_{t}\right| \le C_t$
	with probability 1 for all $t$.
	Then, for all $\delta\in\left(0,1\right)$, $c > 0$,
	and $\hat{X}_{t}\in\filt_{t-1}$ such that $\abs{\hat{X}_{t}} \le C_t$
	with probability 1,
	\begin{flalign*}
		&\P\prn*{
			\exists t \le \numSteps, \exists \{y_i\}_{i=1}^{\infty} \in S :\abs[\Bigg]{\sum_{i=1}^t y_{i} X_{i}}
			\ge
			8 y_t \sqrt{ \TimeUniformLog[t,\delta]  \sum_{i= 1}^t  \left(X_{i}-\hat{X}_{i}\right)^{2} + c^2 \TimeUniformLog[t,\delta]^2}
			\,}\\
		&\qquad\qquad\le \delta + \P\prn*{\exists t \le \numSteps : C_t > c }.
	\end{flalign*}
\end{lemma}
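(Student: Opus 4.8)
The plan is to combine a scalar time-uniform empirical-Bernstein inequality for martingales with a summation-by-parts argument that removes the quantifier over nondecreasing sequences $\{y_i\}$. Before anything else I would dispose of the potentially unbounded increments by conditioning. Let $E \defeq \crl{C_t \le c \text{ for all } t \le \numSteps}$. The target event is contained in the union of its intersection with $E$ and the event $\crl{\exists t \le \numSteps : C_t > c}$, so it suffices to show the concentration bound holds with probability at least $1-\delta$ on $E$ and then add $\P(\exists t \le \numSteps : C_t > c)$ at the very end. On $E$ we have $\abs{X_i} \le c$ and $\abs{\hat X_i} \le c$, hence the centered increments satisfy $\abs{X_i - \hat X_i} \le 2c$, which is the uniform boundedness the scalar bound will require.

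Next I would eliminate the sequence $\{y_i\}$ entirely via Abel summation. Writing $M_k \defeq \sum_{i=1}^k X_i$ with $M_0 = 0$, and setting $w_j \defeq y_j - y_{j-1} \ge 0$ with the convention $y_0 \defeq 0$, the nondecreasing property gives $y_i = \sum_{j \le i} w_j$ and therefore
\[
	\sum_{i=1}^t y_i X_i = \sum_{j=1}^t w_j \sum_{i=j}^t X_i = \sum_{j=1}^t w_j \prn*{M_t - M_{j-1}}.
\]
Taking absolute values and using $\sum_{j=1}^t w_j = y_t$,
\[
	\abs*{\sum_{i=1}^t y_i X_i} \le y_t \max_{0 \le k \le t-1}\abs{M_t - M_k} \le 2\, y_t \max_{0 \le k \le t}\abs{M_k}.
\]
For every fixed realization this reduces the claim to a running-maximum bound on the \emph{unweighted} martingale $M_k$; the quantifier over all $\{y_i\} \in S$ has disappeared, replaced by a factor of $2$.

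The third step is to apply a time-uniform empirical-Bernstein concentration bound for a martingale whose centered increments are bounded by $2c$, as provided by \citet{howard2021time} via the nonnegative-supermartingale/stitching construction of \citet{howard2020time}. Such a bound yields, with probability at least $1-\delta$, simultaneously for all $k \le \numSteps$,
\[
	\abs{M_k} \le 4 \sqrt{\theta_{k,\delta}\sum_{i=1}^k (X_i - \hat X_i)^2 + c^2 \theta_{k,\delta}^2},
	\qquad \theta_{k,\delta} = \log\tfrac{60\log(6k)}{\delta}.
\]
Since both $\theta_{k,\delta}$ and the empirical quadratic variation $\sum_{i\le k}(X_i - \hat X_i)^2$ are nondecreasing in $k$, the right-hand side at $k=t$ dominates every $k \le t$, so $\max_{k\le t}\abs{M_k}$ is bounded by the expression at $k=t$. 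Combining this with the factor $2\,y_t$ from the Abel step produces the constant $8$ and the stated bound on $E$, and adding $\P(\exists t \le \numSteps : C_t > c)$ completes the argument.

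The main obstacle is securing the scalar bound in exactly the advertised form: it must feature the \emph{empirical} quadratic variation $\sum_i (X_i - \hat X_i)^2$ rather than a predictable variance proxy, must hold uniformly over all times $t$ rather than at a single fixed horizon, and must carry the specific iterated-logarithm boundary $\theta_{t,\delta}$. This is precisely where the line-crossing/stitching machinery of \citet{howard2020time,howard2021time} is needed and where the constants (the $4$, and hence the $8$ after doubling) must be tracked carefully. By comparison, the bounded-increment conditioning and the summation-by-parts reduction are routine once that scalar inequality is in hand.
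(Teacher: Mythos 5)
This lemma is never proved in the paper itself --- it is imported verbatim from \citet{ivgi2023dog} --- so the only meaningful comparison is with that source, and your reconstruction follows essentially the same route it does: truncation to almost-surely bounded increments, summation by parts to strip the nondecreasing weights (factor $2$), and a stitched time-uniform empirical-Bernstein inequality built on \citet{howard2020time,howard2021time} for the unweighted martingale (factor $4$), giving $8 = 2\cdot 4$. Your middle steps are sound as written: the identity $\sum_{i\le t}y_iX_i=\sum_{j\le t}(y_j-y_{j-1})(M_t-M_{j-1})$ with the convention $y_0=0$, the bound by $2y_t\max_{0\le k\le t}\abs{M_k}$, and the monotonicity in $t$ of both $\theta_{t,\delta}$ and the empirical variation (which lets the bound at time $t$ dominate all $k\le t$) all check out. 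The one spot stated too loosely is the first step: a concentration inequality cannot literally be ``applied on the event $E$.'' The standard repair --- and the device the source uses --- is to truncate: set $\tilde{X}_i \defeq X_i\indic{C_i\le c}$ and $\tilde{\hat{X}}_i \defeq \hat{X}_i\indic{C_i\le c}$. Because $C_i\in\filt_{i-1}$, the sequence $\tilde{X}_i$ is still a martingale difference sequence and $\tilde{\hat{X}}_i$ is still predictable, both bounded by $c$ with probability $1$; they coincide with $(X_i,\hat{X}_i)$ on $E$ for all $i\le\numSteps$, so the failure event intersected with $E$ is contained in the failure event of the truncated sequence, which has probability at most $\delta$. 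With that fix your argument is complete modulo the scalar inequality itself (constants $4$, $60$, $6$), which you correctly identify as the technical core coming from the line-crossing/stitching construction --- exactly where it comes from in \citet{ivgi2023dog} as well.
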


\subsection{Concentration bound for suboptimally proof (\Cref{lem: bound on noise mul x dist})}

\begin{lemma}
	\label{lem: bound on noise mul x dist}
	Let $\NB>0$ and $\delta\in\prn*{0,1}$. In the bounded noise setting (\Cref{ass:bounded-noise}), using \Cref{alg: general unixgrad dog} and \cref{eq: b_t def}, with probability of at least $1-\delta-\P\brk*{ \bkbar{T-1} > \NB }$ we get that for all $t\in \crl*{ 0,1, \dots, T-1 } $ then
	\begin{align*}
		\abs*{ \sum_{k=0}^{t}\rbar[t] \alpha_k \inner{\exg[k] - \g[k]}{ x_{k+1}-\xopt } }
		&\le 8 \alpha_t \rbar[t] \prn*{ \rbar[t+1] + \d[0] } \sqrt{ \theta_{t+1,\delta}\sum_{k=0}^{t} \norm{ \exg[k] - \g[k] }^2 + \prn*{ \theta_{t+1,\delta} \NB }^2 }
		.
	\end{align*}
\end{lemma}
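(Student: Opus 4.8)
The plan is to apply the time-uniform empirical-Bernstein bound \Cref{cor:product-mg-concentration} to a suitably normalized martingale difference sequence, exploiting the fact that that bound holds \emph{simultaneously for every} nonnegative nondecreasing weight sequence. First I would fix the filtration: let $\filt_k'$ denote all randomness generated up to (but not including) the draw of $\g[k]\sim\gradientOracle{\xbar[k]}$. Since $\xbar[k]$ is a function of $x_{k+1}$, which depends only on $\m[k]$ and earlier iterates, the quantities $\xbar[k]$, $\exg[k]=\grad f(\xbar[k])$, $x_{k+1}$, and $\bk{k}=\bfunc(\xbar[k])$ are all $\filt_k'$-measurable; crucially, however, $y_{k+1}$ and hence $\rbar[k+1]$ are \emph{not}, since they depend on $\g[k]$. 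Because $\E\brk*{\g[k]\mid\filt_k'}=\exg[k]$, the sequence $\inner{\exg[k]-\g[k]}{x_{k+1}-\xopt}$ is a martingale difference sequence.

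The main obstacle is that \Cref{cor:product-mg-concentration} requires a \emph{predictable} almost-sure bound $C_k$ on the increments and a single constant $c$ dominating all $C_k$, whereas the naive estimate $\norm{\exg[k]-\g[k]}\norm{x_{k+1}-\xopt}\le\NB\prn*{\rbar[k+1]+\d[0]}$ involves the non-predictable and a-priori unbounded factor $\rbar[k+1]$. I would resolve this by normalizing. Define the predictable, nonnegative, nondecreasing proxy $\nu_k\defeq\d[0]+\max_{j\le k+1}\norm{x_j-x_0}$, which satisfies $\norm{x_{k+1}-\xopt}\le\nu_k\le\d[0]+\rbar[k+1]$, and set
\[
X_k\defeq\tfrac{1}{\nu_k}\inner{\exg[k]-\g[k]}{x_{k+1}-\xopt},\qquad \hat X_k\defeq 0,\qquad C_k\defeq\bk{k}.
\]
Then $X_k$ is still a martingale difference (we divided by a $\filt_k'$-measurable scalar), and Cauchy--Schwarz with \Cref{ass:bounded-noise} gives $\abs*{X_k}\le\norm{\exg[k]-\g[k]}\le\bk{k}=C_k$, a predictable bound; moreover $C_k=\bk{k}\le\NB$ precisely on the event $\crl{\bkbar{T-1}\le\NB}$, so taking the constant $c\defeq\NB$ yields $\P\prn{\exists k< T:C_k>c}\le\P\brk{\bkbar{T-1}>\NB}$.

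With these choices I would invoke \Cref{cor:product-mg-concentration} with the weight sequence $y_k\defeq\w[k]\nu_k$, which is nonnegative and nondecreasing because $\w[k]=\sum_{j\le k}\rbar[j]$ and $\nu_k$ both are. By construction the normalization telescopes exactly, $\sum_{k=0}^t y_kX_k=\sum_{k=0}^t\w[k]\inner{\exg[k]-\g[k]}{x_{k+1}-\xopt}$, so this is precisely the sum to be bounded (reading $\rbar[k]$ for the $\rbar[t]$ appearing in the statement, consistent with its use in the proof of \Cref{prop:stochastic-subopt}). Since $\hat X_k=0$ and $\norm{x_{k+1}-\xopt}\le\nu_k$, we get $X_k^2\le\norm{\exg[k]-\g[k]}^2$, hence $\sum_{k=0}^t(X_k-\hat X_k)^2\le\sum_{k=0}^t\norm{\exg[k]-\g[k]}^2$. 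Reindexing $i=k+1$ (so that $t+1$ increments produce the index $\theta_{t+1,\delta}$) and bounding $y_t=\alpha_t\rbar[t]\nu_t\le\alpha_t\rbar[t]\prn*{\rbar[t+1]+\d[0]}$, the conclusion of \Cref{cor:product-mg-concentration} gives, with probability at least $1-\delta-\P\brk{\bkbar{T-1}>\NB}$ and for all $t<T$,
\[
\abs*{\sum_{k=0}^t\w[k]\inner{\exg[k]-\g[k]}{x_{k+1}-\xopt}}\le 8\alpha_t\rbar[t]\prn*{\rbar[t+1]+\d[0]}\sqrt{\theta_{t+1,\delta}\sum_{k=0}^t\norm{\exg[k]-\g[k]}^2+\prn*{\theta_{t+1,\delta}\NB}^2},
\]
which is the claimed inequality. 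Besides the normalization trick, the only delicate point is verifying the measurability claims distinguishing $x_{k+1}$ from $y_{k+1}$ (and thus $\rbar[k+1]$); the remaining steps are Cauchy--Schwarz and the monotonicity of $\w[k]$ and $\nu_k$.
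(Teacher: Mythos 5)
Your proposal is correct and takes essentially the same route as the paper's proof: normalize the martingale differences by a predictable, nonnegative, nondecreasing proxy for $\norm{x_{k+1}-\xopt}$, invoke \Cref{cor:product-mg-concentration} with $\hat{X}_k=0$, $c=\NB$, and weights equal to $\w[k]$ times that proxy (using that the bound holds uniformly over weight sequences), then undo the normalization and bound the proxy by $\rbar[t+1]+\d[0]$. The only immaterial difference is your normalizer $\nu_k=\d[0]+\max_{j\le k+1}\norm{x_j-x_0}$ versus the paper's $\dtilde[k+1]=\max_{i\le k+1}\norm{x_i-\xopt}$; both are predictable precisely because they involve only the $x$-iterates (not $y_{k+1}$, hence not $\rbar[k+1]$), a point you correctly flag as the crux of the argument.
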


\begin{proof}
	For $k\in\crl*{ 0,1, \dots, T }$ define
	\begin{align*}
		\dtilde[k] = \max_{i \le k} \norm{ x_{k} - \xopt }
		.
	\end{align*}
	For $k\in\crl*{ 0,1, \dots, T-1 }$ define the random variables:
	\begin{align*}
		Y_k = \alpha_k \rbar[k] \dtilde[k+1], ~~ \text{and} ~~
		X_k = \inner{\exg[k] - \g[k]}{\frac{ x_{k+1} - \xopt }{\dtilde[k+1]}}
		.
	\end{align*}
	From these definitions we get
	\begin{align*}
		\sum_{k=0}^{t} Y_k X_k
		&= \sum_{k=0}^{t}\rbar[t] \alpha_k \inner{\exg[k] - \g[k]}{ x_{k+1}-\xopt }
		,
	\end{align*}
	and that $\crl*{Y_k}_{k=0}^{T-1}$ is a non-decreasing sequence of non-negative numbers.
	In addition, as $x_{k+1}$ and $\dtilde[k+1]$ are independent of the noise of $\g[k]$ then $X_{k}$ is a martingale difference sequence.
	Therefore, as $\abs{X_k} \le \bkbar{k}$ with probability of 1,  \Cref{cor:product-mg-concentration} gives us that
	\begin{align*}
		\P\prn*{ \exists t<T  ~:~ \abs*{ \sum_{k=0}^{t} Y_k X_k } \ge 8 Y_t \sqrt{ \theta_{t+1,\delta}\sum_{k=0}^{t} \prn*{ X_k - 0 }^2 + \prn*{ \theta_{t+1,\delta} \NB }^2 } }
		\le \delta + \P\brk*{ \bkbar{T-1} > \NB }
		.
	\end{align*}
	Therefore, by using the Cauchy–Schwarz inequality, we obtain that, with a probability of at least $1-\delta-\P\brk*{ \bkbar{T-1} > \NB }$, for all $t\in\crl*{ 0,1, \dots, T-1 }$ 
	\begin{align*}
		\abs*{ \sum_{k=0}^{t}\rbar[t] \alpha_k \inner{\exg[k] - \g[k]}{ x_{k+1}-\xopt } }
		&\le 8 \alpha_t \rbar[t] \dtilde[t+1] \sqrt{ \theta_{t+1,\delta}\sum_{k=0}^{t} \norm{ \exg[k] - \g[k] }^2 + \prn*{ \theta_{t+1,\delta} \NB }^2 }\\
		&\le 8 \alpha_t \rbar[t] \prn*{ \rbar[t+1] + \dtilde[0] } \sqrt{ \theta_{t+1,\delta}\sum_{k=0}^{t} \norm{ \exg[k] - \g[k] }^2 + \prn*{ \theta_{t+1,\delta} \NB }^2 }
		.
	\end{align*}
	Thus,
	\begin{align*}
		\abs*{ \sum_{k=0}^{t}\rbar[t] \alpha_k \inner{\exg[k] - \g[k]}{ x_{k+1}-\xopt } }
		&\le 8 \alpha_t \rbar[t] \prn*{ \rbar[t+1] + \d[0] } \sqrt{ \theta_{t+1,\delta}\sum_{k=0}^{t} \norm{ \exg[k] - \g[k] }^2 + \prn*{ \theta_{t+1,\delta} \NB }^2 }
		.
	\end{align*}
\end{proof}

\subsection{Concentration bound for iterate stability proof (\Cref{lem: uni noise bound})}

\begin{lemma}
	\label{lem: uni noise bound}
	Let $\etat_{y,t}$ be such that, for some $c,s>0$ we have
	\begin{align*}
		\frac{1}{\etat_{y,t}} &\ge c  
		\max\crl*{\sqrt{s+\Q[t]} \logp\prn*{ \frac{ s + \Q[t] }{s}}, \alpha_t\norm{\exg[t]-\m[t]},\alpha_t\norm{\exg[t]-\g[t]}}. 
	\end{align*}
	If for all $t\ge0$ we have that $\eta_{y,t}=\rbar[t] \etat_{y,t}$ is independent of $\g[t]$ given $x_0, \ldots, x_t$, then, with probability of at least $1-\delta$, for all $t\ge0$,
	\begin{align*}
		\abs*{ \sum_{k=0}^{t} \alpha_k \eta_{y,k} \inner{ \g[k] - \exg[k] }{ x_{k+1} - \xopt } }
		&\le \frac{ 12 \theta_{t+1,\delta} }{c} \rbar[t] \prn*{ \rbar[t+1] + \d[0] }
		.
	\end{align*}
\end{lemma}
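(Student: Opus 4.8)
The plan is to reduce the claim to the time-uniform empirical-Bernstein bound of \Cref{cor:product-mg-concentration}, exactly as in the proof of \Cref{lem: bound on noise mul x dist}. The essential structural point is that condition $(iv)$ makes $\eta_{y,k}$ (equivalently $\etat_{y,k}$) measurable with respect to the information available \emph{before} $\g[k]$ is drawn, so the noise $\g[k]-\exg[k]$ is conditionally mean-zero given the step size and the current iterate $x_{k+1}$ (which is computed from $\m[k]$, not $\g[k]$). I would therefore split the summand into a predictable, nondecreasing weight times a bounded martingale difference, letting the concentration inequality do the rest.

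Concretely, I would set
\[
  Y_k = \rbar[k]\,\dtilde[k+1], \qquad X_k = \alpha_k \etat_{y,k} \inner{\g[k]-\exg[k]}{\frac{x_{k+1}-\xopt}{\dtilde[k+1]}},
\]
with $\dtilde[k+1] = \max_{i\le k+1}\norm{x_i-\xopt}$, so that $Y_k X_k = \alpha_k \eta_{y,k}\inner{\g[k]-\exg[k]}{x_{k+1}-\xopt}$. Then $Y_k$ is a product of the nonnegative nondecreasing sequences $\rbar[k]$ and $\dtilde[k+1]$, hence an admissible weight in \Cref{cor:product-mg-concentration}; and letting $\filt_k$ be the $\sigma$-field generated by everything computed before drawing $\g[k]$ (in particular $\m[k]$, $x_{k+1}$, and $\etat_{y,k}$ via condition $(iv)$), $X_k$ is a martingale difference since $\etat_{y,k}$ and $(x_{k+1}-\xopt)/\dtilde[k+1]$ are $\filt_k$-measurable while $\E[\g[k]-\exg[k]\mid\filt_k]=0$. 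The crucial increment bound is $\abs*{X_k}\le \alpha_k\etat_{y,k}\norm{\exg[k]-\g[k]}\le 1/c$, which is precisely the third lower bound in the hypothesis; as it holds with probability one, the deterministic choice $C_k\equiv 1/c$ gives $\abs*{X_k}\le C_k$ and makes the failure term $\P(\exists t: C_t>1/c)$ in \Cref{cor:product-mg-concentration} vanish.

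Applying \Cref{cor:product-mg-concentration} with $\hat X_k=0$ then yields, with probability $1-\delta$ and simultaneously for all $t$,
\[
  \abs*{\sum_{k=0}^{t} Y_k X_k} \le 8\,Y_t \sqrt{ \theta_{t+1,\delta}\sum_{k=0}^{t} X_k^2 + \frac{\theta_{t+1,\delta}^2}{c^2} },
\]
and since $Y_t = \rbar[t]\dtilde[t+1] \le \rbar[t]\prn*{\rbar[t+1]+\d[0]}$ (by the triangle inequality, exactly as in \Cref{lem: bound on noise mul x dist}), the claim will follow once I show the empirical quadratic variation is small, $\sum_{k=0}^{t} X_k^2 = O(\theta_{t+1,\delta}/c^2)$, so that $8\sqrt{\theta_{t+1,\delta}\sum_k X_k^2 + \theta_{t+1,\delta}^2/c^2}\le 12\theta_{t+1,\delta}/c$. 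This variance control is the main obstacle. The only summable structure comes from the \emph{cumulative} lower bound $\etat_{y,k}^{-2}=\Gy[k]\ge c^2(s+\Q[k])\logp^2\prn*{\frac{s+\Q[k]}{s}}$, so I would write $X_k^2 \le \alpha_k^2\norm{\exg[k]-\g[k]}^2/\Gy[k]$, decompose $\alpha_k\norm{\exg[k]-\g[k]}\le \alpha_k\norm{\exg[k]-\m[k]}+\alpha_k\norm{\m[k]-\g[k]}$ (the second piece being $\sqrt{\q[k]}$ and the first bounded through the second hypothesis on $\etat_{y,k}$), and thereby relate $\alpha_k^2\norm{\exg[k]-\g[k]}^2$ to the increment $\Q[k]-\Q[k-1]$. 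I would then collapse the sum with the telescoping estimate \Cref{lem:bound-a-k-infinite-sum}, $\sum_k \frac{\Q[k]-\Q[k-1]}{(s+\Q[k])\logp^2((s+\Q[k])/s)}\le 1$.

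The delicate part, and where I expect most of the effort to go, is reconciling the gradient-\emph{noise} term $\norm{\exg[k]-\g[k]}$ with the consecutive-gradient \emph{difference} $\norm{\g[k]-\m[k]}$ that defines $\Q$: a naive decomposition leaves a residual $\alpha_k^2\norm{\exg[k]-\m[k]}^2/\Gy[k]$ of size $O(1/c^2)$ per step, which would accumulate a spurious factor of $t$. The argument must instead absorb these residuals into the step-size denominator (using that $\Gy[k]$ dominates the noise pointwise via the second and third hypotheses, and cumulatively via $s+\Q[k]$) so that the telescoping leaves only the logarithmic budget $\theta_{t+1,\delta}$. Once the variance bound is in place, substituting it back together with $Y_t\le \rbar[t](\rbar[t+1]+\d[0])$ and cleaning up constants gives the stated inequality.
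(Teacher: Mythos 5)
Your setup coincides with the paper's — the same weights $Y_k=\rbar[k]\dtilde[k+1]$, the same martingale differences $X_k$, the same use of the independence hypothesis (your condition $(iv)$) for predictability and of the bound $\alpha_k\etat_{y,k}\norm{\exg[k]-\g[k]}\le 1/c$ for boundedness, and the same concentration lemma — but there is a genuine gap exactly at the step you flag as ``the delicate part'': you apply \Cref{cor:product-mg-concentration} with $\hat X_k=0$, so the variance proxy is $\sum_{k\le t}X_k^2$, which involves the oracle noise $\norm{\exg[k]-\g[k]}^2$ directly. The hypotheses control this only pointwise (each term is at most $1/c^2$), and cumulatively they control only the increments $\q[k]=\alpha_k^2\norm{\g[k]-\m[k]}^2$ of $\Q[k]$; the noise need not feed into $\Q[k]$ at all. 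Concretely, take noise where $\g[k]$'s error equals $\m[k]$'s error with probability $1-1/t$ and a large compensating value otherwise (so it is conditionally mean-zero given $\m[k]$): with constant probability, every step has $\g[k]\approx\m[k]$ (so $\Q[t]$ stays near zero and the step sizes can legally saturate $1/\etat_{y,k}=c\,\alpha_k\norm{\exg[k]-\g[k]}$) while $X_k^2=\Theta(1/c^2)$ at every step. Then $\sum_{k\le t}X_k^2=\Theta(t/c^2)$, and your bound degrades to order $\sqrt{t}\,\theta_{t+1,\delta}/c$ rather than $\theta_{t+1,\delta}/c$. No ``absorption into the step-size denominator'' can fix this pathwise, because no hypothesis gives cumulative control of $\sum_k\alpha_k^2\etat_{y,k}^2\norm{\exg[k]-\m[k]}^2$ or of the noise itself; relating realized noise to realized $\Q[t]$ would require a separate (and essentially circular) concentration argument.

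The paper closes the gap using precisely the degree of freedom you left unused: the predictable centering $\hat X_k$ in \Cref{cor:product-mg-concentration}. Set
\[
\hat X_k=\alpha_k\etat_{y,k}\inner{\m[k]-\exg[k]}{\frac{x_{k+1}-\xopt}{\dtilde[k+1]}},
\]
which is determined before $\g[k]$ is drawn and satisfies $\abs{\hat X_k}\le 1/c$ — this is exactly what the second hypothesis $\frac{1}{\etat_{y,t}}\ge c\,\alpha_t\norm{\exg[t]-\m[t]}$ is for (in your write-up that hypothesis plays no clear role; the paper states it with the opposite sign inside the inner product, which is immaterial since the bound is symmetric). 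Then $X_k-\hat X_k=\alpha_k\etat_{y,k}\inner{\g[k]-\m[k]}{(x_{k+1}-\xopt)/\dtilde[k+1]}$, so the variance proxy telescopes \emph{deterministically}:
\[
\sum_{k=0}^{t}\prn*{X_k-\hat X_k}^2 \;\le\; \sum_{k=0}^{t}\etat_{y,k}^2\,\q[k] \;\le\; \frac{1}{c^2}\sum_{k=0}^{t}\frac{\q[k]}{\prn*{s+\Q[k]}\logp^2\prn*{\frac{s+\Q[k]}{s}}} \;\le\; \frac{1}{c^2},
\]
by the first (cumulative) hypothesis and \Cref{lem:bound-a-k-infinite-sum}. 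Substituting into the concentration bound gives $8Y_t\sqrt{\theta_{t+1,\delta}/c^2+\theta_{t+1,\delta}^2/c^2}\le \frac{12\theta_{t+1,\delta}}{c}\rbar[t]\prn*{\rbar[t+1]+\d[0]}$, which is the claim; with this single change the rest of your argument goes through verbatim.
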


\begin{proof}
	For $t\in\crl*{ 0,1, \dots, T }$ define
	\begin{align*}
		\dtilde[t] = \max_{k \le t} \norm{ x_{t} - \xopt }
		.
	\end{align*}
	For $t\in\crl*{ 0,1, \dots, T-1 }$ define
	\begin{align*}
		X_t &= \alpha_t \etat_{y,t} \inner{ \g - \exg }{ \frac{x_{t+1} - \xopt}{\dtilde[t+1]} } ~~,\\
		\hat{X}_t &= \alpha_t \etat_{y,t} \inner{ \exg-\m }{ \frac{x_{t+1} - \xopt}{\dtilde[t+1]} }~~\text{and}\\
		Y_t &= \rbar[t] \dtilde[t+1]
		.
	\end{align*}
	The assumption $\frac{1}{\etat_{y,t}} \ge c \alpha_t \max\crl*{\norm{\exg[t]-\m[t]},\norm{\g[t]-\exg[t]}}$ implies that $\max\crl*{ \abs{X_t}, \abs{\hat{X}_t} } \le \frac{1}{c}$.
	In addition, as $\m[t]$ ,$x_{t+1}$ and $\dtilde[t+1]$ are independent of the noise of $\g[t]$ then $\hat{X}_t$ is independent of the noise of $\g[t]$ and $X_{t}$ is a martingale difference sequence.
	Thus, \Cref{cor:product-mg-concentration} gives us that
	\begin{align*}
		\P\prn*{ \forall t \in \crl*{ 0,1,\dots } ~:~ \abs*{ \sum_{k=0}^{t} Y_k X_k } < 8 \rbar[t] \dtilde[t+1] \sqrt{ \theta_{t+1,\delta} \sum_{k=0}^{t} \prn*{ X_k - \hat{X}_k }^2 + \frac{1}{c^2} \theta_{t+1,\delta}^2 } }  \ge 1 - \delta
		.
	\end{align*}
	Furthermore, we have
	\begin{align*}
		\sum_{k=0}^{t}\prn*{ X_t - \hat{X}_t }^2 &  =
		\sum_{k=0}^{t} \prn*{ \alpha_k \etat_{y,k} \inner{ \g[k]-\m[k] }{ \frac{x_{k+1} - \xopt}{ \dtilde[t+1] } } }^2
		\le 
		\sum_{k=0}^{t} \alpha_t^2 {\etat_{y,k}}^2 \norm{\g[k]-\m[k]}^2
		\\ & 
		\overle{(i)}
		\frac{1}{ c^2 } \sum_{k=0}^{t} \frac{\alpha_t^2 \norm{\g[k]-\m[k]}^2}{ \prn*{ s + \sum_{k=0}^{t} \alpha_k^2 \norm{\g[k]-\m[k]}^2 } \logp^2\prn*{ \frac{ s + \sum_{k=0}^{t} \alpha_k^2 \norm{\g[k]-\m[k]}^2 }{s} } }\overle{(ii)} \frac{1}{c^2}
		,
	\end{align*}
	where $(i)$ follows from the assumption that $\frac{1}{\etat_{y,t}} \ge c \sqrt{s+\Q[t]} \logp\prn*{ \frac{ s + \Q[t] }{s}}$ and the definition of $Q_t$, and $(ii)$ is a direct result of \Cref{lem:bound-a-k-infinite-sum} with $a_k = s + \sum_{k=0}^t \alpha_k^2 \| g_k - m_k \|^2$.
	In addition, we have that 
	\[
		Y_t X_t = \alpha_t \eta_{y,t} \inner{ \g - \exg }{ x_{t+1} - \xopt }.
		\]
	Therefore, with probability of at least $1-\delta$, for all $t\ge0$ we have that
	\begin{align*}
		\abs*{ \sum_{k=0}^{t} \alpha_k \eta_{y,k} \inner{ \g[k] - \exg[k] }{ x_{k+1} - \xopt } }
		&\le 8 \rbar[t] \dtilde[t+1] \sqrt{ \frac{ \theta_{t+1,\delta} }{c^2} + \frac{\theta_{t+1,\delta}^2}{c^2} }\\
		&\le \frac{ 12 \theta_{t+1,\delta} }{c} \rbar[t] \prn*{ \rbar[t+1] + \dtilde[0] }\\
		&\le \frac{ 12 \theta_{t+1,\delta} }{c} \rbar[t] \prn*{ \rbar[t+1] + \d[0] }
		.
	\end{align*}
	
\end{proof}

\subsection{Relating $\maxQ[t]$ to $\Q[t]$ (\Cref{lem: bound on g tide variance})}

\begin{lemma}
	\label{lem: bound on g tide variance}
	Let $\NB>0$ and $\delta\in\prn*{0,1}$. In the bounded noise setting (\Cref{ass:bounded-noise}), using \Cref{alg: general unixgrad dog} and the step sizes \eqref{eq: step size option 3}, with probability of at least $1-\delta-\P\brk*{ \bkbar{T-1} > \NB }$ we get that, for all $t\in \crl*{ 0,1, \dots, T-1 }$,
	\begin{align*}
		\maxQ[t] 
		&\le 5 \Q[t] + 80 \prn*{t+1}^3 \sqrt{ \theta_{t+1,\delta} } \empVar[t] + 2\prn*{t+1}^2 \theta_{t+1,\delta} \NB^2
		.
	\end{align*}
\end{lemma}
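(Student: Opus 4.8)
The plan is to first peel off the part of $\maxQ[t]$ that is already controlled by $\Q[t]$, leaving only a remainder built from the auxiliary gradients $\tg[k]$, and then bound that remainder by concentration. Since $\max\crl*{\q[k],\tq[k]}\le\q[k]+\tq[k]$ and $\norm{\tg[k]-\m[k]}^2\le2\norm{\tg[k]-\g[k]}^2+2\norm{\g[k]-\m[k]}^2$, we have $\tq[k]\le4\q[k]+4\alpha_k^2\norm{\tg[k]-\g[k]}^2$, and summing over $k\le t$ gives
\[
	\maxQ[t] \le 5\Q[t] + 4\sum_{k=0}^{t} \alpha_k^2 \norm{\tg[k]-\g[k]}^2 .
\]
It therefore suffices to show that, with the stated probability, $\sum_{k=0}^{t}\alpha_k^2\norm{\tg[k]-\g[k]}^2$ is at most a low-order multiple of $(t+1)^3\sqrt{\theta_{t+1,\delta}}\,\empVar[t]$ plus $(t+1)^2\theta_{t+1,\delta}\NB^2$, for all $t<T$ simultaneously.

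The decomposition I would use expands $\tg[k]-\g[k]=(\tg[k]-\exg[k])-(\g[k]-\exg[k])$ into
\[
	\norm{\tg[k]-\g[k]}^2 = \norm{\tg[k]-\exg[k]}^2 - 2\inner{\tg[k]-\exg[k]}{\g[k]-\exg[k]} + \norm{\g[k]-\exg[k]}^2 .
\]
The crucial structural fact, and the payoff of drawing the fresh gradient $\tg[k]\sim\gradientOracle{\xbar[k]}$, is that $\tg[k]$ is independent of $\g[k]$ and of all iterates given the history $\filt_k$ up to the computation of $\xbar[k]$, with $\E[\tg[k]\mid\filt_k]=\exg[k]$ and $\E[\norm{\tg[k]-\exg[k]}^2\mid\filt_k]$ equal to the oracle variance $\Var_k$ at $\xbar[k]$ — the \emph{same} compensator as $\norm{\g[k]-\exg[k]}^2$. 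Consequently the cross term $\inner{\tg[k]-\exg[k]}{\g[k]-\exg[k]}$ is a martingale difference, and $\norm{\tg[k]-\exg[k]}^2-\Var_k$ is a martingale difference whose predictable compensator matches that of the actual-noise term. The honest piece is bounded directly from the definition of the empirical variance: using $\alpha_k\le k+1\le t+1$, $\sum_k\alpha_k^2\norm{\g[k]-\exg[k]}^2\le(t+1)^2\sum_k\norm{\g[k]-\exg[k]}^2\le(t+1)^3\empVar[t]$, which is subsumed by the target since $\sqrt{\theta_{t+1,\delta}}\ge1$.

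For the two martingale terms I would invoke the time-uniform empirical-Bernstein bound \Cref{cor:product-mg-concentration}, exactly as in the proof of \Cref{lem: bound on noise mul x dist}: the increments are bounded by $\alpha_k^2\bkbar{k}^2$, so setting the threshold $c$ proportional to $T^2\NB^2$ (equivalently, factoring out $\alpha_k^2\le(t+1)^2$ and taking $C_k=\bkbar{k}^2$, $c=\NB^2$ with the nondecreasing weight $y_k=(k+1)^2$) produces precisely the escape probability $\P\brk*{\bkbar{T-1}>\NB}$. With the predictable proxy $\hat X_k=0$ for each martingale, the variance proxy $\sum_k(X_k-\hat X_k)^2$ collapses — for the cross term because one of its two factors is the \emph{actual} noise $\norm{\g[k]-\exg[k]}^2$, and for the squared-difference term because $(\norm{\tg[k]-\exg[k]}^2-\norm{\g[k]-\exg[k]}^2)^2\le\bkbar{k}^2\prn*{\norm{\tg[k]-\exg[k]}^2+\norm{\g[k]-\exg[k]}^2}$. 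The square-root term of \Cref{cor:product-mg-concentration} then supplies a deviation of order $(t+1)^{5/2}\sqrt{\theta_{t+1,\delta}}\,\NB\sqrt{\empVar[t]}$, which splits by AM-GM into $(t+1)^3\sqrt{\theta_{t+1,\delta}}\,\empVar[t]$ and $(t+1)^2\theta_{t+1,\delta}\NB^2$ pieces, while the $c^2\theta^2$ term directly yields the remaining $\NB^2$ contribution.

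The main obstacle I anticipate is the self-referential appearance of the fresh-gradient variance: the proxy $\sum_k(X_k-\hat X_k)^2$ for the squared-difference martingale still contains $\sum_k\alpha_k^2\norm{\tg[k]-\exg[k]}^2$, the very quantity being bounded. I would resolve this with a \emph{self-bounding} argument — writing $U=\sum_k\alpha_k^2\norm{\tg[k]-\exg[k]}^2$, deriving an implicit inequality of the form $U\le(t+1)^3\empVar[t]+8\sqrt{\theta_{t+1,\delta}(t+1)^2\NB^2(U+(t+1)^3\empVar[t])+c^2\theta_{t+1,\delta}^2}$ from \Cref{cor:product-mg-concentration}, and closing it via $8\sqrt{PU}\le\tfrac12U+32P$ to absorb the $U$ on the right. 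The delicate part is tracking constants and powers of $(t+1)$ through this step so that only the stated low-order $(t+1)^2\theta_{t+1,\delta}\NB^2$ term is paid; the algebra is routine once the self-bounding inequality is set up, but it is where all the care goes.
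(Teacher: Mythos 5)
Your first steps track the paper's proof closely: the paper likewise peels off $\Q[t]$ (it bounds $\sum_{k\le t}\alpha_k^2\norm{\tg[k]-\m[k]}^2$ rather than your $\sum_{k\le t}\alpha_k^2\norm{\tg[k]-\g[k]}^2$, an immaterial difference), passes from $\alpha_k^2$ to $(k+1)^2$ on nonnegative summands, and applies \Cref{cor:product-mg-concentration} to the martingale difference $X_k=\norm{\tg[k]-\exg[k]}^2-\norm{\g[k]-\exg[k]}^2$ with escape probability $\P\brk*{\bkbar{T-1}>\NB}$. The genuine gap is in your concentration/self-bounding step. In \Cref{cor:product-mg-concentration} the deviation at time $t$ is $8y_t\sqrt{\theta_{t+1,\delta}\sum_{k\le t}(X_k-\hat X_k)^2+c^2\theta_{t+1,\delta}^2}$, where the variance proxy is \emph{unweighted} and the single threshold $c$ must dominate $C_k$ for all $k<T$. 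Your two instantiations, which you present as equivalent, are not, and neither yields the stated per-$t$ bound. (i) Absorbing $\alpha_k^2$ into the increments forces $C_k=\alpha_k^2\bkbar{k}^2$ and hence $c\asymp T^2\NB^2$; your implicit inequality is then correct and the self-bounding closes as you say, but the $c^2\theta_{t+1,\delta}^2$ term contributes $8T^2\NB^2\theta_{t+1,\delta}$ at \emph{every} $t$, not the claimed $2(t+1)^2\theta_{t+1,\delta}\NB^2$; for $t\ll T$ this is a strictly weaker statement (it happens to suffice for the paper's downstream use, but it is not the lemma). (ii) Factoring the weights out ($y_k=(k+1)^2$, $C_k=\bkbar{k}^2$, $c=\NB^2$) keeps $c$ small, but then the self-referential quantity inside the square root is the \emph{unweighted} $\sum_{k\le t}\norm{\tg[k]-\exg[k]}^2$, while the deviation carries the prefactor $y_t=(t+1)^2$; closing the loop via $8(t+1)^2\NB\sqrt{\theta_{t+1,\delta}U}\le\frac12 U+32(t+1)^4\theta_{t+1,\delta}\NB^2$ (with $U$ the weighted sum you are bounding) leaves a $(t+1)^4\theta_{t+1,\delta}\NB^2$ term, too large even at $t=T-1$. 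A smaller related point: $\alpha_k^2$ is random and not monotone, so it cannot itself serve as $y_k$; the replacement $\alpha_k^2\le(k+1)^2$ must be made on nonnegative terms \emph{before} you introduce the sign-indefinite cross term.

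Both failures are repairable, and the repair highlights what the paper does differently. Within your scheme, run the concentration and the self-bounding entirely at the unweighted level ($y_k\equiv1$, $c=\NB^2$, $\hat X_k=0$): this gives $W_t\defeq\sum_{k\le t}\norm{\tg[k]-\exg[k]}^2=O\prn*{\sum_{k\le t}\norm{\g[k]-\exg[k]}^2+\theta_{t+1,\delta}\NB^2}$, and only afterwards multiply by $(t+1)^2$, using $\sum_{k\le t}\alpha_k^2\norm{\tg[k]-\g[k]}^2\le2(t+1)^2\prn*{W_t+\sum_{k\le t}\norm{\g[k]-\exg[k]}^2}$; this yields the lemma's orders, and incidentally makes your cross term unnecessary. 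The paper instead avoids self-bounding altogether: it keeps $y_k=(k+1)^2$ and $c=\NB^2$ but chooses the proxy $\hat X_k=\norm{\tg[k]-\exg[k]}^2$ rather than $0$, so that $X_k-\hat X_k=-\norm{\g[k]-\exg[k]}^2$ and the variance proxy collapses to $\sum_{k\le t}\norm{\g[k]-\exg[k]}^4\le\NB^2\sum_{k\le t}\norm{\g[k]-\exg[k]}^2$ --- actual noise only, no self-reference, and the $(t+1)^2\theta_{t+1,\delta}\NB^2$ term falls out directly. (That proxy is legitimate only because $\tg[k]$ is independent of $\g[k]$: it is not $\filt_{k-1}$-measurable in any filtration under which $X_k$ remains a martingale difference, so the paper implicitly invokes a slightly stronger form of \Cref{cor:product-mg-concentration} than the one stated; your $\hat X_k=0$ choice sidesteps that subtlety, at the cost of the self-bounding whose bookkeeping is exactly where your argument breaks.)
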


\begin{proof}
	For all $k\ge 0$ we have
	\begin{align*}
		\norm{ \tg[k] - \m[k] }^2
		&\le 2 \norm{ \g[k] - \m[k] }^2  
		+ 2\norm{\g[k]-\tg[t]}^2 
		\\& \le 
		2\norm{ \g[k] - \m[k] }^2 + 4\norm{\g[k]-\exg[k]}^2 + 4\norm{\tg[k]-\exg[k]}^2
		.
	\end{align*}
	Therefore, since $\alpha_k \le k+1$, 
	\begin{align*}
		\sum_{k=0}^{t} \alpha_k^2 \norm{ \tg[k] - \m[k] }^2 
		&
		\le 2\sum_{k=0}^{t} \alpha_k^2 \norm{ \g[k] - \m[k] }^2 + 8\sum_{k=0}^{t} \prn*{k+1}^2 \norm{ \g[k] - \exg[k] }^2\\
		&\qquad+ 4\sum_{k=0}^{t} \prn*{k+1}^2 \prn{ \norm{ \tg[k] - \exg[k] }^2 - \norm{ \g[k] - \exg[k] }^2 } \numberthis \label{eq: g tide variance bound}
		.
	\end{align*}
	
	We now bound $\sum_{k=0}^{t} \prn*{k+1}^2 \prn{ \norm{ \tg[k] - \exg[k] }^2 - \norm{ \g[k] - \exg[k] }^2 }$.
	Define
	\begin{align*}
		X_t &= \prn{ \norm{ \tg[t] - \exg[t] }^2 - \norm{ \g[t] - \exg[t] }^2 } ~~,\\
		\hat{X}_t &= \norm{ \tg[t] - \exg[t] }^2 ~~\text{and}\\
		Y_t &= \prn*{t+1}^2
		.
	\end{align*}
	We have that for all $t\ge0$ then $\abs{X_t} \le \bkbar{t}^2$ and $\abs{\hat{X}_t} \le \bkbar{t}^2$ with probability 1.
	Therefore, \Cref{cor:product-mg-concentration} gives us that
	\begin{align*}
		\P&\prn*{ \forall t \in \crl*{ 0,1,\dots, T-1 } ~:~ \abs*{ \sum_{k=0}^{t} Y_k X_k } < 8 Y_t \sqrt{ \theta_{t+1,\delta} \sum_{k=0}^{t} \prn*{ X_k - \hat{X}_k }^2 +  \theta_{t+1,\delta}^2 \NB^4 } } \\
		&\qquad\qquad\qquad\qquad\qquad\ge 1 - \delta - \P\prn*{ \bkbar{T-1} > \NB }
		.
	\end{align*}
	
	Consequentially, by combining this result with \cref{eq: g tide variance bound}, we get that with probability at least $1 - \delta - \P\prn*{ \bkbar{T-1} > \NB }$ that for all $t\in \crl*{ 0,1,\dots, T-1 }$ we have that
	\begin{align*}
		\sum_{k=0}^{t} \alpha_k^2 \norm{ \tg[k] - \m[k] }^2 
		&\le 2\sum_{k=0}^{t} \alpha_k^2 \norm{ \g[k] - \m[k] }^2 + 40 \prn*{t+1}^2 \sqrt{ \theta_{t+1,\delta} } \sum_{k=0}^{t} \norm{ \g[k] - \exg[k] }^2 + \prn*{t+1}^2 \theta_{t+1,\delta} \NB^2
		.
	\end{align*}
	Substituting into the above equation the definition of $Q_t$ and $V_t$ given in \cref{eq: Q notation} and \cref{eq:emp-var-def}, respectively, and recalling the definition of $\maxQ[t]$ given in \cref{def: qtilde Qbar and pt}
	\[
	\maxQ[t] = \sum_{k=0}^t \alpha_k^2 \max\{\norm{\g[k]-\m[k]}^2, 2\norm{\tg[k]-\m[k]}^2\} \le \Q[t] + 2\sum_{k=0}^{t} \alpha_k^2 \norm{ \tg[k] - \m[k] }^2
	\]
	completes the proof.
\end{proof}

\subsection{Concentration inequality for bounded random vectors (\Cref{lem: mini-batch to sub-Gausian})}

\begin{lemma}[\citet{howard2020time}]
	\label{lem: mini-batch to sub-Gausian}
	For $T \in \N$, let $\{U_t\}_{t \in [T]}$ be a sequence of mean zero random vectors in $\R^d$ with $\| U_t \| \le c$ almost surely. Then
	\[
	\P\prn*{ \left\| \sum_{t=1}^T U_t \right\| \ge x } \le 2 \exp\left( -\frac{x^2}{2 c^2 T} \right).
	\] 
\end{lemma}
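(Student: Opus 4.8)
The plan is to establish this dimension-free, Hilbert-space analogue of the Azuma--Hoeffding inequality by the exponential-supermartingale (Chernoff) method of \citet{howard2020time}, which refines Pinelis's martingale inequality in $2$-smooth spaces. I first note that, although the statement only posits $\E U_t = 0$, the bound genuinely requires $\{U_t\}$ to be a martingale difference sequence with respect to some filtration $\{\filt_t\}$ (equivalently, independence, as holds for the mini-batch application that invokes it); I treat it as such and write $S_t = \sum_{k=1}^t U_k$. The overall strategy is: for each fixed $\lambda>0$, build a nonnegative supermartingale that dominates $e^{\lambda\norm{S_t}}$ up to a deterministic factor, apply the supermartingale maximal inequality, and finally optimize $\lambda$.

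The core step, and the main obstacle, is a \emph{dimension-free} one-step bound. A naive route---writing $\norm{S_T} = \sup_{\norm{u}=1}\inner{u}{S_T}$, controlling each scalar projection $\inner{u}{S_T}$ (a bounded scalar martingale, hence sub-Gaussian with parameter $c^2T$ by Hoeffding's lemma), and union-bounding over a net of directions---loses a factor growing with $d$ and therefore cannot yield the stated $d$-free constant. Instead I would use that $\R^d$ is a Hilbert space, hence $2$-uniformly smooth with constant $1$: following Pinelis, construct an even, radial majorant $\Psi_\lambda:\R^d\to\R_{\ge 0}$ of $\tfrac12 e^{\lambda\norm{\cdot}}$ (built from $\cosh$), for which smoothness yields the conditional one-step inequality
\[
	\E\!\left[\Psi_\lambda(w+U)\right] \le \Psi_\lambda(w)\, e^{\lambda^2 c^2/2}
\]
whenever $U$ is mean zero with $\norm{U}\le c$ and independent of the fixed $w$. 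The proof of this inequality expands $\norm{w+U}^2 = \norm{w}^2 + 2\inner{w}{U} + \norm{U}^2$, uses $\E U = 0$ to annihilate the linear term in expectation, and applies Hoeffding's lemma to the bounded scalar $\inner{\hat w}{U}$ to supply the Gaussian factor $e^{\lambda^2 c^2/2}$; the content of Pinelis's construction is precisely that the radial function behaves, in the worst case, no worse than this scalar $\cosh$ estimate, uniformly in dimension.

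Granting the one-step bound, set $M_t^\lambda \defeq \Psi_\lambda(S_t)\, e^{-\lambda^2 c^2 t/2}$. Conditioning on $\filt_{t-1}$ and applying the inequality with $w=S_{t-1}$ and $U=U_t$ shows $\{M_t^\lambda\}$ is a nonnegative supermartingale with $M_0^\lambda = \Psi_\lambda(0)=1$. Using $\Psi_\lambda(S_t)\ge \tfrac12 e^{\lambda\norm{S_t}}$ and $M_t^\lambda \ge \Psi_\lambda(S_t)\,e^{-\lambda^2 c^2 T/2}$ for $t\le T$, together with Ville's maximal inequality, gives for any $x\ge 0$
\[
	\P\!\left(\exists\, t\le T:\ \norm{S_t}\ge x\right) \le \P\!\left(\sup_{t\le T} M_t^\lambda \ge \tfrac12 e^{\lambda x - \lambda^2 c^2 T/2}\right) \le 2\,e^{\lambda^2 c^2 T/2 - \lambda x}.
\]
Choosing the minimizing $\lambda = x/(c^2 T)$ yields $\P(\norm{S_T}\ge x)\le 2\exp(-x^2/(2c^2T))$, as claimed (and in fact the stronger time-uniform bound over $t\le T$). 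Every step besides the Hilbert-space smoothness estimate underlying the one-step inequality is a routine Chernoff/supermartingale manipulation, so that estimate is where the real work---and the dimension-freeness---resides.
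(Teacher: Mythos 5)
Your proposal is correct in substance, but it is worth knowing that the paper does not prove this lemma from scratch at all: its entire proof is a one-line instantiation of \citet[Corollary 10.a and eq.~(4.28)]{howard2020time}, taking $Y_t = \sum_{k\le t} U_k$, $\Psi(\cdot) = \norm{\cdot}$ (so that the smoothness constant is $D_\star = 1$ in a Hilbert space), $c_t = c$, and $V_T = m = c^2 T$, whereupon the correction term $\frac{D_\star^2}{2m}(V_T - m)$ vanishes and the stated bound drops out. What you have written is, in effect, a reconstruction of the proof of that cited corollary: Howard et al.'s Banach-space result is obtained exactly by the Pinelis-type radial exponential supermartingale, Ville's maximal inequality, and Chernoff optimization that you describe, so the two routes are mathematically the same argument at different levels of encapsulation. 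The citation buys brevity and inherits the time-uniform statement for free; your derivation makes the dimension-free mechanism visible, and your remark that the naive net-over-directions argument necessarily loses dimension factors correctly identifies why the Hilbert (2-smooth) structure is essential. Two caveats on your write-up: first, your sketch of the one-step inequality (``expand $\norm{w+U}^2$, kill the cross term, apply Hoeffding's lemma to $\inner{\hat w}{U}$'') is not by itself a proof --- the genuine content of Pinelis's lemma is the analysis showing the radial $\cosh$ majorant obeys that scalar estimate, and you rightly flag and defer this, so it is an acceptable citation-level gap rather than an error, and it is no weaker than what the paper itself does. Second, your observation that marginal mean-zero-ness is insufficient and a martingale-difference (or independence) structure is genuinely needed is correct --- a counterexample with perfectly correlated signs violates the bound --- and this matches both the hypotheses of the cited corollary and the paper's only use of the lemma (averaging independent mini-batch noise in \Cref{coro:mini-batch}).
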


\begin{proof}
	This result follows from \citet[Corollary 10.a]{howard2020time} with $Y_t = \sum_{k=1}^t U_k$,
	$\Psi(\cdot) = \| \cdot \|$, $c_t = c$ and $m = c^2 T$.
	The selection of $\Psi(\cdot) = \| \cdot \|$ yields $D_\star = 1$ (see discussion preceding \cite[Corollary 10.a]{howard2020time}).
	Setting $c_t = c$ yields $V_t = c^2 t$. Hence $\frac{D_\star^2}{2m} (V_T - m) \le 0$ and 
	\citet[eq.~(4.28)]{howard2020time} gives the desired result.
\end{proof}

\section{Auxiliary lemmas}\label{app:aux lemmas}

\subsection{The growth rate of $\sum_{k}\rbar[k]\alpha_k$ (\Cref{lem: weight sum of alphas})}

We note that in accelerated optimization algorithms we normally have that $\alpha_t=\Theta(t)$.
Even though this is not the case for \UDoG, $\alpha_t$ is roughly similar to $t$.
First, it is easy to see that $1 \le \alpha_t \le t$.
Secondly, the running sum of $\rbar[t]\alpha_t$ grows roughly quadratically. This is shown in the following lemma, in which we replace $\alpha_t$ and $\rbar[t]$ with $a_t$ and $s_t$, respectively
\begin{lemma}
	\label{lem: weight sum of alphas}
	Let $s_0, s_1,\dots, s_t$ be a non-decreasing sequence of positive numbers.
	Define $a_k \defeq \sum_{i=0}^{k} \frac{s_i}{s_k}$, then
	\begin{align*}
		s_t a_t^2 \le 2 \sum_{k=0}^{t} s_k a_k
		.
	\end{align*}
\end{lemma}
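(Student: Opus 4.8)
The plan is to eliminate the double appearance of $a_t$ by introducing the partial sums $S_k \defeq \sum_{i=0}^{k} s_i$. By definition $a_k = S_k/s_k$, so that $s_k a_k = S_k$ and, crucially, $s_t a_t^2 = S_t^2/s_t$. The claim therefore becomes the cleaner statement
\begin{align*}
	\frac{S_t^2}{s_t} \le 2 \sum_{k=0}^{t} S_k,
\end{align*}
which no longer explicitly mentions $a_k$ and is purely a statement about the non-decreasing sequence $s_0,\dots,s_t$ and its running sums.

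\textbf{Key telescoping step.} I would establish this by showing the per-index increment bound
\begin{align*}
	\frac{S_k^2}{s_k} - \frac{S_{k-1}^2}{s_{k-1}} \le 2 S_k \quad\text{for all } k \ge 1,
\end{align*}
after which summing over $k=1,\dots,t$ telescopes the left side to $S_t^2/s_t - S_0^2/s_0$, and since $S_0^2/s_0 = s_0 = S_0 \le 2S_0$ the desired inequality follows. To prove the increment bound I would use the identity $s_k = S_k - S_{k-1}$ to compute, by direct factoring,
\begin{align*}
	\frac{S_k^2}{s_k} - 2 S_k = \frac{S_k\prn*{S_k - 2 s_k}}{s_k} = \frac{\prn*{S_{k-1}+s_k}\prn*{S_{k-1}-s_k}}{s_k} = \frac{S_{k-1}^2}{s_k} - s_k.
\end{align*}
The only place monotonicity enters is the final estimate: since $0 < s_{k-1} \le s_k$, we have $S_{k-1}^2/s_k \le S_{k-1}^2/s_{k-1}$, and dropping the nonnegative term $-s_k$ gives $\frac{S_k^2}{s_k} - 2S_k \le \frac{S_{k-1}^2}{s_{k-1}}$, which is exactly the increment bound.

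\textbf{Main obstacle.} This is a short, essentially routine argument once set up correctly, so there is no serious analytic difficulty. The one genuine step of insight is the reformulation in terms of $S_k$ together with spotting the factoring identity above; without it, attacking $s_t a_t^2 \le 2\sum_k s_k a_k$ directly invites messy bookkeeping over the nested sums defining $a_k$. I would alternatively phrase the same computation as a clean induction on $t$ (base case $t=0$ giving $s_0 \le 2s_0$, inductive step reducing to $S_t^2/s_t \le S_{t-1}^2/s_{t-1} + 2S_t$), but the telescoping presentation is the most transparent and is the version I would write up.
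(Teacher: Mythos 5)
Your proof is correct, but it takes a genuinely different route from the paper's. The paper argues globally: it evaluates the right-hand side exactly as $\sum_{k=0}^t s_k a_k = \sum_{k=0}^t (t-k+1)s_k$ by swapping the order of summation, then expands $s_t a_t^2 = \frac{1}{s_t}\sum_{k=0}^t\sum_{i=0}^t s_k s_i$, symmetrizes this double sum into $\frac{2}{s_t}\sum_{k=0}^t s_k \sum_{i=k}^t s_i$ minus a diagonal term, drops the diagonal, and uses monotonicity in the form $s_i/s_t \le 1$ to bound each inner sum by $t-k+1$ — a one-shot counting argument with no induction. You instead argue locally: after the same reformulation $s_t a_t^2 = S_t^2/s_t$ and $s_k a_k = S_k$, you prove the per-index increment bound $\frac{S_k^2}{s_k} - \frac{S_{k-1}^2}{s_{k-1}} \le 2S_k$ via the factoring identity $\frac{S_k^2}{s_k} - 2S_k = \frac{S_{k-1}^2}{s_k} - s_k$ and the monotonicity $s_{k-1}\le s_k$, then telescope. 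Each approach has its merits: the paper's symmetry argument is arguably shorter once written out and makes transparent exactly where the factor $2$ comes from (double-counting the off-diagonal terms); your telescoping version is more modular, uses monotonicity only between consecutive terms, and in fact yields a slightly sharper inequality if one retains the discarded $-s_k$ terms, namely $s_t a_t^2 \le 2\sum_{k=0}^t s_k a_k - \sum_{k=1}^t s_k - s_0$.
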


\begin{proof}
	We have
	\begin{align*}
		\sum_{k=0}^{t} s_k a_k
		&= \sum_{k=0}^{t} \sum_{i=0}^{k} s_i
		= \sum_{k=0}^{t} (t-k+1) s_k
		.
	\end{align*}
	And,
	\begin{align*}
		s_t a_t^2
		&= \frac{1}{s_t} \sum_{k=0}^{t} \sum_{i=0}^{t} s_k s_i 
		= \frac{2}{s_t} \sum_{k=0}^{t} s_k \sum_{i=k}^{t} s_i - \frac{1}{s_t} \sum_{k=0}^{t} s_k^2
		\le 2\sum_{k=0}^{t} s_k \sum_{i=k}^{t} \frac{s_i}{s_t}
		\le 2 \sum_{k=0}^{t} (t-k+1) s_k.
	\end{align*}
	Thus,
	\begin{align*}
		s_t a_t^2 \le 2 \sum_{k=0}^{t} s_k a_k
		.
	\end{align*}
\end{proof}

\subsection{Discrete derivative lemma (\Cref{lem: diff of etat})}

\begin{lemma}
	\label{lem: diff of etat}
	Let $c$ be a positive number, and let $s_0, s_1, s_2,\dots$ be a sequence of positive numbers.
	For every $t\ge0$ define
	\begin{align*}
		\rho_t = \frac{1}{c\sqrt{\sum_{k=0}^{t} s_k }}
		.
	\end{align*}
	We have that for every $t\ge0$
	\begin{align*}
		\frac{1}{ \rho_{t+1} } - \frac{1}{ \rho_{t} }
		\le c^2 \rho_{t+1} s_{t+1}
		.
	\end{align*}
\end{lemma}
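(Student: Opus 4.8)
The plan is to abbreviate $S_t \defeq \sum_{k=0}^{t} s_k$, so that $\frac{1}{\rho_t} = c\sqrt{S_t}$ and $s_{t+1} = S_{t+1} - S_t$. Substituting these into the claimed inequality $\frac{1}{\rho_{t+1}} - \frac{1}{\rho_t} \le c^2 \rho_{t+1} s_{t+1}$ turns it into
\[
	c\sqrt{S_{t+1}} - c\sqrt{S_t} \le c^2 \cdot \frac{1}{c\sqrt{S_{t+1}}} \cdot (S_{t+1} - S_t),
\]
and after cancelling the positive factor $c$ it suffices to establish the elementary estimate
\[
	\sqrt{S_{t+1}} - \sqrt{S_t} \le \frac{S_{t+1} - S_t}{\sqrt{S_{t+1}}}.
\]

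The core of the argument is the rationalization identity for a difference of square roots: since $S_{t+1} \ge S_t \ge 0$,
\[
	\sqrt{S_{t+1}} - \sqrt{S_t} = \frac{S_{t+1} - S_t}{\sqrt{S_{t+1}} + \sqrt{S_t}}.
\]
Because each $s_k$ is positive we have $S_{t+1} - S_t = s_{t+1} > 0$ and $\sqrt{S_t} \ge 0$, so the denominator satisfies $\sqrt{S_{t+1}} + \sqrt{S_t} \ge \sqrt{S_{t+1}} > 0$. Shrinking the denominator to $\sqrt{S_{t+1}}$ can only enlarge the nonnegative fraction, which gives exactly the required bound; reversing the substitution recovers the lemma.

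I expect essentially no obstacle here: this is the discrete counterpart of $\frac{d}{dx}\sqrt{x} = \frac{1}{2\sqrt{x}}$ that underlies AdaGrad-type step-size analyses (and is used elsewhere in this paper, e.g.\ \Cref{lem: sqrt inequalities}). The only point requiring a moment's care is that the right-hand side features $\rho_{t+1}$---hence $\sqrt{S_{t+1}}$---in the denominator rather than $\rho_t$; it is precisely this ``later'' index that makes the one-line rationalization estimate go through, since bounding the denominator below by $\sqrt{S_{t+1}}$ (rather than by $\sqrt{S_t}$) is what matches the target.
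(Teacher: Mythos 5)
Your proof is correct and is essentially the same argument as the paper's: both rest on the factorization $S_{t+1}-S_t = (\sqrt{S_{t+1}}-\sqrt{S_t})(\sqrt{S_{t+1}}+\sqrt{S_t})$ together with dropping the $\sqrt{S_t}$ term, the only cosmetic difference being that the paper states this as the lower bound $s_{t+1} \ge \sqrt{S_{t+1}}(\sqrt{S_{t+1}}-\sqrt{S_t})$ while you write it as an upper bound on the square-root difference.
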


\begin{proof}
	For every $t\ge0$ we have that
	\begin{align*}
		s_{t+1}
		&= \sum_{k=0}^{t+1} s_k - \sum_{k=0}^{t} s_k
		\ge \sqrt{\sum_{k=0}^{t+1} s_k} \prn*{ \sqrt{\sum_{k=0}^{t+1} s_k} - \sqrt{\sum_{k=0}^{t} s_k} }
		= \frac{1}{ c^2 \rho_{t+1} } \prn*{ \frac{1}{ \rho_{t+1} } - \frac{1}{ \rho_{t} } }
		.
	\end{align*}
	Thus,
	\begin{align*}
		\frac{1}{ \rho_{t+1} } - \frac{1}{ \rho_{t} }
		\le c^2 \rho_{t+1} s_{t+1}
		.
	\end{align*}
\end{proof}

\subsection{Discrete integral lemma (\Cref{lem: Bound on sqrt sum B})}

\begin{lemma}
	\label{lem: Bound on sqrt sum B}
	For any positive numbers $c_1, c_2$, for any $t\ge0$, and for any sequence of non-negative numbers $B_0, B_1, B_2, \dots, B_t$ we have that
	\begin{align*}
		c_1 \sqrt{\sum_{k=0}^{t} B_k^2 } - \sum_{k=0}^{t} \frac{B_k^2}{c_2} \sqrt{\sum_{j=0}^{k} B_j^2 }
		&\le 2 c_1^{3/2} c_2^{1/2}
		.
	\end{align*}
\end{lemma}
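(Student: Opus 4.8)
The plan is to reduce the statement to a one-dimensional optimization problem via an integral comparison. Write $S_k \defeq \sum_{j=0}^k B_j^2$ with the convention $S_{-1}=0$, so that $B_k^2 = S_k - S_{k-1}\ge0$ and $S_t = \sum_{k=0}^t B_k^2$ is the quantity under the leading square root. If $S_t=0$ the inequality holds trivially (the left-hand side is $0$), so assume $S_t>0$ henceforth.

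First I would lower bound the subtracted sum by an integral. Since $u\mapsto\sqrt{u}$ is nondecreasing, for each $k$ we have $\int_{S_{k-1}}^{S_k}\sqrt{u}\,du \le (S_k - S_{k-1})\sqrt{S_k} = B_k^2\sqrt{S_k}$. Summing over $k=0,\dots,t$ telescopes the integration ranges, giving
\[
	\sum_{k=0}^{t} B_k^2\sqrt{\textstyle\sum_{j=0}^{k}B_j^2} = \sum_{k=0}^{t} B_k^2\sqrt{S_k} \ge \int_0^{S_t}\sqrt{u}\,du = \frac{2}{3}S_t^{3/2}.
\]
Substituting this into the left-hand side of the lemma yields the single-variable upper bound
\[
	c_1\sqrt{S_t} - \frac{1}{c_2}\sum_{k=0}^{t}B_k^2\sqrt{S_k} \le c_1\sqrt{S_t} - \frac{2}{3c_2}S_t^{3/2}.
\]

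It then remains to maximize the right-hand side over the scalar $S_t\ge0$. Setting $x=\sqrt{S_t}$ and $g(x)=c_1 x - \frac{2}{3c_2}x^3$, I would differentiate to find the stationary point $x^\star=\sqrt{c_1 c_2/2}$ and evaluate $g(x^\star)=\frac{\sqrt{2}}{3}c_1^{3/2}c_2^{1/2}$, which is bounded by $2c_1^{3/2}c_2^{1/2}$ (indeed the constant $2$ is generous here, since $\sqrt{2}/3<1$). This completes the argument.

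There is no genuine obstacle in this proof; the only points requiring care are the direction of the integral comparison (using monotonicity of $\sqrt{\cdot}$ to bound the integral from above by $B_k^2\sqrt{S_k}$, not below) and the degenerate case $S_t=0$, both of which are handled above. The calculus maximization is routine, and the slack in the constant means no delicate estimate is needed.
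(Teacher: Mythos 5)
Your proof is correct, but it takes a genuinely different route from the paper's. You bound the subtracted sum from below by an integral comparison, $\sum_{k=0}^{t}B_k^2\sqrt{S_k}\ge\int_0^{S_t}\sqrt{u}\,du=\tfrac{2}{3}S_t^{3/2}$ where $S_k=\sum_{j\le k}B_j^2$, which reduces everything to maximizing the concave scalar function $c_1x-\tfrac{2}{3c_2}x^3$ over $x=\sqrt{S_t}\ge 0$. The paper instead goes in the other direction: it upper bounds the positive term via the discrete inequality $\sqrt{S_t}\le\sum_k B_k^2/\sqrt{S_k}$ (\Cref{lem: sqrt inequalities}), collects the sum into terms with common coefficient $c_1\eta_{B,k}-\tfrac{1}{c_2\eta_{B,k}}$ where $\eta_{B,k}=1/\sqrt{S_k}$, discards all indices beyond the crossover index $\kappa$ at which this coefficient turns nonpositive (it is monotone since $\eta_{B,k}$ is non-increasing), and then applies the same discrete inequality once more together with the defining property of $\kappa$ to get $2c_1/\eta_{B,\kappa}\le 2c_1^{3/2}c_2^{1/2}$; this mirrors the step-size splitting arguments of \citet{levy2018online,kavis2019unixgrad} that the paper streamlines elsewhere. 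Your argument is more elementary and self-contained (no auxiliary discrete lemma, no case analysis on a crossover index), and it yields the sharper constant $\sqrt{2}/3$ in place of $2$; the paper's argument has the advantage of reusing machinery (\Cref{lem: sqrt inequalities}) already needed in its other proofs and of following the structural template of the adaptive step-size literature. Your handling of the two delicate points — the direction of the integral comparison and the degenerate case $S_t=0$ — is also correct.
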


\begin{proof}
	Define
	\begin{align*}
		\eta_{B,k} = \frac{1}{\sqrt{\sum_{j=1}^{k} B_j^2 }}
		.
	\end{align*}
	\Cref{lem: sqrt inequalities} gives us that
	\begin{align*}
		\sqrt{\sum_{k=0}^{t} B_k^2 } 
		\le \sum_{k=0}^{t} \frac{B_k^2}{\sqrt{\sum_{j=0}^{k} B_j^2 }}
		.
	\end{align*}
	Therefore, we obtain
	\begin{align*}
		c_1 \sqrt{\sum_{k=0}^{t} B_k^2 } - \sum_{k=0}^{t} \frac{B_k^2}{c_2} \sqrt{\sum_{j=0}^{k} B_j^2 }
		&\le c_1 \sum_{k=0}^{t} \frac{B_k^2}{\sqrt{\sum_{j=0}^{k} B_j^2 }} - \sum_{k=0}^{t} \frac{B_k^2}{c_2 }\sqrt{\sum_{j=0}^{k} B_j^2 }\\
		&=  \sum_{k=0}^{t} \prn*{ c_1\eta_{B,k} - \frac{1}{c_2 \eta_{B,k}} } B_k^2
		.
	\end{align*}
	Define
	\begin{align*}
		\kappa = \max \brk*{ \crl*{t \in \crl*{0,1,\dots,t} ~:~ 2c_1\eta_{B,t} - \frac{1}{c_2 \eta_{B,t}} > 0} \cup \crl*{-1} }
		.
	\end{align*}
	We have,
	\begin{align*}
		c_1 \sqrt{\sum_{k=0}^{t} B_k^2 } - \sum_{k=0}^{t} \frac{B_k^2}{c_2} \sqrt{\sum_{j=0}^{k} B_j^2 }
		&\le \sum_{k=0}^{\kappa} c_1\eta_{B,k}  B_k^2
		= c_1 \sum_{k=0}^{\kappa} \frac{B_k^2}{ \sqrt{ \sum_{j=0}^{k} B_j^2} }
		\overset{(\star)}{\le} 2c_1 \sqrt{ \sum_{k=0}^{\kappa} B_k^2}
		= \frac{2c_1}{\eta_{B,\kappa}} \indic{\kappa \ge 0}
		,
	\end{align*}
	where $(\star)$ is because of \Cref{lem: sqrt inequalities}.
	From the definition of $\kappa$, we obtain that
	\begin{align*}
		c_1\eta_{B,\kappa} > \frac{1}{c_2 \eta_{B,\kappa}}.
	\end{align*}
	Thus,
	\begin{align*}
		c_1 \sqrt{\sum_{k=0}^{t} B_k^2 } - \sum_{k=0}^{t} \frac{B_k^2}{c_2} \sqrt{\sum_{j=0}^{k} B_j^2 }
		&\le \frac{2c_1}{\eta_{B,\kappa}} \indic{\kappa \ge 0}\le 2 c_1^{3/2} c_2^{1/2}
		.
	\end{align*}
\end{proof}

\subsection{Additional lemmas from prior work}
\begin{lemma}[{e.g., \citet{levy2018online}}]
	\label{lem: sqrt inequalities}
	For any $k\ge0$ and for any sequence on non-negative numbers $s_0, s_1, s_2, \dots, s_k$ the following holds:
	\begin{align*}
		\sqrt{ \sum_{i=0}^{k} s_i }
		\le \sum_{i=0}^{k} \frac{s_i}{ \sqrt{ \sum_{j=0}^{i} s_j } }
		\le 2 \sqrt{ \sum_{i=0}^{k} s_i }
		.
	\end{align*}
\end{lemma}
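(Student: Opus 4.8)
The plan is to reduce everything to the partial sums $S_i \defeq \sum_{j=0}^{i} s_j$ (with the convention $S_{-1} \defeq 0$), so that the middle quantity becomes $\sum_{i=0}^{k} \frac{s_i}{\sqrt{S_i}}$ with $s_i = S_i - S_{i-1}$, and then prove the two inequalities separately by elementary manipulations. First I would dispose of the degenerate terms: whenever $s_i = 0$ the summand $\frac{s_i}{\sqrt{S_i}}$ is $0$ (interpreting $0/0$ as $0$), so I may assume $S_k > 0$ and simply ignore the indices with $s_i = 0$, which keeps every denominator that actually appears strictly positive.

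For the lower bound, the key observation is monotonicity of the partial sums: since $S_i \le S_k$ for every $i \le k$, I have $\frac{s_i}{\sqrt{S_i}} \ge \frac{s_i}{\sqrt{S_k}}$. Summing over $i$ and using $\sum_{i=0}^{k} s_i = S_k$ gives $\sum_{i=0}^{k} \frac{s_i}{\sqrt{S_i}} \ge \frac{S_k}{\sqrt{S_k}} = \sqrt{S_k}$, which is exactly the left inequality.

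For the upper bound, the main idea is a telescoping argument. I would factor the numerator as a difference of squares, $s_i = S_i - S_{i-1} = (\sqrt{S_i} - \sqrt{S_{i-1}})(\sqrt{S_i} + \sqrt{S_{i-1}})$, and then bound $\sqrt{S_i} + \sqrt{S_{i-1}} \le 2\sqrt{S_i}$ (valid because $S_{i-1} \le S_i$). Dividing by $\sqrt{S_i}$ yields $\frac{s_i}{\sqrt{S_i}} \le 2(\sqrt{S_i} - \sqrt{S_{i-1}})$. Summing and telescoping then gives $\sum_{i=0}^{k} \frac{s_i}{\sqrt{S_i}} \le 2(\sqrt{S_k} - \sqrt{S_{-1}}) = 2\sqrt{S_k}$, the right inequality. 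An equivalent route uses the integral comparison $\frac{s_i}{\sqrt{S_i}} \le \int_{S_{i-1}}^{S_i} x^{-1/2}\,dx = 2(\sqrt{S_i}-\sqrt{S_{i-1}})$, which holds since $x \mapsto x^{-1/2}$ is decreasing on the interval $[S_{i-1}, S_i]$.

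There is no genuine obstacle here; the lemma is elementary and both bounds follow from one-line manipulations. The only mild care needed is the bookkeeping around vanishing $s_i$ (to keep the denominators positive and justify dropping those terms) together with fixing the convention $S_{-1} = 0$ so that the telescoping sum closes cleanly at the lower endpoint.
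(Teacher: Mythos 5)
Your proof is correct and is exactly the standard argument: the paper states this lemma without proof, citing \citet{levy2018online}, where the lower bound follows from the monotonicity $\sum_{j\le i}s_j \le \sum_{j\le k}s_j$ just as you argue, and the upper bound from the same difference-of-squares/telescoping step $\frac{s_i}{\sqrt{S_i}} \le 2\bigl(\sqrt{S_i}-\sqrt{S_{i-1}}\bigr)$ (equivalently your integral comparison). Your bookkeeping for vanishing terms (the $0/0=0$ convention, $S_{-1}=0$) is the only care required and you handle it properly.
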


\begin{lemma}[{\citet[Lemma 3]{ivgi2023dog}}]\label{lem:bound-a-ratios}
	Let $s_0,s_1,\ldots,s_{\numSteps}$ be a positive nondecreasing sequence. Then
	\begin{equation*}
		\max_{t\le \numSteps} \sum_{i < t} \frac{s_i}{s_t} \ge \frac{1}{e}\prn*{ \frac{\numSteps}{\log_{+}(s_{\numSteps} / s_0)} -1}.
	\end{equation*}
\end{lemma}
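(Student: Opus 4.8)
The plan is to prove this via a geometric bucketing plus pigeonhole argument, which produces the $\tfrac{1}{e}$ constant and the additive $-1$ directly. Write $\rho \defeq \logp(s_{\numSteps}/s_0) = 1 + \log(s_{\numSteps}/s_0)$, which is well defined and satisfies $\rho \ge 1$ since the sequence is positive and nondecreasing. First I would partition the index set $\crl*{0,1,\dots,\numSteps}$ into the geometric buckets $B_j \defeq \crl{t : e^j s_0 \le s_t < e^{j+1} s_0}$ for $j = 0,1,2,\dots$. Because $s_t$ ranges in $[s_0, s_{\numSteps}]$, a bucket $B_j$ can be nonempty only when $e^j s_0 \le s_{\numSteps}$, i.e.\ $j \le \log(s_{\numSteps}/s_0)$; hence all indices fall into at most $\floor{\log(s_{\numSteps}/s_0)} + 1 \le \rho$ buckets.

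Next I would apply pigeonhole: the $\numSteps+1$ indices are distributed among at most $\rho$ buckets, so some bucket $B_{j^\star}$ contains at least $(\numSteps+1)/\rho$ indices. Let $t^\star$ be the largest index in $B_{j^\star}$. Every $i \in B_{j^\star}$ satisfies $s_i \ge e^{j^\star} s_0$, while $s_{t^\star} < e^{j^\star+1} s_0$ because $t^\star \in B_{j^\star}$; therefore $s_i/s_{t^\star} > 1/e$ for each such $i$. Since $t^\star$ is the \emph{largest} index of its bucket, all other indices of $B_{j^\star}$ lie strictly below $t^\star$, so
\[
\sum_{i < t^\star} \frac{s_i}{s_{t^\star}} \ge \sum_{i \in B_{j^\star},\, i < t^\star} \frac{s_i}{s_{t^\star}} \ge \frac{\abs{B_{j^\star}} - 1}{e} \ge \frac{1}{e}\prn*{\frac{\numSteps+1}{\rho} - 1} \ge \frac{1}{e}\prn*{\frac{\numSteps}{\rho} - 1}.
\]
Taking the maximum over $t \le \numSteps$ on the left then gives the claim, since $\max_{t\le\numSteps}\sum_{i<t} s_i/s_t$ dominates the $t = t^\star$ term.

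The monotone bookkeeping is routine; the one place that needs care—and the step I expect to be the main obstacle—is pinning the bucket count down to \emph{exactly} $\logp(s_{\numSteps}/s_0)$ rather than some looser additive constant, since the clean $-1$ in the statement leaves no slack. This is what forces the base-$e$ bucket width (so each in-bucket ratio is $\ge 1/e$, matching the $\tfrac{1}{e}$ prefactor) together with the floor estimate $\floor{\log(s_{\numSteps}/s_0)} + 1 \le 1 + \log(s_{\numSteps}/s_0)$; a coarser bucketing or a base other than $e$ would degrade the constant. The only other subtlety is the off-by-one from excluding $i = t^\star$ in the sum $\sum_{i<t^\star}$, which is absorbed exactly by the $-1$ and by the final estimate $\tfrac{\numSteps+1}{\rho} - 1 \ge \tfrac{\numSteps}{\rho} - 1$.
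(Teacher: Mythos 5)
Your proof is correct: the geometric bucketing into scales $[e^j s_0, e^{j+1}s_0)$, the pigeonhole guarantee that some bucket holds at least $(\numSteps+1)/\logp(s_{\numSteps}/s_0)$ indices, and the evaluation of the sum at the largest index of that bucket all check out, including the edge cases (non-integer bucket counts and the off-by-one from excluding $t^\star$ itself). Note that this paper does not prove the lemma itself but imports it from \citet[Lemma 3]{ivgi2023dog}, and your argument is essentially the same pigeonhole-over-geometric-scales proof used there, so no further comparison is needed.
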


\begin{lemma}[{\citet[Lemma 6]{ivgi2023dog}}]\label{lem:bound-a-k-infinite-sum}
	Let $a_{-1}, a_0, a_1, \dots, a_t$ be a non-decreasing sequence of non-negative numbers, then
	\begin{equation*}
		\sum_{k=0}^t \frac{a_{k} - a_{k-1}}{a_{k} \log_{+}^2(a_{k} / a_{-1})} \le 1.
	\end{equation*}
\end{lemma}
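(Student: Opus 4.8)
The plan is to control the discrete sum by a continuous integral through a standard integral-comparison argument, exploiting that the function underlying each summand is monotone. Throughout I assume $a_{-1}>0$ (which holds in every invocation of the lemma, e.g.\ with $a_{-1}=s$ or $a_{-1}=\norm{\m[0]}^2$); this guarantees that each ratio $a_k/a_{-1}\ge 1$ and that the logarithms are well-defined, and the degenerate case in which some $a_k$ vanishes makes the corresponding term $0$ under the convention $0/0=0$ and may be dropped. Define the auxiliary function $g(x)\defeq \frac{1}{x\,\logp^2(x/a_{-1})}$ for $x\ge a_{-1}$. Since $\logp(x/a_{-1})=1+\log(x/a_{-1})\ge 1$ and both $x$ and $\logp(x/a_{-1})$ are positive and nondecreasing in $x$, the function $g$ is positive and nonincreasing on $[a_{-1},\infty)$, with no singularity at the left endpoint thanks to the $+1$ inside $\logp$.

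The key step is a term-by-term comparison. Because $\{a_k\}$ is nondecreasing with $a_{-1}\le a_0\le\cdots\le a_t$, each interval $[a_{k-1},a_k]$ lies in $[a_{-1},\infty)$, and by monotonicity $g(x)\ge g(a_k)$ for all $x\in[a_{k-1},a_k]$. Integrating over this interval gives
\[
	\int_{a_{k-1}}^{a_k} g(x)\,dx \ge (a_k-a_{k-1})\,g(a_k) = \frac{a_k-a_{k-1}}{a_k\,\logp^2(a_k/a_{-1})},
\]
which is exactly the $k$-th summand. Summing over $k=0,\dots,t$ and telescoping the chained integration intervals $[a_{-1},a_0],[a_0,a_1],\dots,[a_{t-1},a_t]$ collapses the right-hand side to a single integral:
\[
	\sum_{k=0}^{t}\frac{a_k-a_{k-1}}{a_k\,\logp^2(a_k/a_{-1})} \le \int_{a_{-1}}^{a_t} g(x)\,dx.
\]

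Finally I would evaluate the integral in closed form via the substitution $u=\logp(x/a_{-1})=1+\log(x/a_{-1})$, so that $du=dx/x$ and $u$ runs from $1$ at $x=a_{-1}$ to $\logp(a_t/a_{-1})$ at $x=a_t$, yielding
\[
	\int_{a_{-1}}^{a_t} g(x)\,dx = \int_{1}^{\logp(a_t/a_{-1})} \frac{du}{u^2} = 1 - \frac{1}{\logp(a_t/a_{-1})} \le 1,
\]
which completes the argument. There is no substantial obstacle here: the proof is the discrete analogue of the convergence of $\int^{\infty}\frac{dx}{x\log^2 x}$, and the only points demanding care are verifying that $g$ is genuinely nonincreasing (relying on $a_{-1}>0$) and dispatching the boundary and degenerate cases cleanly.
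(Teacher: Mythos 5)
Your proof is correct. The paper itself gives no proof of this lemma---it is imported verbatim from \citet[Lemma 6]{ivgi2023dog}---and your argument (bounding each summand by $\int_{a_{k-1}}^{a_k}\frac{dx}{x\log_+^2(x/a_{-1})}$ using monotonicity of the integrand, telescoping the adjacent intervals, and substituting $u=1+\log(x/a_{-1})$ to evaluate the integral as $1-1/\log_+(a_t/a_{-1})\le 1$) is essentially the standard integral-comparison proof used in that reference; your restriction to $a_{-1}>0$ is also consistent with every invocation of the lemma in this paper, where $a_{-1}=s>0$.
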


\begin{lemma}[{\citet[Lemma 15]{attia2023sgd}}]
	\label{lem: sub-gaussian to bounded}
	Let $X$ be a $\sigma^2(x)$-sub-Gaussian.
	For and $\delta\in\prn{0,1}$ here exist a random variable $\bar{X}$ such that:
	\begin{enumerate}%
		\setlength{\itemsep}{1pt}
		\item $\bar{X}$ is zero-mean: $\E \bar{X} = 0$.
		\item $\bar{X}$ is equal to $X$ w.h.p: $\P\prn*{ \bar{X} = X } \ge 1 - \delta$.
		\item $\bar{X}$ is bounded with probability 1: $\P\prn*{ \norm{\bar{X}} = 3 \sigma \sqrt{\log\prn*{ 4 / \delta }} } = 1$.
	\end{enumerate}
\end{lemma}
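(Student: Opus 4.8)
The plan is to build $\bar X$ by hand: truncate $X$ at a carefully chosen radius to obtain the almost-sure bound and the high-probability agreement, and then repair the mean that truncation destroys using an independent, rare correction event. I treat $X$ as zero-mean, as it plays the role of the gradient noise $\gradientOracle{x}-\grad f(x)$, which satisfies $\E\gradientOracle{x}=\grad f(x)$ by \Cref{ass:global}; this is exactly what makes properties (1) and (2) mutually consistent.

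First I would fix the radius $\tau \defeq \sigma\sqrt{\log\prn*{4/\delta}}$ and set $Y \defeq X\indic{\norm{X}\le\tau}$. The sub-Gaussian tail bound (in the form of \Cref{ass:sub-guassian-noise}) gives $\P\prn*{\norm{X}>\tau}\le 2\exp\prn*{-\tau^2/\sigma^2}=\delta/2$, so $Y$ already obeys $\norm{Y}\le\tau$ and $\P\prn*{Y=X}\ge 1-\delta/2$; its only flaw is the truncation bias $\E Y = -\mu$ with $\mu\defeq\E\brk*{X\indic{\norm{X}>\tau}}$, using $\E X=0$. I would then draw $U\sim\mathrm{Unif}[0,1]$ independent of $X$ and set $\bar X \defeq Y$ on $\crl*{U>q}$ and $\bar X \defeq c$ on $\crl*{U\le q}$, choosing $q\in(0,1)$ and the constant $c$ so that the mean vanishes; solving $(1-q)\E Y+qc=0$ forces $c=(1-q)\mu/q$, and I would take $q\defeq\norm{\mu}/(3\tau)$ so that $\norm{c}\le 3\tau$ exactly.

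The three conclusions then fall out: property (1) holds by the choice of $c$; property (3) holds because each branch satisfies $\norm{\bar X}\le\max\crl*{\tau,3\tau}=3\sigma\sqrt{\log\prn*{4/\delta}}$; and property (2) follows from $\P\prn*{\bar X=X}\ge\P\prn*{U>q,\ \norm{X}\le\tau}=(1-q)\P\prn*{\norm{X}\le\tau}\ge 1-q-\delta/2$. The crux of the argument—and the main obstacle—is the bias estimate that reconciles the last two bounds with the stated constants. I would bound $\norm{\mu}\le\E\brk*{\norm{X}\indic{\norm{X}>\tau}}$ and split the layer-cake integral $\int_0^\infty\P\prn*{\norm{X}\indic{\norm{X}>\tau}>t}\,dt$ at $t=\tau$, using $\P\prn*{\norm{X}>\tau}\le\delta/2$ on the lower part and the Gaussian tail integral $\int_\tau^\infty 2\exp\prn*{-t^2/\sigma^2}\,dt\le \sigma^2\exp\prn*{-\tau^2/\sigma^2}/(2\tau)$ on the upper part. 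This yields $\norm{\mu}\le\tfrac{\delta}{2}\tau+\tfrac{\sigma^2\delta}{4\tau}\le\delta\tau$, whence $q\le\delta/3\le\delta/2$ and property (2) gives at least $1-\delta$. It is precisely this estimate that pins down the radius $\tau=\sigma\sqrt{\log(4/\delta)}$ and the factor $3$: a looser radius would either break the almost-sure bound in property (3) or inflate the correction probability $q$ beyond $\delta/2$, spoiling property (2).
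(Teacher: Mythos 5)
You should note at the outset that the paper contains no proof of this lemma: it sits in the ``Additional lemmas from prior work'' appendix and is imported, citation only, from \citet[Lemma 15]{attia2023sgd}, so the only meaningful comparison is with that source --- whose argument is, in essence, the same truncation-plus-randomized-recentering construction you give. Your proof is correct. Truncation at $\tau=\sigma\sqrt{\log\prn{4/\delta}}$ yields the almost-sure bound and agreement with $X$ up to probability $\delta/2$; the layer-cake estimate shows the truncation bias satisfies $\norm{\mu}\le\delta\tau$; and the independent coin with $q=\norm{\mu}/(3\tau)\le\delta/3$ restores the mean exactly while keeping the compensation point $c$ inside the ball of radius $3\tau$ and costing at most $\delta/3$ more in agreement probability, so all three properties follow. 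Three small repairs are needed. First, your intermediate tail estimate $\int_\tau^\infty 2e^{-t^2/\sigma^2}\,dt\le\sigma^2 e^{-\tau^2/\sigma^2}/(2\tau)$ is actually false for the relevant values of $\tau/\sigma$ (the integral is asymptotic to $\sigma^2 e^{-\tau^2/\sigma^2}/\tau$); the elementary argument, using $e^{-t^2/\sigma^2}\le(t/\tau)e^{-t^2/\sigma^2}$ for $t\ge\tau$, gives $\sigma^2 e^{-\tau^2/\sigma^2}/\tau=\sigma^2\delta/(4\tau)$ --- which is exactly the term appearing in your displayed sum $\norm{\mu}\le\tfrac{\delta}{2}\tau+\tfrac{\sigma^2\delta}{4\tau}\le\delta\tau$, so nothing downstream changes. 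Second, when $\E\brk*{X\indic{\norm{X}>\tau}}=0$ you get $q=0$ and $c$ is undefined; handle this degenerate case by simply taking $\bar X=Y$, which is then already mean zero. Third, you silently (and correctly) repair two defects of the statement itself: the ``$=$'' in property 3 must be read as ``$\le$'', and $X$ must be interpreted as the mean-zero noise $\gradientOracle{x}-\grad f\prn{x}$ of \Cref{ass:sub-guassian-noise} (matching the use in \Cref{coro:subGuassian-noise}), since otherwise properties 1 and 2 are incompatible; both normalizations deserve an explicit sentence at the start of the proof rather than a remark in passing.
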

\section{Experimental details}\label{app:experimental-details}

\subsection{\UDoG step sizes}
In the experiments, we use the following step sizes for \UDoG
\begin{align*}
	\eta_{x,t} = \frac{\rbar[t]}{\sqrt{ \max\crl*{ \Q[t-1], \M[t] } }} 
	~~\text{and}~~ %
	\eta_{y,t} = \frac{\rbar[t]}{\sqrt{ \max\crl*{ \Q[t], \M[t] } }},
\end{align*}
with $\rbar[t]$, $\Q[t]$, and $\M[t]$ as defined in \Cref{sec:preliminaries}. 
This step size is similar to the choice in \cref{eq: step size option 2}, which enjoys proven stability in the noiseless case, except we replace the logarithmic factor in the denominator with $1$; preliminary experiments indicated $1$ was the smallest value for which the algorithm was stable in practice. This difference between practical and theoretical algorithms is analogous to the difference between \DoG and its theoretically stable variant \TDoG~\citep{ivgi2023dog}. However, we maintain the maximization with $\M[t]$ in the denominator, mainly in order to ensure that $\eta_{x,t}$ and $\eta_{y,t}$ are not too large early in the training.
As with \DoG, the additional step size adjustments necessary for the stochastic setting (given in \cref{eq: step size option 3}) do not appear to be useful in practical settings. 

\subsection{\AcceleGrad-\DoG (\ADoG)}\label{app:experiments-accelegrad-dog}
While $\UDoG$ enjoys strong theoretical guarantees, it requires an extra-gradient computation at each step, which can be expensive in practice.
To address this, we propose an alternative algorithm, $\ADoG$, which combines \AcceleGrad~\citep{levy2018online} and $\DoG$.
To complete the combination we set $\alpha_t$ in the same way as it is calculated in \UDoG (\cref{alg: general unixgrad dog}).
$\ADoG$ is a simple algorithm that does not require an extra-gradient computation at each step and is presented in Algorithm~\ref{alg: accelegrad dog}.
While we do not provide theoretical guarantees for $\ADoG$, our experiments demonstrate its efficacy in practice. The main challenge in proving guarantees for \ADoG appears to lie in deriving a suboptimality bound akin to \Cref{prop:noiseless-subopt}, whose proof strongly leverages \UDoG's extra-gradient structure.

\begin{algorithm2e}[h] 
	\setstretch{1.1}
	\caption{\AcceleGrad-\DoG (\ADoG)}
	\label{alg: accelegrad dog}
	\LinesNumbered
	\DontPrintSemicolon
	\Input{
		Initialization $z^{(0)} \in \kset$, positive constant $\reps$ and number of iterations $T$.
	}
	
	Set $y_0=x_0=z_0$ and 
	$ \rbar[0] = \reps $\;
	\For{$t = 0, 1,, \ldots, T-1$ }{
		$\mathrlap{\alpha_t} \phantom{x_{t+1}} = \sum_{k=0}^{t} \rbar[k] / \rbar[t]$\;
		$\mathrlap{g_t} \phantom{x_{t+1}} \pick \gradientOracle{x_{t+1}}$\;
		$\mathrlap{\eta_t} \phantom{x_{t+1}} = \frac{\bar{r}_t}{\sqrt{\sum_{k=0}^{t} \alpha_k^2 \norm{g_k}^2 }}$\;
		$x_{t+1} = \frac{\alpha_t}{\sum_{k=0}^{t} \alpha_k} z_t + \prn*{1-\frac{\alpha_t}{\sum_{k=0}^{t} \alpha_k}}y_t$\;
		$y_{t+1} = x_{t+1} - \eta_t g_t$\;
		$z_{t+1} = \Pi_\kset\prn*{ z_t - \alpha_t \eta_t g_t }$\;
		$\rbar[t+1] = \max\crl*{ \rbar[t], \norm{ z_{t+1} - z_0 } }$
	}
	\Return $x_T$ \qquad \Comment{returning $y_T$ gives similar results in practice}
\end{algorithm2e}

\subsection{Convex experiments}\label{app-subsec:convex} 

The bulk of our experiments focus on smooth stochastic convex optimization problems, matching our theoretical assumptions.

\paragraph{Multiclass logistic regression.} We experiment with multi-class logistic regression on multiple tasks from the VTAB benchmark and the LIBSVM~\citep{libsvm} suite (a full list is given in \Cref{app-subsec:implementation}).
For VTAB tasks we use features obtained from a pretrained ViT-B/32 \citep{dosovitskiy2021image} model (i.e., perform linear probes), and for LIBSVM tasks we use apply logistic regression directly on the features provided.
\Cref{fig:convex-svhn,fig:convex-cifar100,fig:convex-dmlab,fig:convex-resisc45,fig:convex-sun397,fig:convex-clevr-distance,fig:convex-covertype,fig:convex-pendigits} show a view of the results for different datasets analogous to \Cref{fig:main-results}. \Cref{fig:convex-svhn-curves,fig:convex-cifar100-curves,fig:convex-dmlab-curves,fig:convex-resisc45-curves,fig:convex-sun397-curves,fig:convex-clevr-distance-curves,fig:convex-covertype-curves,fig:convex-pendigits-curves} give a complementary view by providing training curves at different batch sizes. 
As discussed in \Cref{sec:experiments}, we find that both $\UDoG$ and $\ADoG$ are competitive with well-tuned accelerated SGD (ASGD) and often significantly outperform $\DoG$ and tuned SGD. This is especially true for the training loss (for which our theory directly holds) and at large batch sizes, with $\ADoG$ outperforming $\UDoG$ in most cases, as both algorithms take advantage of the reduced variance in the gradient estimates to scale effectively with the batch size, as the theory suggests. 
In most experiments $\ADoG$ attain and tuned ASGD attain superior convergence rate in terms of test accuracy as well as train loss; the only exception is CIFAR-100 (\Cref{fig:convex-cifar100,fig:convex-cifar100-curves}, bottom rows) where the test accuracy does not closely track the train loss.

\paragraph*{Least-squares.} We modify the loss on a subset of the previous experiments to least squares, learned over a one-hot encoding of the features. 
We use features obtained from a pretrained ViT-B/32, similar to what we used for the multiclass logistic regression.
We find that our algorithms perform well in this setting as well. In comparison, while SGD and ASGD can perform well when tuned correctly, they become more sensitive to the choice of step size and momentum, performing poorly when not properly tuned and sometimes diverging completely. Similar to the other experiments, the results are given in \Cref{fig:ls-svhn,fig:ls-svhn-curves,fig:ls-cifar100,fig:ls-cifar100-curves}.

\paragraph*{Noiseless quadratic experiments.} As a final experiment, we compare the performance of the different algorithms on the quadratic function $f(x) = \sum_{i=1}^n \prn*{\frac{i}{2n} x_i^2 +x_i }$ with $n=10^4$. The results agree with the theoretical analysis, with all algorithms reaching the optimal solution or very close to it, barring GD and AGD with excessively high momentum and learning rate. Results are depicted in \Cref{fig:quadratic}.

\subsection{Non-convex experiments}\label{app-subsec:non-convex} 
While we mainly focus on demonstrating the effectiveness of $\UDoG$ and $\ADoG$ in settings that match our theoretical analysis, we also perform preliminary experimentation in practical scenarios, namely training neural networks on datasets of moderate scales.
In particular,  we train a ResNet-50 \citep{he2015deep} from scratch on a subset of the VTAB benchmark (\Cref{fig:resnet50-svhn,fig:resnet50-sun397,fig:resnet50-dmlab,fig:resnet50-resisc45,fig:resnet50-clevr-distance}). 
Additionally, we repeat two experiments from \citep{ivgi2023dog}: fine-tuning a CLIP model~\citep{radford2021learning} on ImageNet (\Cref{fig:clip-imagenet}), and training a WideResnet-28-10 \citep{zagoruyko2016wide} model from scratch on CIFAR-10 (\Cref{fig:wrn28-cifar10}).
We observe that $\UDoG$ often fails to converge to competitive results, while $\ADoG$ is quite competitive with \DoG on the VTAB tasks, but under-performs it for CIFAR-10 and ImageNet fine-tuning, indicating that it is not a yet a viable general-purpose neural network optimizer.

\subsection{Implementation details}\label{app-subsec:implementation} 

\paragraph*{Environment settings.}
All of our experiments were based on \texttt{PyTorch} \citep{paszke2019pytorch} (version 1.12.0). For $\DoG$ and the implementation of polynomial-decay model averaging \citep{shamir2013stochastic}, we used the \texttt{dog-optimizer} package (version 1.0.3) \citep{ivgi2023dog}. 
For ASGD, we used the native \texttt{PyTorch} SGD\footnote{\url{https://pytorch.org/docs/stable/generated/torch.optim.SGD.html}} with the Nesterov option enabled. 

VTAB experiments were based on the PyTorch Image Models (\texttt{timm}, version0.7.0dev0) repository \citep{Wightman2019timm}, with \texttt{TensorFlow datasets} (version 4.6.0) as a dataset backend \citep{Abadi2015Tensorflow}. LIBSVM \citep{libsvm} experiments were based on the \texttt{libsvmdata} (version 0.4.1) package.

To support the training and analysis of the results, we used \texttt{numpy}~\cite{Harris2020Array}, \texttt{scipy}~\cite{virtanen2020scipy}, \texttt{pandas}~\cite{mckinney2010pandas} and \texttt{scikit-learn}~\cite{pedregosa2011scikit}.

As much as possible, we leveraged existing recipes as provided by \texttt{timm} to train the models. 

\paragraph*{Datasets.}
The subset of datasets used in our VTAB experiments are: \textbf{CIFAR-100} \citep{krizhevsky2009learning}, \textbf{CLEVR-Dist} \citep{johnson2017clevr}, \textbf{DMLab} \citep{beattie2016deepmind},  \textbf{Resisc45} \citep{cheng2017remote}, \textbf{Sun397} \citep{xiao2010sun,xiao2016sun}, and \textbf{SVHN} \citep{netzer2011reading}. From LIBSVM, we used the \textbf{Pendigits} \citep{alpaydin1998pen} and \textbf{Covertype} \citep{blackard1998covertype} datasets, where cover covertype we used the scaled features version (i.e., \texttt{covtype.scale}).  We also experiment with \textbf{CIFAR-10} \citep{krizhevsky2009learning} and \textbf{ImageNet} \citep{deng2009imagenet}. 

\paragraph*{Models.}
The computer vision pre-trained models were accessed via \texttt{timm}.
The strings used to load the models were:  `resnet50',  `vit\textunderscore base\textunderscore patch32\textunderscore 224\textunderscore in21k'.

\paragraph*{Complexity measure.} To fairly compare all algorithms, we measure complexity by the number of batches evaluated, i.e., the number of stochastic gradient queries performed by the algorithm. \UDoG requires two batches per iteration while the rest of the algorithms we consider require only one. We note that the algorithms we compare also have different memory footprints and runtimes per iteration (by constant factors). We focus on the number of batches as our complexity metric since it is most relevant to our theory. Memory and per-iteration runtime optimizations are potentially possible for \UDoG and \ADoG; we leave investigating those to future work.

\paragraph*{ASGD model selection.} 
In the convex optimization experiments, we run (A)SGD over a wide range of momentum and learning rate parameters. For the batch size scaling figures (e.g., the left panels in \Cref{fig:main-results}), we pick the parameters that reach the target metric in the smallest number of batches, providing a conservative upper bound on the performance obtainable with a very carefully tuned algorithm. The learning curve figures adjacent to the batch size scaling figures (e.g., the middle panels in \Cref{fig:main-results}) show the learning curve for the (A)SGD run attaining the best target performance at the batch size indicated. For plots of learning curves at different batch sizes (e.g., \Cref{fig:ls-svhn-curves}), we select the (A)SGD parameters that are the first to reach 95\% of the best metric attained by \ADoG. If no such parameters exist, we take the parameters that reach the best performance within the iteration budget.

\paragraph*{Iterate averaging.} When evaluating test accuracy, we follow \citet{ivgi2023dog} and apply polynomial-decay weight averaging~\cite{shamir2013stochastic} with parameter 8. We did not tune this parameter or comprehensively check how beneficial the averaging is. Nevertheless, a cursory examination of our data suggests that averaging is mostly helpful across the board, but much more so for \DoG and SGD than their accelerated counterparts. This is in line with the theory, which provides guarantees on (essentially) the last iterate of $\UDoG$, but only the averaged iterate of \DoG.

\paragraph*{Learning rate schedule.} We use a constant learning rate schedule for (A)SGD. We do not use a decaying schedule such as cosine decay~\cite{loshchilov2016sgdr} as it would complicate comparing the smallest number of steps required to reach a target metric, since a decaying schedule requires knowing the number of steps in advance. Preliminary experiments indicate that, in the settings we study, cosine decay is not significantly better than a constant schedule combined with iterate averaging.

\paragraph*{Setting $\reps$.} Similarly to \citet{ivgi2023dog} we set $\reps = \gamma \prn*{ 1 + \norm{ x_0 } }$ with $\gamma = 10^{-6}$.
Our theoretical analysis suggests that the particular choice of $\reps$ does not matter as long as it is sufficiently small relative to the distance between the weight initialization x0 and the optimum.

\paragraph*{Weight decay.} We do not use weight decay in most experiments, except for training from scratch on CIFAR-10 (\Cref{fig:wrn28-cifar10}), where we use a weight decay of $5 \cdot 10^{-4}$. For \DoG we decay the parameters toward zero, while for \UDoG and \ADoG we decay the parameters toward the initial point $x_0$. That is, for $\DoG$ we add $5 \cdot 10^{-4} x$ to the stochastic gradient evaluated at $x$, while for \UDoG and \ADoG we add $5 \cdot 10^{-4}(x-x_0)$.

\paragraph*{Gradient accumulation.} Due to GPU memory limitations, in the non-convex experiments, for large batch sizes we divide each batch into smaller sub-batches of size of either 128 or 256 samples.
We calculate the gradient for each sub-batch and average those into a single gradient which we then use to perform a single step. 
When batch normalization is used (that is, for ResNet50), this is not mathematically identical to computing the gradient in one large batch.

\insertfigurepage{convex_vtab+svhn_vit_base_patch32_224_in21k_perc=0_9_BS=4096.pdf}{\convexcaption{SVHN}{90}}{fig:convex-svhn}{convex_vtab+svhn_vit_base_patch32_224_in21k_perc=0_95_training_curves.pdf}{\convexcaptioncurves{SVHN}{95}}{fig:convex-svhn-curves}

\insertfigurepage{convex_vtab+cifar100_vit_base_patch32_224_in21k_perc=-0_055,_0_8-_BS=4096.pdf}{\convexcaption{CIFAR-100}{90}}{fig:convex-cifar100}{convex_vtab+cifar100_vit_base_patch32_224_in21k_perc=0_95_training_curves.pdf}{\convexcaptioncurves{CIFAR-100}{95}}{fig:convex-cifar100-curves}

\insertfigurepage{convex_vtab+dmlab_vit_base_patch32_224_in21k_perc=-1_15,_0_451-_BS=4096.pdf}{\convexcaption{DMLab}{90}}{fig:convex-dmlab}{convex_vtab+dmlab_vit_base_patch32_224_in21k_perc=0_95_training_curves.pdf}{\convexcaptioncurves{DMLab}{95}}{fig:convex-dmlab-curves}

\insertfigurepage{convex_vtab+resisc45_vit_base_patch32_224_in21k_perc=-0_01,_0_875-_BS=4096.pdf}{\convexcaption{Resisc45}{90}}{fig:convex-resisc45}{convex_vtab+resisc45_vit_base_patch32_224_in21k_perc=0_95_training_curves.pdf}{\convexcaptioncurves{Resisc45}{95}}{fig:convex-resisc45-curves}

\insertfigurepage{convex_vtab+sun397_vit_base_patch32_224_in21k_perc=-0_1,_0_703-_BS=4096.pdf}{\convexcaption{Sun397}{90}}{fig:convex-sun397}{convex_vtab+sun397_vit_base_patch32_224_in21k_perc=0_95_training_curves.pdf}{\convexcaptioncurves{Sun397}{95}}{fig:convex-sun397-curves}

\insertfigurepage{convex_vtab+clevr_distance_vit_base_patch32_224_in21k_perc=-1_155,_0_465-_BS=4096.pdf}{\convexcaption{CLEVR-Dist}{85}}{fig:convex-clevr-distance}{convex_vtab+dmlab_vit_base_patch32_224_in21k_perc=0_95_training_curves.pdf}{\convexcaptioncurves{CLEVR-Dist}{95}}{fig:convex-clevr-distance-curves}

\insertfigurepage{convex_libsvm+covtype.multiclass_scale_linear_perc=-0_765,_0_7-_BS=4096.pdf}{\libsvmcaption{LIBSVM/CovertypeScale}{90}}{fig:convex-covertype}{convex_libsvm+covtype.multiclass_scale_linear_perc=0_95_training_curves.pdf}{\libsvmcaptioncurves{LIBSVM/CovertypeScale}{95}}{fig:convex-covertype-curves}

\insertfigurepage{convex_libsvm+pendigits_linear_perc=-0_15,_0_85-_BS=4096.pdf}{\libsvmcaption{LIBSVM/Pendigits}{85}}{fig:convex-pendigits}{convex_libsvm+pendigits_linear_perc=0_95_training_curves.pdf}{\libsvmcaptioncurves{LIBSVM/Pendigits}{95} As most algorithms here fail to converge at reasonable rate, we use significantly lower targets to choose hyper-parameters.}{fig:convex-pendigits-curves}

\insertfigurepage{LS_vtab+svhn_vit_base_patch32_224_in21k_perc=-0_062,_0_633-_BS=4096.pdf}{\lscaption{SVHN}{90} This is the same as \Cref{fig:main-results}.}{fig:ls-svhn}{LS_vtab+svhn_vit_base_patch32_224_in21k_perc=0_95_training_curves.pdf}{\lscaptioncurves{SVHN}{95}}{fig:ls-svhn-curves}

\insertfigurepage{LS_vtab+cifar100_vit_base_patch32_224_in21k_perc=-0_0052,_0_85-_BS=4096.pdf}{\lscaption{CIFAR-100}{90}}{fig:ls-cifar100}{LS_vtab+cifar100_vit_base_patch32_224_in21k_perc=0_95_training_curves.pdf}{\lscaptioncurves{CIFAR-100}{95}}{fig:ls-cifar100-curves}

\insertfigure{quadratic_target=200.pdf}{Training a model on a noiseless quadratic problem. At larger base learning rates, all AGD variants diverge while $\DoG$ variants remain stable, and $\UDoG$ and $\ADoG$ perform especially well.}{fig:quadratic}

\insertfigure{regular_vtab+svhn_resnet50_perc=None_training_curves.pdf}{\nonconvexcaptioncurves{ResNet50}{SVHN}}{fig:resnet50-svhn}

\insertfigure{regular_vtab+sun397_resnet50_perc=None_training_curves.pdf}{\nonconvexcaptioncurves{ResNet50}{Sun397}}{fig:resnet50-sun397}

\insertfigure{regular_vtab+dmlab_resnet50_perc=None_training_curves.pdf}{\nonconvexcaptioncurves{ResNet50}{DMLab}}{fig:resnet50-dmlab}

\insertfigure{regular_vtab+resisc45_resnet50_perc=None_training_curves.pdf}{\nonconvexcaptioncurves{ResNet50}{Resisc45}}{fig:resnet50-resisc45}

\insertfigure{regular_vtab+clevr_distance_resnet50_perc=None_training_curves.pdf}{\nonconvexcaptioncurves{ResNet50}{CLEVR-Dist}}{fig:resnet50-clevr-distance}

\insertfigure{regular_wds+imagenet_clip+vit_b32_perc=None_training_curves.pdf}{Fine-tuning a Clip-ViT-B/32 model on ImageNet, at different batch sizes. Top: Loss vs. step training curve for different batch sizes. Bottom: Test accuracy vs. step curve for averaged iterates at varied batch sizes.}{fig:clip-imagenet}

\insertfigure{regular_tfds+cifar10_my+wrn2810_perc=None_training_curves.pdf}{Training a Wide-ResNet-28-10 model on CIFAR-10 from scratch, at different batch sizes. Top: Loss vs. step training curve for different batch sizes. Bottom: Test accuracy vs. step curve for averaged iterates at varied batch sizes.}{fig:wrn28-cifar10} 
\end{document}